\newcommand{\cmark}{\textcolor{green!80!black}{\ding{51}}}
\newcommand{\xmark}{\textcolor{red}{\ding{55}}}
\theoremstyle{plain}
\newtheorem{theorem}{Theorem}[section]
\newtheorem{proposition}[theorem]{Proposition}
\newtheorem{lemma}[theorem]{Lemma}
\theoremstyle{definition}
\newtheorem{assumption}[theorem]{Assumption}
\theoremstyle{remark}
\newtheorem{game}[theorem]{Game}
\newcommand{\nosemic}{\renewcommand{\@endalgocfline}{\relax}}% Drop semi-colon ;
\newcommand{\dosemic}{\renewcommand{\@endalgocfline}{\algocf@endline}}% Reinstate semi-colon ;
\let\oldnl\nl% Store \nl in \oldnl
\newcommand{\nonl}{\renewcommand{\nl}{\let\nl\oldnl}}% Remove line number for one line
\begin{document}

\title{{FedAL}: Black-Box Federated Knowledge Distillation Enabled by Adversarial Learning}

\author{Pengchao Han, Xingyan Shi, Jianwei Huang,~\IEEEmembership{Fellow,~IEEE}
        % <-this % stops a space
\thanks{
Pengchao Han is with the School of Information Engineering, Guangdong University of Technology, Guangzhou 510006, China 
(email: hanpengchao@gdut.edu.cn).

Xingyan Shi is with the School of Data Science, The Chinese University of Hong Kong, Shenzhen, China (email: 120010030@link.cuhk.edu.cn).

Jianwei Huang is with the School of Science and Engineering, Shenzhen Institute of Artificial Intelligence and Robotics for Society, Shenzhen Key Laboratory of Crowd Intelligence Empowered Low-Carbon Energy Network, and CSIJRI Joint Research Centre on Smart Energy Storage, The Chinese University of Hong Kong, Shenzhen, Guangdong, 518172, P.R. China (corresponding author, e-mail: jianweihuang@cuhk.edu.cn).

}
\thanks{This work is supported by the National Natural Science Foundation of China (Project 62271434), Shenzhen Science and Technology Innovation Program (Project JCYJ20210324120011032), Guangdong Basic and Applied Basic Research Foundation (Projects 2021B1515120008, 2022A1515110056), Shenzhen Key Lab of Crowd Intelligence Empowered Low-Carbon Energy Network (No. ZDSYS20220606100601002), Shenzhen Stability Science Program 2023, and the Shenzhen Institute of Artificial Intelligence and Robotics for Society.
}
}
% The paper headers
% \markboth{Journal of \LaTeX\ Class Files,~Vol.~14, No.~8, August~2021}%
% {Shell \MakeLowercase{\textit{et al.}}: A Sample Article Using IEEEtran.cls for IEEE Journals}

% \IEEEpubid{0000--0000/00\$00.00~\copyright~2021 IEEE}
% Remember, if you use this you must call \IEEEpubidadjcol in the second
% column for its text to clear the IEEEpubid mark.

\maketitle

\begin{abstract}
Knowledge distillation (KD) can enable collaborative learning among distributed clients that have different model architectures and do not share their local data and model parameters with others. 
Each client updates its local model using the average model output/feature of all client models as the target, known as federated KD.
However, existing federated KD methods often do not perform well when clients' local models are trained with heterogeneous local datasets. 
In this paper, we propose Federated knowledge distillation enabled by Adversarial Learning (\texttt{FedAL}) to address the data heterogeneity among clients. First, to alleviate the local model output divergence across clients caused by data heterogeneity, the server acts as a discriminator to guide clients' local model training to achieve consensus model outputs among clients through a min-max game between clients and the discriminator. 
Moreover, catastrophic forgetting may happen during the clients' local training and global knowledge transfer due to clients' heterogeneous local data. Towards this challenge, we design the less-forgetting regularization for both local training and global knowledge transfer to guarantee clients' ability to transfer/learn knowledge to/from others. 
Experimental results show that \texttt{FedAL} and its variants achieve higher accuracy than other federated KD baselines. 
\end{abstract}

\begin{IEEEkeywords}
Knowledge distillation, black-box model, heterogeneity, adversarial learning, less-forgetting.
\end{IEEEkeywords}

\section{Introduction}\label{sec:intro}
\IEEEPARstart{A}{}single client, such as an organization or a company, usually has a limited amount of training data, which may not be sufficient for training high-quality machine learning models. To address this problem, \textit{collaborative learning} among multiple clients can be useful for producing models with better accuracy. However, there are several challenges. First, clients have their own local datasets and they may not be willing to share their raw data with others due to privacy concerns~\cite{kairouz2019advances}.
Second, clients on the edge of wireless networks often have different computation and memory resources, resulting in clients with heterogeneous models that have different architectures and parameters. Clients may not want to reveal their model architectures to other clients 
to further prevent privacy leakage~\cite{li2019fedmd, pmlr-v97-hoang19a}.  We refer to a client's model with unknown architecture to other clients as a black-box model. 
Typical collaborative learning methods, such as federated learning (FL) \cite{mcmahan2017communication, sun2021pain,sun2022profit,jiao2024provably}, involve frequent transmission of clients' local model parameters to a central server for global model aggregation. This leads to expensive communication overheads, particularly for large neural network models with millions or even billions of parameters.
Therefore, an important research direction is to enable collaborative learning among multiple clients with heterogeneous black-box models without sharing either the local data or the model architectures of clients.

\begin{figure}
	\centering
	\includegraphics[width=1\columnwidth]{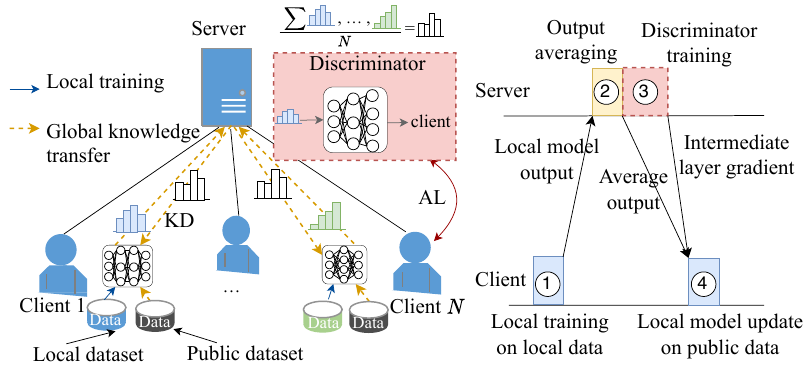}
	\caption{Framework of \texttt{FedAL}, where \texttt{FedAL} includes the components in the red dashed region that \texttt{FedMD} does not have.}
	\label{fig:fdal}
\end{figure}

Knowledge distillation (KD)~\cite{ba2014deep} can be used to transfer knowledge among multiple heterogeneous models, known as \textit{federated KD}.
Vanilla KD is an emerging technique to achieve knowledge transfer from a teacher model to a student model. In federated KD, each client can act as the teacher and share knowledge with other clients without exposing its own model or data. 
Most federated KD algorithms use the average of clients' model outputs or intermediate features, i.e., \textit{output/feature averaging}, to guide the clients' local model update. The dimension of a model output or intermediate feature is remarkably smaller than the model parameter. Thus, federated KD is more communication-efficient than FL.
A state-of-the-art method of federated KD is federated model distillation (\texttt{FedMD})~\cite{li2019fedmd}. 
In \texttt{FedMD}, as shown in Figure~\ref{fig:fdal}, each client trains its local model using the local dataset. To achieve global knowledge transfer, all clients share a small unlabeled public dataset\footnote{There are several ways of obtaining the public dataset. For example, clients can construct the public dataset by sharing a small and non-sensitive part of their data. Another way is for the server to purchase or generate a public dataset and share with clients~\cite{zhang2021fedzkt}.}. Specifically, each client uses the public data as the input and transmits the corresponding local model output to the server. Then the server broadcasts the average output of all local models to clients and each client updates its own model so that its output gets close to the average output.

\begin{figure}
	\centering
 \includegraphics[width=0.88\columnwidth]{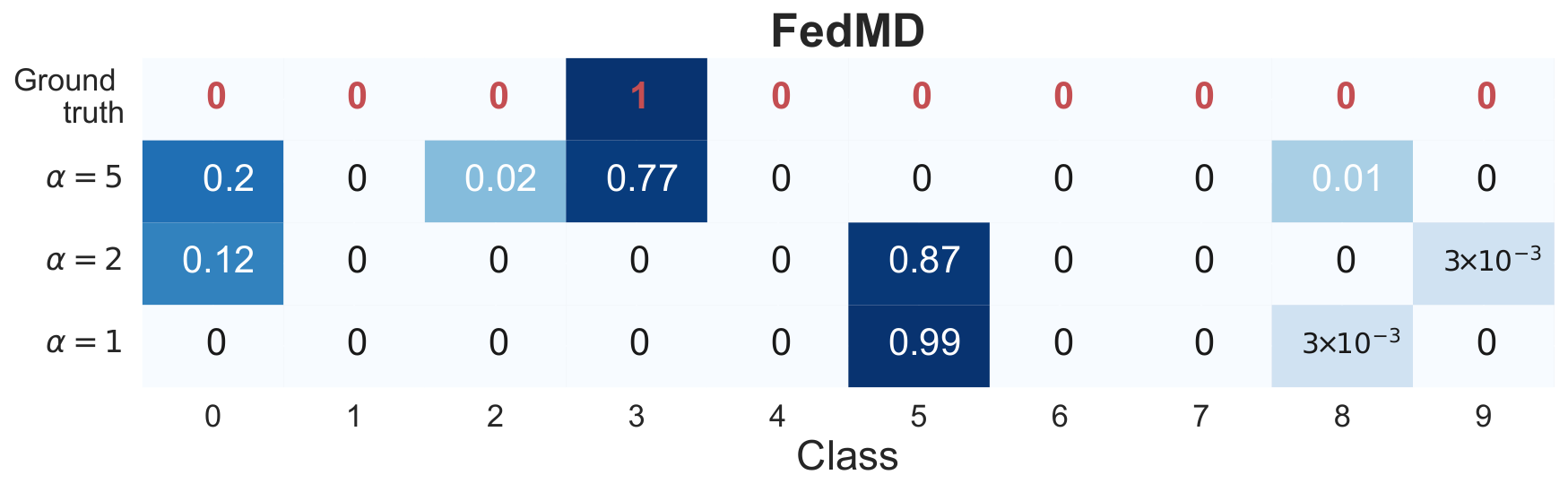}\vspace{3pt}
 \includegraphics[width=0.88\columnwidth]{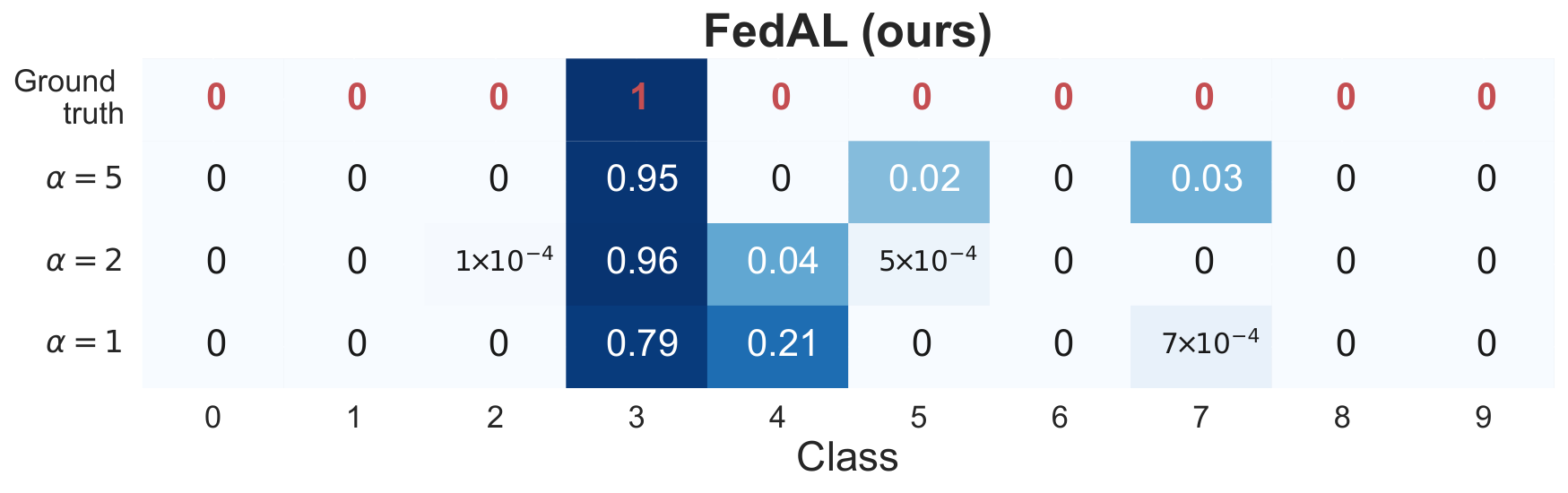}
	\caption{
An example of average model output probability distribution of the final models trained using \texttt{FedMD} and \texttt{FedAL} for data samples labeled by ``3'' of the SVHN dataset. 
 The value of $\alpha$ captures the heterogeneity of the clients' data, and the entry values in the matrix capture the predicted probabilities of the input data for different classes. 
  We distribute the whole SVHN dataset to 20 clients as their local datasets according to Dirichlet distribution with parameter $\alpha$. Smaller $\alpha$ indicates larger heterogeneity of local data across clients.} 
	\label{fig:fdal_com}
\end{figure}

A problem with \texttt{FedMD} is that it may give a low accuracy when the clients' local models are trained with heterogeneous local datasets, as shown in Figure~\ref{fig:fdal_com}, where $\alpha$ is the Dirichlet parameter of the data distribution across clients, and a smaller $\alpha$ indicates a larger data heterogeneity. When the local datasets across clients are non-IID with a relatively small value of $\alpha$, the average of all local model outputs can divert a lot from the ground-truth label (see Figure~\ref{fig:fdal_com} (top)).
In \texttt{FedMD}, clients update their local models using the average output as the only target. In this way, although the local models at different clients may give consistent prediction after the KD procedure, this prediction may be incorrect, because the average model output that is used as the training target can be quite different from the ground-truth label.
We also note that the ground-truth label is unknown to the system as we consider unlabeled public data, so we cannot use the ground-truth label as the target directly. This leads to the following question that we study in this paper.

\textbf{Key question}: \emph{How to guide clients to update their local models, so that all clients' models will consistently converge to the (unknown) ground-truth label of the input data?}

There are several challenges to answering this question. 
Firstly, since the public dataset is unlabeled, the server cannot determine the quality of the local model outputs from each client when taking the public data as input. As a result, it becomes challenging to develop a better global knowledge transfer mechanism than the simple output averaging method.
Second, when a client updates its local model using the local dataset, it may forget the global knowledge learned from other clients. Namely, the accuracy of a client's model on the global datasets decreases after local training.  
At the same time, when a client learns the knowledge of other clients, it may forget the knowledge learned from its own local dataset, therefore limiting the amount of useful knowledge it can share with other clients in the next round. 

To address these challenges, we propose an algorithm for Federated knowledge distillation enabled by Adversarial Learning (\texttt{FedAL}), as shown in Figure~\ref{fig:fdal}. First, to alleviate the local model output divergence across clients caused by data heterogeneity,
\texttt{FedAL} includes a discriminator on the server and considers each client as a generator. The adversarial learning (AL) between clients and the discriminator provides guidance for clients' local model training beyond output averaging, pushing all clients to produce the same ``correct'' model output. 
Specifically, using the received local model outputs of clients with the public dataset as input, 
the discriminator discriminates which client a model output comes from. 
According to the  discrimination results, clients update their own models to make the outputs of different clients as indistinguishable as possible.
To achieve this, the discriminator transmits the gradients of the discrimination over a client's local model output to the corresponding client. 
In addition, to address the catastrophic forgetting issue in federated KD, we apply less-forgetting (LF) regularization to both clients' local training and global knowledge transfer stages to guarantee clients' ability to transfer/learn knowledge to/from others.

The main contributions of this paper are as follows.
\begin{itemize}
	\item \textit{\texttt{FedAL} framework design}: We address the challenging problem of efficient knowledge transfer in federated KD where clients' local models are trained using heterogeneous local data. We overcome this challenge by proposing \texttt{FedAL}, which applies a discriminator to facilitate all clients producing the same high-quality model output and LF regularization to promote efficient knowledge transfer among clients. 
	\item \textit{Min-max game formulation}: To overcome the challenge of local model output divergence among clients, which cannot be effectively addressed through simple output averaging,  we formulate the AL between clients and the discriminator as a min-max game. At the equilibrium, all clients behave consistently on the same input data. 
 \item \textit{LF regularization design:} We design the LF regularization in both local training and global knowledge transfer stages of each client. When updating the local model of a client in local training (global knowledge transfer, respectively), we restrict the distance between the updated model parameter and the original model parameter obtained by the last round of global knowledge transfer (local training, respectively) of the client.
	\item \textit{Theoretical analysis}: We demonstrate the convergence error of \texttt{FedAL} for non-convex loss functions and establish its capability for achieving good generalization.
	\item \textit{Experiments}: Experimental results show that \texttt{FedAL} and its variants achieve higher accuracy than other federated KD baselines. 
\end{itemize}

\section{Related Work} 

The most popular method for training a global machine learning model among distributed clients without sharing their local data is FL~\cite{mcmahan2017communication, kairouz2019advances}. 
Most FL algorithms, e.g., federated averaging (\texttt{FedAvg})~\cite{mcmahan2017communication}, are based on distributed stochastic gradient descent (SGD). They assume that all clients have identical model architectures. 

An advantage of performing KD in the federated setting is that it supports the case where different clients may have their own models with different architectures. 
The average of clients' model outputs or encoded data features has been used in various works on federated KD~\cite{li2019fedmd,  8904164, jeong2018communication,tan2021fedproto,  lin2020ensemble}, which includes \texttt{FedMD}~\cite{li2019fedmd} and related methods.
Other works on federated KD consider generating the public data for global KD~\cite{zhu2021data, 9879661, he2020group} and using multiple history local models~\cite{yao2021local} to improve the model accuracy.
As mentioned earlier, the above works~\cite{li2019fedmd, 8904164,jeong2018communication, tan2021fedproto, lin2020ensemble, zhu2021data, 9879661, he2020group} do not work well when the local training data of clients are heterogeneous. 
There are some works utilizing KD for different purposes in FL, such as to achieve privacy  preservation~\cite{wu2021fedkd,li2020practical}, personalization~\cite{cho2021personalized,ozkara2021quped}, and one-shot FL~\cite{heinbaughdata}, which are orthogonal to our work.

 To demonstrate the key differences between our proposed method and existing works, we compared the proposed \texttt{FedAL} with the related methods in the literature, as presented in Table \ref{tab:related-work}. Specifically, we evaluated whether these methods support heterogeneous model architectures and black-box models among clients and whether they provide theoretical analysis on generalization bound and convergence.

\begin{table*}[t] \centering
    \caption{Comparison with Related Federated KD Works. }
    \label{tab:related-work}
        \begin{center}
            \begin{small}
 \begin{tabular}{m{1.5cm}m{2.3cm}m{1.2cm}m{1.2cm}m{1.2cm}m{1.2cm}m{5cm}}
		\hline
  Ref. & Method & Hetero-geneous models & Black-box  models & Genera-lization  bound & Conver-gence & Comparison with \texttt{FedAL}\tabularnewline
  \hline
  \cite{li2019fedmd} & \texttt{FedMD} & \cmark & \cmark & \xmark & \xmark & \cmark\tabularnewline
  \rowcolor{gray!15} \cite{sattler2020communication,hu2021mhat} &  \texttt{CFD},\texttt{MHAT} & \cmark& \cmark& \xmark & \xmark & \cmark \tabularnewline
   \cite{huang2022learn} & \texttt{FCCL}& \cmark& \cmark& \xmark & \xmark& \cmark\tabularnewline
   \rowcolor{gray!15} \cite{huang2023generalizable}  & \texttt{FCCL+} & \cmark& \cmark& \xmark & \xmark& \cmark\tabularnewline
 \cite{jeong2018communication, 8904164} & \texttt{FD}/%\texttt{FedDistill}/ 
 \texttt{HFD} & \cmark & \cmark & \xmark & \xmark & Low accuracy results\tabularnewline
 \rowcolor{gray!15}\cite{tan2021fedproto} & \texttt{FedProto} & \cmark & \cmark & \xmark & \cmark & Only comparable for models with same intermediate feature dimension\tabularnewline
 \cite{he2020group} &  \texttt{FedGKT} & \cmark  & \cmark & \xmark  &  \xmark & Not comparable because it exposes true labels to the server \tabularnewline
 \rowcolor{gray!15}\cite{cho2021personalized} & \texttt{PerFed-CKT} & \cmark & \cmark & \cmark & \cmark & Not comparable because it has a different objective\tabularnewline
 \cite{zhang2021fedzkt} &  \texttt{FedZKT} & \cmark  & \xmark  & \xmark  & \xmark  & Not comparable \tabularnewline
  \rowcolor{gray!15}\cite{mcmahan2017communication} & \texttt{FedAvg} &  \xmark  & \xmark & \xmark & \cmark & Not comparable\tabularnewline
\cite{lin2020ensemble} & \texttt{FedDF} & \cmark & \xmark & \cmark & \xmark & Not comparable, but compared with a new AL extension in the paper\tabularnewline
\rowcolor{gray!15}\cite{zhu2021data} & \texttt{FedGen} & \xmark & \xmark & \cmark & \xmark & Not comparable\tabularnewline
\cite{zhu2021data} & \texttt{FedDistill+} & \cmark & \xmark & \xmark & \xmark & Not comparable\tabularnewline
\rowcolor{gray!15}\cite{9879661} & \texttt{FedFTG} & \xmark & \xmark & \xmark & \xmark & Not comparable\tabularnewline
\cite{yao2021local} &  \texttt{FedGKD} & \xmark  &  \xmark & \xmark  & \cmark  & Not comparable \tabularnewline
\rowcolor{gray!15}\cite{heinbaughdata} & \texttt{FedCVAE} & N/A & N/A & \xmark & \xmark & Not comparable because it trains a separate global model\tabularnewline
 \textbf{Ours} & \texttt{FedAL} & \cmark & \cmark & \cmark & \cmark & -\tabularnewline
\tabularnewline
\hline
	\end{tabular}
\end{small}
\end{center}
\end{table*}

For \texttt{FedCVAE} \cite{heinbaughdata}, the first two criteria are ``N/A'' because \texttt{FedCVAE} only uses the local models for creating data, where the resulting data is then used for training a different ``global'' model.
We would like to emphasize the following:
\begin{itemize}
    \item Only  \texttt{FedMD} \cite{li2019fedmd}, \texttt{FD}  \cite{jeong2018communication} / \texttt{HFD}  \cite{8904164}, \texttt{FedProto} \cite{tan2021fedproto}, \texttt{FedGKT}  \cite{he2020group}, \texttt{PerFed-CKT}  \cite{cho2021personalized}, \texttt{CFD}~\cite{sattler2020communication}, \texttt{MHAT}~\cite{hu2021mhat}, \texttt{FCCL}~\cite{huang2022learn}, and \texttt{FCCL+}~\cite{huang2023generalizable} have the same problem settings as our paper, supporting black-box models with different model architectures across clients. The black-box models imply that each client's model architecture and parameters are not shared with the server or other clients. Only the model outputs are shared. Different from \texttt{FedMD}, \texttt{CFD}~\cite{sattler2020communication} and \texttt{MHAT}~\cite{hu2021mhat} utilize unlabeled public data and local model outputs from clients to train a global model on the server. This global model output is then transmitted back to clients for KD. \texttt{FCCL}\cite{huang2022learn} and \texttt{FCCL+} \cite{huang2023generalizable} incorporate less-forgetting regularizations to enhance model performance.
    \texttt{PerFed-CKT} focuses on a different problem from our paper by stressing client personalization.  \texttt{FedGKT} shares the ground-truth labels of client's local data to the server, risking privacy leaking of users. Thus, \texttt{PerFed-CKT} and \texttt{FedGKT} are not comparable to our method.
    \item Other methods including \texttt{FedZKT}  \cite{zhang2021fedzkt}, \texttt{FedAvg} \cite{mcmahan2017communication}, \texttt{FedDF} \cite{lin2020ensemble}, \texttt{FedGen} \cite{zhu2021data}, \texttt{FedDistill+} \cite{zhu2021data}, \texttt{FedFTG} \cite{9879661}, and \texttt{FedGKD} \cite{yao2021local} expose local model parameters of clients to the server to achieve global model aggregation, whereas we do not share the clients' model architectures or parameters. \texttt{FedCVAE} \cite{heinbaughdata} obtains only one global model for all clients, whereas we aim to train multiple client models with heterogeneous architectures. Thus, these methods are for different problem setups and they are not comparable to our \texttt{FedAL} method, so we should not be expected to compare with them. However, to show the compatibility of our idea of AL in the parameter-sharing scenario, we have taken \texttt{FedDF} as an example to extend our \texttt{FedAL} method to support parameter averaging among homogeneous models. 
\end{itemize}

\begin{table*}[t] \centering
   \caption{Comparison with Related Works for LF Regularization. }
    \label{tab:related-work-lf}
        \begin{center}
            \begin{small}
 \begin{tabular}{>{\centering}m{1cm}>{\centering}m{1.5cm}>{\centering}m{1.5cm}>{\centering}m{6.5cm}>{\centering}m{2.cm}>{\centering}m{3cm}}
\hline 
Ref.  & Method & Black-box models & Loss function & Operating dataset & Refereed model\tabularnewline
\hline 
\cite{lee2021preservation} &\texttt{FedLSD} & \xmark & KD between local and global model output & Local dataset & Global model \tabularnewline
\rowcolor{gray!15}\cite{tang2023fedrad}& \texttt{FedRAD}  & \xmark & Relational KD between local and global model outputs& Local dataset & Local and global model\tabularnewline
\cite{he2022learning} &\texttt{FedSSD}  & \xmark &  MSE of class-weighted local and global model outputs & Local and public datasets& Global model\tabularnewline
\rowcolor{gray!15}\cite{xu2022acceleration} & \texttt{FedReg} & \xmark & MSE between local and global model outputs & Selected private dataset & Global model \tabularnewline
 \cite{wang2024dfrd} & \texttt{DFRD}  & \xmark & KD between local and global model outputs & Generated dataset & Global model\tabularnewline 
\rowcolor{gray!15}\cite{aljahdali2024flashback} &\texttt{Flashback} & \xmark & KD between local and global model outputs & Local and public datasets & Local and global models\tabularnewline 
\cite{kim2024federated}& - & \xmark & KD between local and global model outputs and different local model outputs & Local dataset & Local and global models  \tabularnewline
\rowcolor{gray!15}\cite{liu2023adaptive} &\texttt{FedBR} & \xmark & KD between different intermediate features & Local dataset & Local and global models \tabularnewline 
\cite{9964434} & \texttt{pFedSD}  &  \xmark & KD between personalized and current local model outputs & Local  dataset & Personalized local model \tabularnewline 
\rowcolor{gray!15}\cite{shoham1910overcoming} & \texttt{FedCurv} & \cmark & FIM weighted local model update & Local dataset & Local models\tabularnewline
\cite{huang2022learn} &\texttt{FCCL} & \cmark & KD between local and global model outputs, KD between current and the optimal local model outputs & Local dataset & Optimal and instantaneous local models   \tabularnewline 
\rowcolor{gray!15}\cite{huang2023generalizable}  &\texttt{FCCL+}  & \cmark & KD between local and global model outputs and feature similarity & Local and public dataset & Local models\tabularnewline
\hline 
	\end{tabular}
\end{small}
\end{center}
\end{table*}

Some works on standard (non-federated) KD have found AL to be effective for improving student model accuracy, for the scenario of one teacher and one student model sharing the same training dataset~\cite{xu2017training, liu2020learning, gao2020private, wang2018kdgan, wang2018adversarial}. 
However, these works use the original labeled training data of the teacher model. As a result, the methods do not apply to the federated setting with unlabeled public dataset and multiple clients.

LF regularization effectively reduces catastrophic forgetting in FL under data heterogeneity. Table \ref{tab:related-work-lf} summarizes related works, identifying models that help prevent forgetting. Most research on LF regularization, cited in references \cite{lee2021preservation,tang2023fedrad,he2022learning,xu2022acceleration,wang2024dfrd,aljahdali2024flashback,kim2024federated,liu2023adaptive,9964434}, follows a model-sharing approach where clients share models with the server or each other. This technique focuses on minimizing the distance between local and global model outputs or intermediate features to preserve global knowledge. This distance is typically measured using KL divergence (\cite{lee2021preservation,tang2023fedrad,wang2024dfrd,aljahdali2024flashback,kim2024federated,liu2023adaptive,9964434}) or mean squared error (MSE) (\cite{he2022learning,xu2022acceleration}).
In implementing LF regularization, clients generally use their local datasets (\cite{lee2021preservation,tang2023fedrad,he2022learning,aljahdali2024flashback,kim2024federated,liu2023adaptive,9964434}). Some methods enhance effectiveness by sampling local datasets (\cite{xu2022acceleration}), generating new data (\cite{wang2024dfrd}), or using public datasets (\cite{he2022learning,aljahdali2024flashback}). The global model is crucial in extracting and retaining global knowledge. Additionally, to prevent the loss of local knowledge during global model aggregation, some algorithms integrate clients' local models into the process (\cite{tang2023fedrad,aljahdali2024flashback,kim2024federated,liu2023adaptive,9964434}).

LF regularization for black-box models often involves using the Fisher Information Matrix (FIM) or the aggregated global model output for regularization. In \texttt{FedCurv} \cite{shoham1910overcoming}, clients share the FIM to reduce forgetting, although this requires substantial computational resources. \texttt{FCCL}\cite{huang2022learn} pre-trains an optimal local model on each client to aid LF regularization. \texttt{FCCL+} \cite{huang2023generalizable}  further exchanges intermediate features and local model outputs among clients to combat forgetting. However, these approaches do not address the issue of clients forgetting local dataset knowledge during global knowledge transfer.

Our work differs from the literature by stressing the heterogeneous local training data and the black-box models among clients. To the best of our knowledge, this is the first work that improves the performance of knowledge transfer among black-box heterogeneous models with different architectures and are trained using heterogeneous local datasets.

\section{Problem Formulation and Preliminaries}
\label{sec:priliminaries}
\begin{table}[ht] \centering
    \caption{Summary of notation} \label{tab:notation}
	\vskip 0.15in
        \begin{center}
            \begin{footnotesize}
	\begin{tabular}{p{2cm}p{5.5cm}}
		\hline
		Symbol & Description\tabularnewline
	\hline
 \rowcolor{gray!15}\multicolumn{2}{l}{\textbf{Clients}:} \tabularnewline
 		$\mathcal{N}$ & Set of clients \tabularnewline
 \rowcolor{gray!15} \multicolumn{2}{l}{\textbf{Datasets}:} \tabularnewline
$\mathcal{D}_n$ & The local dataset of client $n$\tabularnewline
  $\mathcal{D}$ & The global dataset\tabularnewline
  $\mathcal{P}$& Public dataset \tabularnewline
  $\mathbf{x}$ & An input data sample \tabularnewline
  $y$ & The ground-truth label of $\mathbf{x}$ \tabularnewline
  $\mathcal{K}$ & Set of data labels \tabularnewline
  \rowcolor{gray!15}\multicolumn{2}{l}{\textbf{Models}:} \tabularnewline
  $\boldsymbol{\theta}_n$ & Local model parameter of client $n$\tabularnewline
  $\boldsymbol{\Theta}$ & Set of model parameters of all clients\tabularnewline
$\boldsymbol{\Theta}_{-n}$& Set of model parameters of all clients except for client $n$\tabularnewline
$\boldsymbol{w}$ & Parameter of discriminator \tabularnewline
$f_n\left(\boldsymbol{x},\boldsymbol{\theta}_n\right)$ & Client $n$'s local model output before softmax \tabularnewline 
   $p_n\left(\boldsymbol{x},\boldsymbol{\theta}_n\right)$ & Model output probability distribution of client $n$ \tabularnewline
$f_{n,k}\left(\boldsymbol{x},\boldsymbol{\theta}_n\right)$ & The $k$th element of $f_{n}\left(\boldsymbol{x},\boldsymbol{\theta}_n\right)$\tabularnewline
$p_{n,k}\left(\boldsymbol{x},\boldsymbol{\theta}_n\right)$ & The $k$th element of $p_{n}\left(\boldsymbol{x},\boldsymbol{\theta}_n\right)$\tabularnewline
$\bar{f}_{-n}\left(\boldsymbol{x}\right)$ & Average model output of clients except for client $n$ \tabularnewline
  $\bar{p}_{-n}\left(\boldsymbol{x}\right)$ & The softmax of   $\bar{f}_{-n}\left(\boldsymbol{x}\right)$ \tabularnewline
  $h\left(p_n\left(\boldsymbol{x},\boldsymbol{\theta}_n\right),\boldsymbol{w}\right)$ & The discriminator model\tabularnewline
\rowcolor{gray!15}\multicolumn{2}{l}{\textbf{Loss functions}:} \tabularnewline
  $\ell\left(y,f_n\left(\boldsymbol{x},\boldsymbol{\theta}_n\right)\right)$ &  Loss function of client $n$ for data sample $\left\{\boldsymbol{x},y\right\}$\tabularnewline
  $\mathscr{K}\left(\cdot\right)$ & KL divergence \tabularnewline
  $\mathscr{E}\left(\cdot\right)$ & Cross-entropy loss function \tabularnewline
  $r^{\mathrm{loc}}\left(\boldsymbol{x},\boldsymbol{\theta}_n\right)$ & The LF regularization of client $n$ for local training \tabularnewline
   $r^{\mathrm{glo}}\left(\boldsymbol{x},{\boldsymbol{\theta}}_n\right)$ & The LF regularization of client $n$ for global knowledge transfer \tabularnewline
   $V_n^{\mathrm{loc}}\left(\boldsymbol{x},y,\boldsymbol{\theta}_n\right)$ & The loss function of client $n$ for local training \tabularnewline
    $V_n^{\mathrm{glo}}\left(\boldsymbol{x},\boldsymbol{\theta}_n,\boldsymbol{w}\right)$ & The loss function of client $n$ for global knowledge transfer \tabularnewline
    $\mathcal{V}_n\left(\boldsymbol{x},y,\boldsymbol{\theta}_n,\boldsymbol{w}\right)$& Overall objective of client $n$ \tabularnewline
    $U_n\left(\boldsymbol{x}, \boldsymbol{\theta}_n,\boldsymbol{w}\right)$ & Objective of the AL between client $n$ and the discriminator \tabularnewline
  $\mathcal{U}\left(\boldsymbol{x}, \boldsymbol{\Theta},\boldsymbol{w}\right)$ & Objective of the discriminator \tabularnewline
\rowcolor{gray!15}\multicolumn{2}{l}{\textbf{Others}:} \tabularnewline
$\psi\left(\cdot\right)$ & Softmax function \tabularnewline
  $E$ & Temperature in $p_n\left(\boldsymbol{x},\boldsymbol{\theta}_n\right)$ \tabularnewline
  $\tau$ & Number of consecutive iterations in each local training or global knowledge transfer stage \tabularnewline

		\hline
	\end{tabular}
\end{footnotesize}
\end{center}
\vskip -0.1in
\end{table}

In this section, we first formulate the collaborative learning problem and then introduce the framework of federated KD. We summarize the notation used in this paper as Table~\ref{tab:notation}.
\subsection{Problem Formulation}
Given a set $\mathcal{N}=\left\{1,\ldots,N\right\}$ of clients, the client $ n\in\mathcal{N}$ has a local dataset $\mathcal{D}_n$. 
The global dataset is $\mathcal{D}:=\cup_{n=1}^N \mathcal{D}_n$. 
A sample $\left\{\boldsymbol{x},y\right\}\in \mathcal{D}$ consists of an input data $\boldsymbol{x}$ and its corresponding label $y$ 
in the set $\mathcal{K}=\{1, \ldots, K\}$ of classes.
Each client $n$ trains its local model $f_n$ characterized by the parameter $\boldsymbol{\theta}_n$ using its local dataset $\mathcal{D}_n$. The output of the model before the softmax is $f_n\left(\boldsymbol{x},\boldsymbol{\theta}_n\right)\in\mathbb{R}^K$. The objective of each client $n$ is to minimize the expected prediction error, i.e., loss function, $\ell\left(y,f_n\left(\boldsymbol{x},\boldsymbol{\theta}_n\right)\right)$. 
The global loss function of all clients is $\frac{1}{N}\sum_{n=1}^N\ell\left(y,f_n\left(\boldsymbol{x},\boldsymbol{\theta}_n\right)\right)$. 

In collaborative learning, all clients collaborate to solve the following 
problem without sharing their local datasets, to minimize the expected loss function of all local models on the union of all clients' datasets:
\begin{align}\label{eq:obj_global}
\min_{\boldsymbol{\theta}_n, \forall n}\,\mathbb{E}_{\left\{\boldsymbol{x},y\right\}\in\mathcal{D}} \left[\frac{1}{N}\sum_{n=1}^N\ell\left(y,f_n\left(\boldsymbol{x},\boldsymbol{\theta}_n\right)\right)\right].
\end{align}

We can solve \eqref{eq:obj_global} using FL when all clients have an identical model architecture. 
In this case, the server can directly average the model parameters $\left\{\boldsymbol{\theta}_1,\ldots,\boldsymbol{\theta}_N\right\}$ submitted by clients (e.g., using the \texttt{FedAvg} algorithm~\cite{mcmahan2017communication}) iteratively over multiple rounds to achieve global model aggregation.
However, when clients have different local model architectures, FL is no longer applicable. Federated KD overcomes this problem. 

\subsection{Federated KD}
In general, federated KD solves problem \eqref{eq:obj_global} by alternating between local training at clients and global knowledge transfer. We explain this procedure based on the \texttt{FedMD} algorithm \cite{li2019fedmd} as follows. In local training, each client $n$ trains its local model using its local dataset.
The global knowledge transfer contains two steps, including output averaging (\textit{not} parameter averaging) on the server and KD on each client, aiming to stimulate clients to produce similar model output probability distributions that are close to the ground-truth label on the common unlabeled public dataset $\mathcal{P}$\footnote{For the unlabeled public dataset, we do not know the true label $y$ for each data sample $\boldsymbol{x}\in\mathcal{P}$. Labeling data samples is costly in practice. Thus, obtaining the unlabeled public dataset would be easier than a labeled training dataset.}. 
Clients can contribute part of their local private data to create a public dataset. Many existing federated KD efforts use this unlabeled public dataset to enhance knowledge transfer among clients, as seen in references \cite{li2019fedmd, sattler2020communication,cho2021personalized,lin2020ensemble,hu2021mhat,huang2022learn,huang2023generalizable}. Some methods generate this public dataset using clients' local models, a process distinct from our approach \cite{zhang2021fedzkt,9879661,zhu2021data}. Our FedAL algorithm is fully compatible with these data-generating techniques.

In the process of output averaging in \texttt{FedMD}, each client $n$ first sends its model output to the central server, taking the same data samples in 
$\mathcal{P}$ as input. Then, the server aggregates the model outputs of clients and broadcasts the average output to all the clients. 
Upon receiving the average model output, each client $n$ carries out KD to achieve knowledge transfer.
Specifically, let $\psi\left(f_n\left(\boldsymbol{x},\boldsymbol{\theta}_n\right),E\right)$ denote the softmax function
with temperature $E$.
To enable efficient KD,  a temperature of $E\geq 1$ is set to clearly differentiate the similarity among classes of the input data. The output probability distribution for client $n$ is
\begin{align}% 
p_n\!\left(\boldsymbol{x},\boldsymbol{\theta}_n\right):=\psi\left(f_n\left(\boldsymbol{x},\boldsymbol{\theta}_n\right),E\right).
\end{align}
Let $f_{n,k}\left(\boldsymbol{x},\boldsymbol{\theta}_n\right)$ be the $k$th element of $f_{n}\left(\boldsymbol{x},\boldsymbol{\theta}_n\right)$. The $k$th element of $p_n\!\left(\boldsymbol{x},\boldsymbol{\theta}_n\right)$ is
\begin{align}
	p_{n,k}\!\left(\boldsymbol{x},\boldsymbol{\theta}_n\right)=\frac{\exp({f_{n,k}\left(\boldsymbol{x},\boldsymbol{\theta}_n\right)/E})}{\sum_{k'=1}^K \exp({{f_{n,k'}\left(\boldsymbol{x},\boldsymbol{\theta}_n\right)}/E})}.
\end{align}
Then each client can learn from others by minimizing the KL divergence, denoted by $\mathscr{K}\left(\cdot\right)$, of its output probability distribution to the average of others~\cite{Distilling}:
\begin{align} 
&\min_{\boldsymbol{\theta}_n}\,
\mathbb{E}_{\boldsymbol{x}\in\mathcal{P}} \left[\mathscr{K}\left(\bar{p}_{-n}\left(\boldsymbol{x}\right),p_n\left(\boldsymbol{x},\boldsymbol{\theta}_n\right)\right)\right], \label{eq:kl}
\end{align}
where we have 
\begin{equation}
    \bar{f}_{-n}\left(\boldsymbol{x}\right):=\frac{1}{N-1}\sum_{m=1,m\neq n}^N  f_m\left(\boldsymbol{x},\boldsymbol{\theta}_m\right),
\end{equation}
\begin{equation}
    \bar{p}_{-n}\!\left(\boldsymbol{x}\right)\!:=\!\psi\!\left(\bar{f}_{-n},E\right).
\end{equation}

We can observe from \eqref{eq:kl} that \texttt{FedMD} forces the model output of each client close to the average output of all clients. As mentioned earlier, the average output may not always be a good approximation for the true label of the input data,
especially when clients' local models are trained with heterogeneous data. This motivates us to propose the \texttt{FedAL} algorithm, to be explained next.

\section{Proposed Algorithm: \texttt{FedAL}}
In this section, we first present the framework of \texttt{FedAL}. Then, we formulate the min-max game between clients and the discriminator and propose the LF regularization. 
Finally, we formulate the objectives for the server and clients and present the \texttt{FedAL} algorithm.

\subsection{\texttt{FedAL} Framework}
We propose \texttt{FedAL}, which includes a novel discriminator at the server, as shown in Figure~\ref{fig:fdal}. Our design objective for \texttt{FedAL} is as follows:
\begin{itemize}
    \item Design the interaction between clients and the server based on AL to guide clients' local model updates. Specifically, the discriminator maximizes the model output discrepancies among clients to facilitate all clients to minimize their discrepancies in an AL fashion. We model such interactions as a min-max game. 
    \item Improve clients' ability to learn from others and transfer knowledge to others. We achieve this using LF regularization in both local training and global knowledge transfer.
\end{itemize}

\subsection{Min-Max Game Formulation} 
\label{sec:system_model}

In the interaction between clients and the discriminator, the discriminator works adversarially to stimulate all clients to mimic each other. Specifically, the discriminator distinguishes the output probability distributions from clients. We denote the discriminator as $h\left(p_n\left(\boldsymbol{x},\boldsymbol{\theta}_n\right), \boldsymbol{w}\right): \mathbb{R}^K\rightarrow \mathbb{R}^{N}$ with parameter $\boldsymbol{w}$.  Let $\mathscr{E}\left(\cdot\right)$ be the cross-entropy loss function\footnote{We use the cross-entropy loss function to evaluate the correctness of the discriminator in the formulation and experiments throughout this paper. However, the same methodology can also be used for other losses, e.g., mean square error.}. 
The discriminator's objective toward each client $n$ is to maximize its correctness of classifying client $n$'s output into the client index $n$, i.e.,
\begin{align}
&U_n\left(\boldsymbol{x}, \boldsymbol{\theta}_n,\boldsymbol{w}\right):=-
\mathscr{E}\left(n, h\left(p_n\left(\boldsymbol{x},\boldsymbol{\theta}_n\right),\boldsymbol{w}\right)\right). \label{eq:min-max-obj}
\end{align}
Clients have an opposite objective (to minimize the discriminator's correctness of classification), by producing model outputs that are as indistinguishable as possible.  For convenience, we let $\boldsymbol{\Theta}:=\left\{\boldsymbol{\theta}_n, \forall n \in \mathcal{N}\right\}$ 
be the set of model parameters of all clients.
Combining  all clients, the discriminator's overall objective is to maximize
\begin{equation}
\mathcal{U}\,\left(\boldsymbol{x},\boldsymbol{\Theta},\boldsymbol{w}\right):=\frac{1}{N}\sum_{n \in \mathcal{N}}U_n\!\left(\boldsymbol{x},\boldsymbol{\theta}_n,\boldsymbol{w}\right).
\end{equation}

Thus, we formulate the min-max game between clients and the discriminator as follows.
\begin{game} [Min-Max Game] \label{game:1}$\\$\vspace{-10pt}
	\begin{itemize}
		\item \textit{Players:}  The server (discriminator) and $N$ clients.
		\item \textit{Strategies:} Each client $n$ trains its local model to update $\boldsymbol{\theta}_n$. The server trains a discriminator to update $\boldsymbol{w}$.
		\item \textit{Payoffs:} Each client $n$ seeks to minimize $U_n\left(\boldsymbol{x}, \boldsymbol{\theta}_n,\boldsymbol{w}\right)$ in \eqref{eq:min-max-obj}; the server seeks to maximize $\mathcal{U}\,\left(\boldsymbol{x},\boldsymbol{\Theta},\boldsymbol{w}\right)$. 
	\end{itemize}
\end{game}

We first analyze the best responses of the discriminator (Lemma~\ref{lem:dis_best}) and each client (Lemma~\ref{lem:client_best}), based on which we present the equilibrium condition for the game. Throughout the paper, we put the proofs of all lemmas and theorems in our supplementary material \cite{han2023fedal}.
\begin{lemma}
	[Discriminator's best response] \label{lem:dis_best}
	Given fixed client models $\boldsymbol{\Theta}$, the discriminator's  best response choice $\boldsymbol{w}^{*}(\boldsymbol{\Theta})$ that maximizes \eqref{eq:min-max-obj} for any input sample $\boldsymbol{x}$ satisfies
	\begin{equation} \label{eq:best_dis}
	\left[h\left(p_n\left(\boldsymbol{x},\boldsymbol{\theta}_n\right),\boldsymbol{w}^{*}(\boldsymbol{\Theta})\right)\right]_n = \frac{p_n\left(\boldsymbol{x},\boldsymbol{\theta}_n\right)}{\sum_{m=1}^{N} p_m\left(\boldsymbol{x},\boldsymbol{\theta}_m\right)}, 
	\end{equation} 
	for all $ n \in \mathcal{N}$, where $\left[h\left(\cdot\right)\right]_n$ indicates the $n$th element of~$h$.
\end{lemma}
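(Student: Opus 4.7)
The plan is to follow a Goodfellow-style analysis of the optimal discriminator, adapted to the $N$-class setting at hand. First I would expand the server's objective using the definition of cross-entropy with a one-hot target at class $n$: since $\mathscr{E}(n, h(p_n(\boldsymbol{x}, \boldsymbol{\theta}_n), \boldsymbol{w})) = -\log [h(p_n(\boldsymbol{x}, \boldsymbol{\theta}_n), \boldsymbol{w})]_n$, the aggregated payoff to be maximized reduces to
\begin{equation*}
\mathcal{U}(\boldsymbol{x}, \boldsymbol{\Theta}, \boldsymbol{w}) = \frac{1}{N}\sum_{n=1}^N \log [h(p_n(\boldsymbol{x}, \boldsymbol{\theta}_n), \boldsymbol{w})]_n.
\end{equation*}
Fixing $\boldsymbol{x}$ and $\boldsymbol{\Theta}$, I would treat the discriminator as a sufficiently expressive family (a standard universal-approximator assumption) and view its outputs as free variables, subject only to the simplex constraint that $[h(\cdot,\boldsymbol{w})]_n \geq 0$ with $\sum_{n=1}^N [h(\cdot,\boldsymbol{w})]_n = 1$ at every input.

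Next I would carry out a pointwise maximization using a Lagrange multiplier for the simplex constraint, exactly analogous to the derivation of the optimal GAN discriminator. Writing the Lagrangian $\sum_n p_n \log [h]_n - \lambda(\sum_n [h]_n - 1)$ and zeroing its derivative with respect to $[h]_n$ yields $[h^*]_n = p_n/\lambda$; enforcing the simplex constraint then pins down $\lambda = \sum_{m=1}^N p_m(\boldsymbol{x},\boldsymbol{\theta}_m)$, producing the stated identity $[h(p_n(\boldsymbol{x},\boldsymbol{\theta}_n), \boldsymbol{w}^*(\boldsymbol{\Theta}))]_n = p_n(\boldsymbol{x},\boldsymbol{\theta}_n)/\sum_{m=1}^N p_m(\boldsymbol{x},\boldsymbol{\theta}_m)$. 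Strict concavity of $\log$ on the open simplex guarantees that this stationary point is the unique global maximizer.

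The main obstacle I anticipate is the coupling across the $N$ inputs $\{p_n(\boldsymbol{x},\boldsymbol{\theta}_n)\}_{n=1}^N$: since $h(\cdot, \boldsymbol{w})$ is a single function that has to serve all $N$ queries, the clean pointwise maximization requires that the discriminator be rich enough to realize different outputs at different query points without interference, which is precisely what the universal-approximator assumption buys us. A secondary subtlety is the correct reading of the claimed identity as a coordinate-wise ratio on the probability simplex, mirroring the GAN derivation where the optimum is expressed as a ratio of likelihoods under a uniform prior over the $N$ clients; once this interpretation is fixed, the Lagrangian calculation above closes the argument.
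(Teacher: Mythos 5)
Your proof is correct and takes essentially the same route as the paper: the paper's proof likewise rewrites the cross-entropy payoff as $\sum_{n=1}^{N} p_n\left(\boldsymbol{\theta}_n\right) \log \left(\left[h\left(p_n\left(\boldsymbol{\theta}_n\right),\boldsymbol{w}\right)\right]_n\right)$ and then invokes Proposition 1 of \cite{ghosh2018multi} (maximizing $\sum_i a_i \log y_i$ over the simplex), whose proof is exactly your Lagrange-multiplier calculation with the same normalization constraint. The only difference is that you re-derive that proposition inline (and make explicit the sufficient-expressiveness assumption that the paper leaves implicit) rather than citing it.
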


Denote $\boldsymbol{\Theta}_{-n}:=\left\{\boldsymbol{\theta}_m, \forall m\in\mathcal{N}, m\neq n\right\}$. We can obtain the best response of a client $n$ as Lemma~\ref{lem:client_best}. 
\begin{lemma}
	[Client's best response] \label{lem:client_best}
	Given fixed client models $\boldsymbol{\Theta}_{-n}$ and the discriminator $\boldsymbol{w}$, the best response choice of the client $n$ (i.e., $\boldsymbol{\theta}_n^\ast\left( \boldsymbol{\Theta}_{-n} , \boldsymbol{w}\right)$) that minimizes \eqref{eq:min-max-obj} for any input sample $\boldsymbol{x}$ satisfies \begin{equation}p_n\left(\boldsymbol{x},\boldsymbol{\theta}_n^\ast\left( \boldsymbol{\Theta}_{-n} , \boldsymbol{w}\right)\right)={\sum_{n=1}^{N} p_n\left(\boldsymbol{x},\boldsymbol{\theta}_n\right)}/{N}.
 \end{equation}
\end{lemma}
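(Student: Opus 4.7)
The plan is to mirror the global-optimum argument for GANs: substitute the discriminator's best response from Lemma~\ref{lem:dis_best} into the client's payoff $U_n$, reduce the resulting expression to a Kullback--Leibler divergence, and invoke the non-negativity of KL to identify the minimizer.

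Starting from $U_n = -\mathscr{E}(n, h(p_n, \boldsymbol{w}))$, I would expand the cross-entropy as the expected negative log-likelihood of the discriminator's $n$-th output under samples drawn from $p_n$. Substituting $[h(\cdot,\boldsymbol{w}^{\ast}(\boldsymbol{\Theta}))]_n = p_n/\sum_m p_m$ from Lemma~\ref{lem:dis_best}, writing $\sum_{m=1}^N p_m = N\bar{p}$ with $\bar{p}(\boldsymbol{x}) := (1/N)\sum_{m=1}^N p_m(\boldsymbol{x},\boldsymbol{\theta}_m)$, and separating out the constant $\log N$ should yield
\begin{equation}
U_n = \mathscr{K}\bigl(p_n(\boldsymbol{x},\boldsymbol{\theta}_n), \bar{p}(\boldsymbol{x})\bigr) - \log N.
\end{equation}

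Because the KL divergence $\mathscr{K}(\cdot,\cdot)$ is non-negative and vanishes if and only if its two arguments coincide, the unique minimizer of $U_n$ over $\boldsymbol{\theta}_n$ (holding $\boldsymbol{\Theta}_{-n}$ fixed) satisfies $p_n(\boldsymbol{x},\boldsymbol{\theta}_n^{\ast}) = \bar{p}(\boldsymbol{x}) = (1/N)\sum_{m=1}^N p_m(\boldsymbol{x},\boldsymbol{\theta}_m)$, which is exactly the claim of the lemma. Moreover, the resulting value $U_n = -\log N$ carries the natural game-theoretic interpretation that, at the client's best response, the optimal discriminator becomes maximally uncertain ($[h^{\ast}]_n = 1/N$), matching the equilibrium intuition already established for Lemma~\ref{lem:dis_best}.

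The main obstacle I anticipate is the algebraic reduction in the second step: one must interpret the cross-entropy $\mathscr{E}(n, h)$ as an expectation over classes sampled from $p_n$ so that, after substituting $[h^{\ast}]_n = p_n/(N\bar{p})$, the expression telescopes into $\sum_k p_{n,k}\log(p_{n,k}/\bar{p}_k) - \log N$. This self-referential substitution (where $p_n$ appears both inside and outside the logarithm through the optimal-discriminator formula) must be handled carefully; once the KL form is obtained, however, the conclusion is immediate, and no separate simplex-constraint analysis is needed because $\bar{p}$ is automatically a valid probability vector.
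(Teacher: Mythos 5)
Your proposal is correct and follows essentially the same route as the paper's proof: substitute the optimal discriminator from Lemma~\ref{lem:dis_best} into $U_n$, reduce the payoff to $\mathscr{K}\left(p_n\left(\boldsymbol{x},\boldsymbol{\theta}_n\right),\bar{p}\left(\boldsymbol{x}\right)\right)$ plus a constant in $\log N$, and invoke non-negativity of the KL divergence (including the self-referential fixed point through $\bar{p}$) to conclude $p_n=\bar{p}$. The paper's display additionally carries the terms $\mathscr{K}\left(p_m,\bar{p}\right)$ for $m\neq n$ and a $-N\log N$ constant, but the minimization step is identical, so your single-term reduction is only a cosmetic (arguably cleaner) difference.
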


Lemma~\ref{lem:client_best} says that a client's best response is to produce a probability distribution identical to all clients' average output probability. 
Given Lemmas \ref{lem:dis_best} and \ref{lem:client_best}, we have the following theorem.
\begin{theorem}
	[Equilibrium]\label{thm:equilibrium}
	The unique equilibrium of the min-max Game~\ref{game:1} satisfies $p_n\left(\boldsymbol{x},\boldsymbol{\theta}_n^\ast\right)=p_m\left(\boldsymbol{x},\boldsymbol{\theta}_m^\ast\right), \forall n,m \in \mathcal{N}$ for all $\boldsymbol{x}$.
\end{theorem}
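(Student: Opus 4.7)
The plan is to combine the two best-response characterizations already established by Lemmas~\ref{lem:dis_best} and \ref{lem:client_best}. At any equilibrium $(\boldsymbol{\Theta}^*, \boldsymbol{w}^*)$ of Game~\ref{game:1}, every player is simultaneously playing a best response to the others' strategies. Applying Lemma~\ref{lem:client_best} once for each $n \in \mathcal{N}$ therefore yields the coupled fixed-point system
\begin{equation*}
p_n\!\left(\boldsymbol{x}, \boldsymbol{\theta}_n^*\right) = \frac{1}{N} \sum_{m=1}^N p_m\!\left(\boldsymbol{x}, \boldsymbol{\theta}_m^*\right), \qquad \forall n \in \mathcal{N},\ \forall \boldsymbol{x}.
\end{equation*}

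The decisive step is then a one-line observation: the right-hand side above is independent of the index $n$, so every left-hand side must equal the same common value. This immediately gives $p_n(\boldsymbol{x}, \boldsymbol{\theta}_n^*) = p_m(\boldsymbol{x}, \boldsymbol{\theta}_m^*)$ for all $n, m \in \mathcal{N}$ and all $\boldsymbol{x}$, which is the claimed equilibrium identity. Conversely, I need to check that this common-output profile is genuinely an equilibrium rather than just a necessary condition: substituting $p_n^* \equiv p^*$ into Lemma~\ref{lem:dis_best} pins the discriminator's best-response output elementwise to $p^*(\boldsymbol{x})/(N\, p^*(\boldsymbol{x})) = 1/N$, a uniform distribution over clients. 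At this uniform posterior, the cross-entropy in \eqref{eq:min-max-obj} is maximally uninformative about any client's identity, and any unilateral deviation by a client that moves $p_n$ away from the common value would let the discriminator increase its classification accuracy on that client; no such deviation is profitable, confirming equilibrium.

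For uniqueness I would argue by contradiction: any other equilibrium must still satisfy the simultaneous client best-response system above, and the single-line argument already shows that every solution of that system has $p_n^* = p_m^*$ for all $n, m$. The only genuine subtlety I anticipate is conceptual rather than technical: the theorem must be interpreted at the level of output probabilities $p_n(\boldsymbol{x}, \boldsymbol{\theta}_n^*)$ rather than at the level of the parameters $\boldsymbol{\theta}_n^*$ or $\boldsymbol{w}^*$ themselves, because heterogeneous black-box networks can realize the same output distribution with many different parameter settings and the discriminator parameter is pinned down only through its induced output on $\mathcal{P}$. Under this (natural) reading, the claim follows directly from the two lemmas without any further work.
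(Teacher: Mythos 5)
Your proof is correct and follows essentially the same route as the paper, whose entire argument is the one-line instruction to combine Lemmas~\ref{lem:dis_best} and~\ref{lem:client_best}; your index-independence observation on the client best-response system is exactly the intended combination. Your additional checks --- verifying sufficiency via the discriminator's uniform $1/N$ output and noting that uniqueness must be read at the level of output distributions $p_n(\boldsymbol{x},\boldsymbol{\theta}_n^\ast)$ rather than parameters --- are sound and actually make explicit what the paper leaves implicit.
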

{
\begin{proof}
	We can prove Theorem~\ref{thm:equilibrium} by combining Lemmas~\ref{lem:dis_best} and~\ref{lem:client_best}.
\end{proof}
}
Theorem~\ref{thm:equilibrium} indicates that the discriminator helps minimize the model output discrepancies among clients. Ideally, clients produce identical output probability distributions of local models.

\subsection{Less-Forgetting Regularization}
During the local training, each client $n$ trains its local model to fit the local dataset $\mathcal{D}_n$ and increase the accuracy on $\mathcal{D}_n$. However, this will lead to the forgetting of the global knowledge, i.e., the accuracy of the client' local model on the global dataset $\mathcal{D}$ will decrease, making the whole model training process unstable. To alleviate such a forgetting behavior, we apply LF regularization~\cite{lee2021preservation} to achieve robust model training.

Let ${\boldsymbol{\theta}}_n^{*,0}$ be the local model parameter of client $n$ obtained from the last global knowledge transfer at any round.
The LF objective aims to restrict the updating distance of the model parameter with respect to ${\boldsymbol{\theta}}_n^{*,0}$, i.e., to reduce $r^{\mathrm{loc}}\left(\boldsymbol{x},\boldsymbol{\theta}_n\right)$, where
\begin{equation}
	r^{\mathrm{loc}}\left(\boldsymbol{x},\boldsymbol{\theta}_n\right):= 
 \mathscr{K}\left({p}_{n}\left(\boldsymbol{x},{\boldsymbol{\theta}}^{*,0}_n\right), p_n\left(\boldsymbol{x},\boldsymbol{\theta}_n\right)\right).
\end{equation}

Similarly, during global knowledge transfer, a client model may also forget the knowledge obtained from local training. We therefore extend LF regularization to the global knowledge transfer stage. Specifically, let $\boldsymbol{\theta}_n^{*+\frac{1}{2},0}$ be the local model parameter of client $n$ obtained from the local training. The LF objective in the global knowledge transfer stage aims to reduce the global LF regularization $r^{\mathrm{g lo}}\left(\boldsymbol{\theta}_{n}\right)$, where
\begin{align}
&r^{\mathrm{glo}}\left(\boldsymbol{x},\boldsymbol{\theta}_n\right):=
\mathscr{K}\left({p}_{n}\left(\boldsymbol{x},\boldsymbol{\theta}_{n}^{*+\frac{1}{2},0}\right), p_n\left(\boldsymbol{x},\boldsymbol{\theta}_{n}\right)\right).
\end{align}

\subsection{Objectives}
Based on the above discussion, we are ready to present the objectives of the discriminator (server) and clients.

\textbf{Objective of the discriminator (server).}
The server trains the discriminator during the global knowledge transfer stage with the objective of 
\begin{equation}\label{eq:dis_obj}
    \max_{\boldsymbol{w}}\,\mathbb{E}_{\boldsymbol{x}\in\mathcal{P}} \left[
\mathcal{U}\,\left(\boldsymbol{x},\boldsymbol{\Theta},\boldsymbol{w}\right)\right].
\end{equation}

\textbf{Objective of clients.}
In \texttt{FedAL}, each client conducts local training and global knowledge transfer in each training round. In local training, each client $n$ aims to solve the following optimization problem: 
\begin{align}\label{eq:client_obj_loc}
\!\min_{\boldsymbol{\theta}_n}\,\mathbb{E}_{\boldsymbol{x}\in\mathcal{D}_n}\left[V^{\mathrm{loc}}_n\!\left(\boldsymbol{x},y,\boldsymbol{\theta}_n\right)\right],
\end{align}
where
\begin{align}\label{eq:client_loss_loc}  
V^{\mathrm{loc}}_n\left(\boldsymbol{x},y,\boldsymbol{\theta}_n\right):=\ell\left(y,f_n\left(\boldsymbol{x},\boldsymbol{\theta}_n\right)\right)+r^{\mathrm{loc}}\left(\boldsymbol{x},\boldsymbol{\theta}_n\right).
\end{align}
During this local training stage, each client $n$  trains its local model to minimize the expected loss on the local dataset $\mathcal{D}_n$ while incorporating LF regularization to avoid extensively forgetting the knowledge learned until the end of the previous global knowledge transfer stage.

For global knowledge transfer, we define the following for each client $n$: 
\begin{align}\label{eq:client_loss_glo}  
V^{\mathrm{glo}}_n\left(\boldsymbol{x},{\boldsymbol{\theta}}_{n},\boldsymbol{w}\right) &:=  \mathscr{K}\left(\bar{p}_{-n}\left(\boldsymbol{x}\right), p_n\left(\boldsymbol{x},{\boldsymbol{\theta}}_{n}\right)\right) \nonumber \\
	& \quad\quad  + U_n\left(\boldsymbol{x},{\boldsymbol{\theta}}_{n},\boldsymbol{w}\right)+ r^{\mathrm{glo}}\left(\boldsymbol{x},\boldsymbol{\theta}_n\right).
\end{align}
Function \eqref{eq:client_loss_glo} of client $n$ consists of the KL divergence $\mathscr{K}\left(\cdot\right)$ for KD, the adversarial loss that competes with the discriminator $U_n\left(\cdot\right)$, and the LF regularization $r^{\mathrm{glo}}\left(\cdot\right)$.

Then, each client $n$ aims to solve the following optimization problem in global knowledge transfer:  
\begin{align}\label{eq:client_obj_glo}  
	\min_{\boldsymbol{\theta}_{n}}\,\mathbb{E}_{\boldsymbol{x}\in \mathcal{P}} 
 \left[V^{\mathrm{glo}}_n\left(\boldsymbol{x},{\boldsymbol{\theta}}_{n},\boldsymbol{w}\right)\right].
\end{align}
Note that \eqref{eq:client_obj_loc} and  \eqref{eq:client_obj_glo} are solved in alternating steps on the local dataset $\mathcal{D}_n$ and public dataset $\mathcal{P}$, respectively.

Overall, the objective of each client $n$ is to minimize
\begin{equation}
\mathcal{V}_n\left(\boldsymbol{x},y,\boldsymbol{\theta}_n,\boldsymbol{w}\right):=V_n^{\mathrm{loc}}\left(\boldsymbol{x},y,\boldsymbol{\theta}_n\right)+V_n^{\mathrm{glo}}\left(\boldsymbol{x},\boldsymbol{\theta}_n,\boldsymbol{w}\right).
\end{equation}
\textbf{Remark}: 
We note that \eqref{eq:client_loss_glo} includes both the KL divergence with respect to the average model output and the client's objective in AL, in addition to LF regularization. As discussed in Section~\ref{sec:intro}, we know that only using output averaging, i.e., $\mathscr{K}\left(\bar{p}_{-n}\left(\boldsymbol{x}\right), p_n\left(\boldsymbol{x},{\boldsymbol{\theta}}_{n}\right)\right)$, can lead to incorrect predictions, when the clients’ local models are trained with heterogeneous local datasets (i.e., small $\alpha$) (Figure~\ref{fig:fdal_com}). The reason is that the average model output of clients can be quite different from the ground-truth label in this case. However, if we only use the AL term, i.e., $U_n\left(\boldsymbol{x},{\boldsymbol{\theta}}_{n},\boldsymbol{w}\right)$, in \eqref{eq:client_loss_glo}, we do not know what should be the target output of local models because the public dataset $\mathcal{P}$ is unlabeled. 
\texttt{FedAL} incorporates adversarial learning (AL) between clients and the server (discriminator) in addition to output averaging. In essence, when the local models are very different because they were trained using heterogeneous local datasets, the average output of models may be biased towards some specific class(es). If this happens, it is beneficial to first fine tune the models so that their outputs become more aligned with each other, which is achieved by the AL procedure of \texttt{FedAL}. Intuitively, this alignment removes the bias of local models. After such bias is removed, the average of local model outputs becomes closer to the ground-truth class label. Therefore, \texttt{FedAL} gives better performance than \texttt{FedMD}, as shown by the experiment results in Section \ref{subsec:perm_homo}.

\begin{algorithm}[tb]
\small
    \caption{Federated knowledge distillation enabled by Adversarial Learning (\texttt{FedAL})} 
        \label{alg:fdal} 
 {
    \KwIn{$\eta_{l}^t, \eta_{d}^t, \forall t, \tau, T$ }\label{algline:input}
    \KwOut{$\boldsymbol{\theta}_n, \forall n \in \mathcal{N}$}\label{algline:output}	
    \SetKwFor{EachClient}{each client $n \in \mathcal{N}$:}{}{}
    \SetKwFor{TheServer}{the server:}{}{}

    Initialize $\boldsymbol{w}^{0,0}, {\boldsymbol{\theta}}_n^{0,0}, \forall n \in \mathcal{N}$;
    
    \For{$t = 0, \ldots,  T-1$}
    {
    \vspace{3pt}

    \nonl \textbf{\underline{Local training}} 
    
    \EachClient{}{
     \For{$i=0,\ldots,\tau-1$}
     {
    Randomly sample a mini-batch of data~$\mathcal{D}^{t,i}_n$; \label{algline:local_minibatch}
    
    ${\boldsymbol{\theta}}_{n}^{t,i+1}\!\leftarrow\!{\boldsymbol{\theta}}_n^{t,i}\!-\!\frac{\eta_{l}^t\!}{\left|\mathcal{D}_n^{t,i}\right|}\sum_{\left\{\boldsymbol{x},y\right\}\in\mathcal{D}_n^{t,i}}\!\nabla\!V^{\mathrm{loc}}_n\!\left(\boldsymbol{x},y,{\boldsymbol{\theta}}_n^{t,i}\right)$;\label{algline:local_training}
    }

    ${\boldsymbol{\theta}}_{n}^{t+\frac{1}{2},0} \leftarrow \boldsymbol{\theta}_{n}^{t,\tau}$; \label{algline:local_training_end}
    }
    \vspace{3pt}
    
    \nonl \textbf{\underline{Global knowledge transfer}} 
    
    \For{$i=0,\ldots,\tau-1$}
     {
    Randomly sample a mini-batch of public data~$\mathcal{P}^{t,i}$; \label{algline:global_minibatch}
    
    \EachClient{}{
    Send {$\Big\{f_n\Big(\boldsymbol{x}, {\boldsymbol{\theta}}_{n}^{t+\frac{1}{2},i}\Big), \forall \boldsymbol{x} \in \mathcal{P}^{t,i} \Big\} $} to the server; \label{algline:upload}
    }
    
    \TheServer{}{
    $\boldsymbol{w}^{t,i+1}\!\leftarrow\!\boldsymbol{w}^{t,i}\!+\!\frac{\eta_{d}^t}{\left|\mathcal{P}^{t,i}\right|}\sum_{\boldsymbol{x}\in \mathcal{P}^{t,i}}\nabla  U_n\left(\boldsymbol{x},{\boldsymbol{\theta}}_{n}^{t+\frac{1}{2},i}\!,\boldsymbol{w}^{t,i}\right)$; \label{algline:dis}            

    \For{$\boldsymbol{x}\in \mathcal{P}^{t,i}$}{
        
        $\bar{f}^{t+\frac{1}{2},i}\left(\boldsymbol{x}\right)\!\leftarrow\!{\sum_{n\in\mathcal{N}}\! f_n\!\Big(\!\boldsymbol{x},\!{\boldsymbol{\theta}}_{n}^{t+\frac{1}{2},i}\!\Big)}/{\left|\mathcal{N}\right|}$; \label{algline:output_averaging}
        
        Send $\bar{f}^{t+\frac{1}{2},i}\!\left(\boldsymbol{x}\right),\!\nabla_{f_n}\!U_n\Big(\!\boldsymbol{x},{\boldsymbol{\theta}}_{n}^{t+\frac{1}{2},i},\boldsymbol{w}^{t,i+1}\!\Big)$ to client $n, \forall n \in \mathcal{N}$; \label{algline:download}
        }
    }
    
    \EachClient{}{
    ${\boldsymbol{\theta}}_{n}^{t+\frac{1}{2},i+1} \leftarrow {\boldsymbol{\theta}}_{n}^{t+\frac{1}{2},i} - \frac{\eta_{l}^t}{\left|\mathcal{P}^{t,i}\right|}\sum_{\boldsymbol{x}\in \mathcal{P}^{t,i}}\!\nabla V^{\mathrm{glo}}_n\!\Big(\!\boldsymbol{x},{\boldsymbol{\theta}}_{n}^{t+\frac{1}{2},i},\boldsymbol{w}^{t,i+1}\!\Big) $; \label{algline:client-global}
    }
    }
    ${\boldsymbol{w}}^{t+1,0} \leftarrow {\boldsymbol{w}}^{t,\tau}$;\label{algline:dis_next}
    
    \EachClient{}{
    ${\boldsymbol{\theta}}_{n}^{t+1,0} \leftarrow {\boldsymbol{\theta}}_{n}^{t+\frac{1}{2},\tau}$;\label{algline:client_next}
    }
    
    }
}
 \end{algorithm}

\subsection{\texttt{FedAL} Algorithm} 

\begin{figure}
	\centering
	\includegraphics[width=0.9\columnwidth]{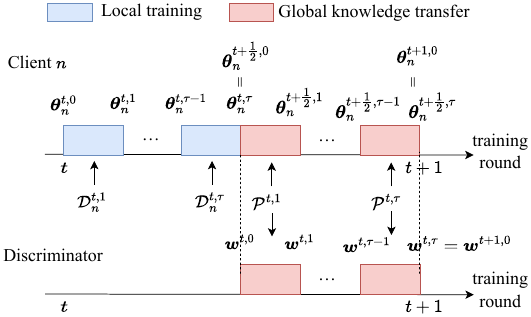}
	\caption{Model parameter updates of \texttt{FedAL} in each training round.}
	\label{fig:para_update}
\end{figure}

We design the \texttt{FedAL} algorithm considering the min-max game (between clients and the discriminator) and the LF regularization to solve the problems \eqref{eq:dis_obj}, \eqref{eq:client_obj_loc}, and \eqref{eq:client_obj_glo}.

Algorithm~\ref{alg:fdal} describes the overall process of \texttt{FedAL}, 
where clients perform local training 
and global knowledge transfer 
iteratively and alternately. Clients and the discriminator update their models with step sizes $\eta_{l}$ and $\eta_{d}$, respectively. 
Figure \ref{fig:para_update} depicts the model parameter update process of client $n$ and the discriminator for one training round of \texttt{FedAl}.
At the beginning of each training round $t$, client $n$'s local model parameter is $\boldsymbol{\theta}_n^{t,0}$. The discriminator's model parameter is $\boldsymbol{w}^{t,0}$. 
There are one or multiple consecutive iterations, denoted by $\tau\geq 1$ in every local training stage and global knowledge transfer stage.

\textbf{Local training.} In each iteration $i$,
each client $n$ randomly samples a mini-batch of local data  $\mathcal{D}^{t,i}_n\subset \mathcal{D}_n$ (line \ref{algline:local_minibatch}) to carry out local training (line~\ref{algline:local_training}). Each client performs local training for $\tau$ iteration  
and updates its model parameter to $\boldsymbol{\theta}_n^{t,\tau}$, which is equal to the initial model parameter for global knowledge transfer, denoted by $\boldsymbol{\theta}_n^{t+\frac{1}{2},0}$ (line \ref{algline:local_training_end}).

\textbf{Global knowledge transfer.} There are also $\tau$ consecutive iterations in each global knowledge transfer stage.  
In each iteration $i$ of the global knowledge transfer in training round $t$, the mini-batch of public data sampled randomly for model training is $\mathcal{P}^{t,i}$ (line \ref{algline:global_minibatch}). 
Note that $\mathcal{P}^{t,i}$ should be consistent among all the clients in $\mathcal{N}$, which can be achieved by specifying the same sampling random seed for clients. The client's model update for minimizing $U_n$ works in a splittable way. To minimize $U_n$, each client $n$ updates its local model parameterized by $\boldsymbol{\theta}_n^{t+1/2,i}$ using the gradient of $U_n$ with respect to $\boldsymbol{\theta}_n^{t+1/2,i}$. According to the chain rule, this gradient can be expressed as the following:
\begin{align}
&\nabla_{\boldsymbol{\theta}_n}U_n\left(\boldsymbol{x},\boldsymbol{\theta}_n^{t+\frac{1}{2},i},\boldsymbol{w}^{t,i+1}\right) \nonumber\\&:=\left[\nabla_{\boldsymbol{\theta}_n^{t+\frac{1}{2},i}}f_n\left(\boldsymbol{x},\boldsymbol{\theta}_n^{t+\frac{1}{2},i}\right)\right]^T\nabla_{f_n}U_n\left(\boldsymbol{x},\boldsymbol{\theta}_n^{t+\frac{1}{2},i},\boldsymbol{w}^{t,i+1}\right),
\end{align}
where $U_n$ is defined in  \eqref{eq:min-max-obj}.
By the definition of $U_n$, the gradient of $U_n$ with respect to $\boldsymbol{\theta}_n^{t+1/2,i}$ depends on client $n$'s model output probability $p_n$, which is the softmax of its model output $f_n$. Thus, in the forward process, each client needs to transmit its local model output on the public dataset $\mathcal{P}^{t,i}$ to the server (line~\ref{algline:upload}).

Upon receiving clients' model outputs, the server first updates the discriminator's parameter from $\boldsymbol{w}^{t,i}$ to $\boldsymbol{w}^{t,i+1}$ according to SGD by feeding the model outputs of clients into the discriminator $h\left(p_n\left(\boldsymbol{x},\boldsymbol{\theta}_n^{t+1/2,i}\right),\boldsymbol{w}^{t,i}\right)$  (line \ref{algline:dis}). The server then calculates $U_n$ based on the discriminator's output and the client index $n$.
To achieve back-propagation of the $U_n$ to clients' local models, the server further calculates the gradient of $U_n$ over the client's model output $\nabla_{f_n}U_n\left(\boldsymbol{x},\boldsymbol{\theta}_n^{t+1/2,i},\boldsymbol{w}^{t,i+1}\right)$ and transmits it along with the  average output $\bar{f}^{t+\frac{1}{2},i}\left(\boldsymbol{x}\right):={\sum_{n\in\mathcal{N}} f_n\left(\boldsymbol{x},{\boldsymbol{\theta}}_n^{t+\frac{1}{2},i}\right)}/{\left|\mathcal{N}\right|}$  to client $n$ (line~\ref{algline:download}). 

After that, each client executes KD using $\bar{f}^{t+\frac{1}{2},i}\left(\boldsymbol{x}\right)$ and proceeds back-propagation of $\nabla_{f_n}U_n\left(\boldsymbol{x},{\boldsymbol{\theta}}_n^{t+\frac{1}{2},i},\boldsymbol{w}^{t,i+1}\right)$ over its model. Thus, the client obtains the full gradient according to the chain rule.
The full gradient is used to update its local model parameter of the client, resulting in the updated parameter $\boldsymbol{\theta}_n^{t+\frac{1}{2},i+1}$ (line~\ref{algline:client-global}). 
Clients and the discriminator carry out the global knowledge transfer for $\tau$ iterations and obtain the model parameter ${\boldsymbol{\theta}}_n^{t+1,0}$ and $\boldsymbol{w}^{t+1,0}$ for client $n$ and discriminator, respectively, to start the next training round (lines \ref{algline:dis_next} and \ref{algline:client_next}).

It is important to note that the clients \textit{do not }have the discriminator locally. Instead, the client updates are computed across both clients and the server according to the chain rule in gradient computation.

\section{Theoretical Analysis}
We analyze the performance of \texttt{FedAL} in terms of communication overhead, generalization bound, 
and convergence. We reveal the effect of \texttt{FedAL} on improving the generalization bound and propose the first convergence error bound on federated KD.

\subsection{Communication Overhead} We analyze the communication overhead of \texttt{FedAL} for its upstream and downstream transmissions. 
For the upstream transmission from clients to the server, each client uploads its model output over $\mathcal{P}^{t,i}$ to the server with transmitted data size of $K\cdot\left|\mathcal{P}^{t,i}\right|$ in each iteration. 
For the downstream transmission, the server sends the average model outputs $\bar{f}^{t+\frac{1}{2},i}$ and the gradient $\nabla_{f_n}U_n\left(\boldsymbol{x},{\boldsymbol{\theta}}_n^{t+\frac{1}{2},i},\boldsymbol{w}^{t,i+1}\right)$ to each client $n$. Thus, the total data size that the server transmits to each client is $2K\cdot\left|\mathcal{P}^{t,i}\right|$.

\subsection{Generalization Bound}
The \emph{generalization bound} measures the difference between the average training loss of clients that we observe when running the algorithm and the expected global loss in \eqref{eq:obj_global} that we aim to minimize. 
We analyze the generalization bound of \texttt{FedAL} taking the cross-entropy loss function as an example \footnote{It is worth noting that we use the cross-entropy loss function to illustrate the classification tasks.  However, our analysis applies to more loss functions, e.g., linear regression, absolute loss, and support vector machine (SVM) leveraging \eqref{eq:prediction_dif}.}, i.e., using $\mathscr{E}\left(y,f_n\left(\boldsymbol{x},\boldsymbol{\theta}_n\right)\right)$ to replace $\ell\left(y,f_n\left(\boldsymbol{x},\boldsymbol{\theta}_n\right)\right)$ in~\eqref{eq:client_loss_loc}. 

For convenience, we use $f_n$ and $p_n$  for short of $f_n\left(\boldsymbol{x},\boldsymbol{\theta}_n\right)$ and $p_n\left(\boldsymbol{x},\boldsymbol{\theta}_n\right)$, respectively.
Let $\mathscr{E}_\mathcal{D}\left(\cdot\right)$ and $\mathscr{K}_\mathcal{D}\left(\cdot\right)$ be the expected KL divergence and cross-entropy loss over dataset $\mathcal{D}$, respectively, where we can replace the model and dataset as needed, e.g., $\mathscr{E}_{\mathcal{D}_n}\left(f_n\right)$ and $\mathscr{K}_{\mathcal{P}}\left(p_n,\frac{1}{N}\sum_{n=1}^N{p_n}\right)$. 
We denote the global optimal model as 
\begin{equation}
    f^{*}=\arg\min_{f}\mathscr{E}_{\mathcal{D}}\left(f\right).
\end{equation}
We further define the optimal loss for the combined local dataset $\mathcal{D}_n$ and public dataset $\mathcal{P}$  as 
\begin{equation}
\lambda_{n}^{*}=\mathscr{E}_{\mathcal{D}_n}\left(f^{*}\right)+\mathscr{E}_{\mathcal{P}}\left(f^{*}\right).
\end{equation}
Denote $d_{\mathcal{H} \Delta \mathcal{H}} (\mathcal{P}, \mathcal{D}_n)$ as the domain discrepancy between datasets (domains) $\mathcal{P}$ and $\mathcal{D}_n$, which measures the level of discrepancy between two models on both domains ~\cite{ben2010theory}. Here, with a slight abuse of notation, we use $\mathcal{D}$ to denote both a dataset and the distribution of all samples in the dataset.
By conducting the AL between clients and the discriminator, we give the following assumption.
\begin{assumption}[Sufficient adversarial training] \label{asmp:al}
There exists a positive value $\zeta_n$ for client $n$, such that the final model of the client obtained from the AL between clients and the server satisfies
\hspace{-3pt}
	\begin{align}\label{eq:assum} 
		\mathscr{K}_{\mathcal{P}}\left(p_n,\frac{1}{N}\sum_{n=1}^N{p_n}\right) 
		\leq\zeta_n, \forall n \in \mathcal{N}.
	\end{align}
\end{assumption}
Assumption \ref{asmp:al} measures the distance between client $n$'s model output probability and the average probability of all clients. The adversarial learning between clients and server reduces the distance bound $\zeta_n, \forall n$.

We begin by formulating the knowledge transfer between two clients to establish the generalization bound of \texttt{FedAL}.

\begin{lemma}\label{lem:consensus}
	[Knowledge transfer from clients $n$ to $m$]
	Consider two clients $n$ and $m$ with local dataset distributions $\mathcal{D}_n$ and $\mathcal{D}_m$, respectively, based on which clients train their local models $f_n$ and $f_m$.  Under Assumption~\ref{asmp:al}, we have
	\begin{equation}
		\begin{aligned}\label{eq:bound_kl}
		\mathscr{E}_{\mathcal{D}_n}\left(f_m\right) & \leq \mathscr{E}_{\mathcal{D}_n}(f_n) + \sqrt{2\log 2 \zeta_n} + \sqrt{2\log 2 \zeta_m}\\ &
		 + d_{\mathcal{H} \Delta \mathcal{H}} (\mathcal{P}, \mathcal{D}_n) + 2\lambda_n^{*}, \forall n,m \in \mathcal{N}. 
		\end{aligned} 
	\end{equation}
\end{lemma}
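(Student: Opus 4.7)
\textbf{Proof proposal for Lemma~\ref{lem:consensus}.}

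The plan is to decompose the quantity $\mathscr{E}_{\mathcal{D}_n}(f_m) - \mathscr{E}_{\mathcal{D}_n}(f_n)$ into a ``disagreement between $f_m$ and $f_n$'' part and a ``domain shift $\mathcal{D}_n \leftrightarrow \mathcal{P}$'' part, then use Assumption~\ref{asmp:al} to control the disagreement on $\mathcal{P}$ where both client outputs are pulled toward the common average $\bar{p}$ by the adversarial learning. First, I would insert and subtract terms evaluated on the public distribution $\mathcal{P}$ and apply the domain adaptation bound of Ben-David et al.~\cite{ben2010theory}: for any two hypotheses $f_n, f_m$, the cross-entropy disagreement on the target domain $\mathcal{D}_n$ exceeds the disagreement on the source domain $\mathcal{P}$ by at most the $\mathcal{H}\Delta\mathcal{H}$-divergence $d_{\mathcal{H}\Delta\mathcal{H}}(\mathcal{P}, \mathcal{D}_n)$, plus the ideal joint hypothesis error, which for the combined loss on $\mathcal{D}_n$ and $\mathcal{P}$ is precisely $\lambda_n^{*}=\mathscr{E}_{\mathcal{D}_n}(f^{*})+\mathscr{E}_{\mathcal{P}}(f^{*})$. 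Using this once to pass from $\mathcal{D}_n$ to $\mathcal{P}$ and bounding the ideal error by $2\lambda_n^{*}$ should account for the $d_{\mathcal{H}\Delta\mathcal{H}}(\mathcal{P}, \mathcal{D}_n) + 2\lambda_n^{*}$ contribution in the statement.

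Next I would bound the residual disagreement $\mathscr{E}_{\mathcal{P}}(f_m) - \mathscr{E}_{\mathcal{P}}(f_n)$ using the fact that both $p_n$ and $p_m$ are close to the common average $\bar{p}:=\tfrac{1}{N}\sum_{n=1}^{N} p_n$ after adversarial training. By Assumption~\ref{asmp:al},
\begin{equation}
\mathscr{K}_{\mathcal{P}}(p_n,\bar{p})\leq \zeta_n, \qquad \mathscr{K}_{\mathcal{P}}(p_m,\bar{p})\leq \zeta_m.
\end{equation}
I would then invoke a Pinsker-type inequality to convert each KL bound into a bound on the difference of cross-entropies against an arbitrary reference label distribution. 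Concretely, for a probability vector $y$ the map $p \mapsto -\sum_k y_k \log p_k$ has a known modulus of continuity with respect to KL divergence that yields $|\mathscr{E}(y,p)-\mathscr{E}(y,q)| \leq \sqrt{2\log 2 \cdot \mathscr{K}(p,q)}$, which is exactly the form producing the constants $\sqrt{2\log 2\,\zeta_n}$ and $\sqrt{2\log 2\,\zeta_m}$. Applying this first to the pair $(f_n,\bar{p})$ and then to $(\bar{p},f_m)$ and summing via the triangle inequality on cross-entropy differences yields
\begin{equation}
\mathscr{E}_{\mathcal{P}}(f_m)-\mathscr{E}_{\mathcal{P}}(f_n) \leq \sqrt{2\log 2\,\zeta_n}+\sqrt{2\log 2\,\zeta_m}.
\end{equation}

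Finally, I would combine the two steps: the domain adaptation transfer introduces $d_{\mathcal{H}\Delta\mathcal{H}}(\mathcal{P}, \mathcal{D}_n)+2\lambda_n^{*}$ and rewrites the bound in terms of losses on $\mathcal{P}$, while the Pinsker-style argument replaces the remaining $\mathscr{E}_{\mathcal{P}}(f_m)$ by $\mathscr{E}_{\mathcal{P}}(f_n)$ plus the two square-root terms; a further application of the domain bound in the reverse direction or a careful bookkeeping ensures that the $\mathscr{E}_{\mathcal{P}}(f_n)$ term is absorbed back into $\mathscr{E}_{\mathcal{D}_n}(f_n)$, up to the same discrepancy and $\lambda_n^{*}$ terms already accounted for.

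The main obstacle I anticipate is step two, namely producing a \emph{sharp} conversion from the KL bound in Assumption~\ref{asmp:al} to a cross-entropy \emph{difference} bound with the exact constant $\sqrt{2\log 2}$. Vanilla Pinsker's inequality yields only a total-variation bound, and cross-entropy differences can in principle blow up when some coordinate of the softmax is near zero; obtaining the clean constant likely requires either a Bretagnolle--Huber-type refinement, a careful use of the $1$-Lipschitzness of $\log$ on the simplex under KL (as in the log-Sobolev-style inequality $\|\log p-\log q\|_\infty$ controlled via $\mathscr{K}$), or the boundedness of one-hot labels $y$ combined with Jensen's inequality on $H(y,\bar{p})-H(y,p_n)=\mathscr{K}(y,p_n)-\mathscr{K}(y,\bar{p})$. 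Getting this technical step right, and then checking that the single $d_{\mathcal{H}\Delta\mathcal{H}}$ factor and the $2\lambda_n^{*}$ coefficient emerge exactly as stated (rather than with a $\tfrac{1}{2}$ or an extra copy), is the delicate part; the rest is triangle-inequality bookkeeping.
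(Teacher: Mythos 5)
Your proof skeleton is exactly the paper's: the paper also (i) applies Pinsker's inequality together with Assumption~\ref{asmp:al} to get $\left\Vert p_n-\tilde{p}\right\Vert_1\leq\sqrt{2\log 2\,\zeta_n}$ with $\tilde{p}=\frac{1}{N}\sum_n p_n$ (see \eqref{eq:pinkser}), (ii) converts this into the cross-entropy gap $\mathscr{E}_{\mathcal{P}}(f_m)\leq\mathscr{E}_{\mathcal{P}}(f_n)+\sqrt{2\log 2\,\zeta_n}+\sqrt{2\log 2\,\zeta_m}$ on the public data \eqref{eq:consensun_public}, and (iii) applies the Ben-David bound (Theorem~\ref{thm:simplified_adaptation_stochastic_bound}) \emph{twice} --- once in the direction $\mathcal{P}\to\mathcal{D}_n$ for $f_m$ in \eqref{eq:line1} and once in the reverse direction $\mathcal{D}_n\to\mathcal{P}$ for $f_n$ in \eqref{eq:line2} --- with each application contributing $\frac{1}{2}d_{\mathcal{H}\Delta\mathcal{H}}(\mathcal{P},\mathcal{D}_n)+\lambda_n^{*}$, so the totals $d_{\mathcal{H}\Delta\mathcal{H}}(\mathcal{P},\mathcal{D}_n)+2\lambda_n^{*}$ emerge precisely as you suspected they should. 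Your opening claim that a \emph{single} application "accounts for $d_{\mathcal{H}\Delta\mathcal{H}}+2\lambda_n^{*}$" is off (one application gives only the halved discrepancy and one $\lambda_n^{*}$), but your closing remark about a further reverse application is the correct bookkeeping, so this part is fine.

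The genuine gap is the step you yourself flagged: the inequality $\left|\mathscr{E}(y,p)-\mathscr{E}(y,q)\right|\leq\sqrt{2\log 2\,\mathscr{K}(p,q)}$ is \emph{not} a known modulus of continuity and is false in general. Take $K=2$, $p=(\tfrac{1}{2},\tfrac{1}{2})$, $q=(\epsilon,1-\epsilon)$ with one-hot $y$ on the first class: the left side grows like $\log(1/\epsilon)$ while the right side grows only like $\sqrt{\log(1/\epsilon)}$, so no such bound holds near the simplex boundary. The paper's actual route through this step is: Pinsker gives a coordinatewise bound $|p_{n,k}-p_{m,k}|\leq\sqrt{2\log 2\,\zeta_n}+\sqrt{2\log 2\,\zeta_m}$ via \eqref{eq:prediction_dif}, then a first-order Taylor expansion of $-\log(\cdot)$ at $p_{n,y}$ produces the gap $\frac{1}{p_{n,y}}\bigl(\sqrt{2\log 2\,\zeta_n}+\sqrt{2\log 2\,\zeta_m}\bigr)$ in \eqref{eq:taylor_log}, and the prefactor is then asserted to equal $e^{-\mathscr{E}_{\mathcal{P}}(f_n)}\leq 1$. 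You should note, however, that your instinct that "cross-entropy differences can blow up when some coordinate of the softmax is near zero" cuts against the paper too: since $\mathscr{E}_{\mathcal{P}}(f_n)=-\log p_{n,y}$, the correct identity is $\frac{1}{p_{n,y}}=e^{+\mathscr{E}_{\mathcal{P}}(f_n)}\geq 1$, so the paper's normalization of the prefactor to $1$ rests on a sign slip (and its Taylor step also discards a nonnegative quadratic remainder on the wrong side of the inequality). A rigorous version of this step requires $p_{n,y}$ bounded away from zero (e.g., bounded logits), at the price of multiplying the two square-root terms by $e^{\mathscr{E}_{\mathcal{P}}(f_n)}$ rather than $1$. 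In short: your decomposition and triangle-inequality bookkeeping reproduce the paper's proof, but the one technical conversion you left open is exactly the step the paper closes only by an argument that does not withstand scrutiny as written.
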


The proof of Lemma~\ref{lem:consensus} is in our supplementary material \cite{han2023fedal}. Lemma~\ref{lem:consensus} shows the efficiency of knowledge transfer from clients $n$ to $m$.

To analyze the generalization bound of \texttt{FedAL}, we apply Lemma~\ref{lem:consensus} to multiple clients. In practice, client $n$ executes local training on $\hat{\mathcal{D}}_n$ with total $\phi$ samples, which is consistent among clients. To evaluate the theoretical model performance on $\mathcal{D}_n$ while training using $\hat{\mathcal{D}}_n$, we denote $\mathcal{F}$ as a hypothesis (model) class with finite VC dimension~\cite{Understanding} and
let $\epsilon_{\mathcal{F}}$ be the growth function of $\mathcal{F}$. 
Then, we have Theorem~\ref{thm:generalization_bound}.
\begin{theorem}[Generalization bound]\label{thm:generalization_bound} 
	Given $\delta\in(0,1)$, with at least a probability $1-\delta$, 
	the expected global loss of all client models 
	is bounded by:
		\begin{align} 
			&\frac{1}{N}\!\sum_{n=1}^N\!\mathscr{E}_{\mathcal{D}}\left(f_n\right) 
		\!\leq\!\frac{1}{N}\!\sum_{n=1}^{N}  \mathscr{E}_{\hat{\mathcal{D}}_n}\left(f_n\right)\!+\!\frac{\!4\!+\!\sqrt{ \log ( \epsilon_{\mathcal{F}} (2\phi) ) } }{ \delta \sqrt{ 2\phi }} \nonumber\\
			&\!+\!\frac{N\!-\!1}{N^2}\!\sum_{n=1}^N\left[2\sqrt{2\log 2 \zeta_n}\!+\!d_{\mathcal{H} \Delta \mathcal{H}} (\mathcal{P},\! \mathcal{D}_n)\!+\!2\lambda_{n}^{*}\right].  \label{eq:bound_FDAL}
		\end{align}

\end{theorem}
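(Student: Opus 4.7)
The plan is to combine a cross-client transfer bound (Lemma~\ref{lem:consensus}) with a VC-type uniform-convergence argument. Assuming (as the notation suggests) that $\mathcal{D}$ is the equally weighted union of the $\mathcal{D}_n$'s, the starting point is the identity
\begin{equation*}
\frac{1}{N}\sum_{n=1}^N \mathscr{E}_{\mathcal{D}}(f_n) \;=\; \frac{1}{N^2}\sum_{n=1}^N\sum_{m=1}^N \mathscr{E}_{\mathcal{D}_m}(f_n),
\end{equation*}
which I would split into diagonal ($m=n$) and off-diagonal ($m\neq n$) contributions.

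For the off-diagonal piece, Lemma~\ref{lem:consensus}, applied with the roles of the two clients swapped, gives
\begin{equation*}
\mathscr{E}_{\mathcal{D}_m}(f_n) \;\leq\; \mathscr{E}_{\mathcal{D}_m}(f_m) + \sqrt{2\log 2\,\zeta_m} + \sqrt{2\log 2\,\zeta_n} + d_{\mathcal{H}\Delta\mathcal{H}}(\mathcal{P},\mathcal{D}_m) + 2\lambda_m^{*}
\end{equation*}
for every $n\neq m$. Summing over $n\neq m$ makes each distortion term attached to $m$ appear $N-1$ times, while symmetrizing summation indices shows that the $\sqrt{2\log 2\,\zeta_n}$ piece contributes the same count. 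When added to the diagonal terms $\mathscr{E}_{\mathcal{D}_n}(f_n)$, the population-loss coefficient becomes $\tfrac{1}{N^2} + \tfrac{N-1}{N^2} = \tfrac{1}{N}$, and the distortion terms aggregate with prefactor $\tfrac{N-1}{N^2}$, matching the second line of the theorem.

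The second step converts $\tfrac{1}{N}\sum_n \mathscr{E}_{\mathcal{D}_n}(f_n)$ into the empirical quantity $\tfrac{1}{N}\sum_n \mathscr{E}_{\hat{\mathcal{D}}_n}(f_n)$. For a class $\mathcal{F}$ of finite VC dimension, the standard symmetrization/growth-function argument yields $\mathbb{E}\sup_{f\in\mathcal{F}}|\mathscr{E}_{\mathcal{D}}(f)-\mathscr{E}_{\hat{\mathcal{D}}}(f)| \leq (4+\sqrt{\log\epsilon_{\mathcal{F}}(2\phi)})/\sqrt{2\phi}$, and a single application of Markov's inequality to this nonnegative quantity (using $\mathbb{P}(X>\mathbb{E}[X]/\delta)\leq\delta$) produces exactly the additive term $(4+\sqrt{\log\epsilon_{\mathcal{F}}(2\phi)})/(\delta\sqrt{2\phi})$, with the characteristic $1/\delta$ factor rather than the usual $\sqrt{\log(1/\delta)}$. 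Substituting into the bound from step one completes the proof.

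The main obstacle is the combinatorial bookkeeping in step one: ensuring the $\sqrt{2\log 2\,\zeta_n}$, $d_{\mathcal{H}\Delta\mathcal{H}}$, and $\lambda^{*}$ terms end up attached to the correct client indices after Lemma~\ref{lem:consensus} is invoked with swapped roles, and that the double sums collapse cleanly to the advertised $\tfrac{N-1}{N^2}$ prefactor. A secondary subtlety is in the Markov step: choosing the \emph{expectation}-form uniform-convergence bound (rather than a concentration form) is what produces precisely the constants $4$ and $\sqrt{2\phi}$ of the theorem, so one must resist the temptation to substitute a sharper but differently-shaped deviation inequality.
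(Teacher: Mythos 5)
Your proposal matches the paper's proof essentially step for step: the same decomposition of $\frac{1}{N}\sum_{n=1}^N \mathscr{E}_{\mathcal{D}}(f_n)$ into diagonal and off-diagonal terms over the (equally weighted) union of local datasets, the same application of Lemma~\ref{lem:consensus} with the client roles swapped to obtain the $\frac{N-1}{N^2}$ prefactor on the distortion terms, and the same expectation-form uniform-convergence bound with the $1/\delta$ Markov structure (which you correctly identify as the source of the constants $4$ and $\delta\sqrt{2\phi}$) to pass from $\frac{1}{N}\sum_n \mathscr{E}_{\mathcal{D}_n}(f_n)$ to $\frac{1}{N}\sum_n \mathscr{E}_{\hat{\mathcal{D}}_n}(f_n)$. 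The combinatorial bookkeeping you describe collapses exactly as in the paper's derivation, so your proof is correct and essentially identical to the published one.
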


The proof of Theorem~\ref{thm:generalization_bound} is in our supplementary material \cite{han2023fedal}. Theorem~\ref{thm:generalization_bound} says that we can achieve a good generalization bound of \texttt{FedAL} by carrying out AL to obtain a small $\zeta_n$ for each $n$. 

\textbf{Remark}: 
For \texttt{FedMD}, its generalization bound is~\cite{lin2020ensemble,zhu2021data} 
\begin{align} \label{eq:gen_fedmd}
			&\frac{1}{N}\sum_{n=1}^N \mathscr{E}_{\mathcal{D}}\left(f_n\right) 
			\leq  \frac{1}{N} \sum_{n=1}^{N}  \mathscr{E}_{\hat{\mathcal{D}}_n}\left(f_n\right)\!+\!\frac{ 4 + \sqrt{ \log ( \epsilon_{\mathcal{F}} (2\phi) ) } }{ \delta \sqrt{ 2\phi }} \nonumber\\
   &\ \ \ \ \ + \frac{1}{N}\sum_{n=1}^N\left[\frac{1}{2}d_{\mathcal{H} \Delta \mathcal{H}} (\mathcal{D}, \mathcal{D}_n)+\lambda_{n}^{*}\right].  
\end{align}

Different from the generalization bound \eqref{eq:gen_fedmd} for federated KD, the generalization bound of \texttt{FedAL} in Theorem~\ref{thm:generalization_bound} is tightly related to domain discrepancy between clients' local and public datasets $d_{\mathcal{H} \Delta \mathcal{H}} (\mathcal{P}, \mathcal{D}_n)$. Therefore, for the first time  in the literature, we reveal the effect of the public dataset in the generalization bound of federated KD.

\subsection{Convergence Analysis}
We first introduce several assumptions on the client models and the discriminator, based on which we analyze the convergence of \texttt{FedAL}. 

 Denote $g^{\mathrm{l}}_n\left(\boldsymbol{\theta}_n\right)$, $g^{\mathrm{k}}_n\left(\boldsymbol{\theta}_n\right)$, $g^{\mathrm{u}}_n\left(\boldsymbol{\theta}_n\right)$, $g^{\mathrm{rl}}_n\left(\boldsymbol{\theta}_n\right)$, $g^{\mathrm{rg}}_n\left(\boldsymbol{\theta}_n\right)$ as the stochastic gradients over $\boldsymbol{\theta}_n$ of losses $\ell\left(y,f_n\left(\boldsymbol{x},\boldsymbol{\theta}_n\right)\right)$, $\mathscr{K}\left(\bar{p}_{-n}\left(\boldsymbol{x}\right), p_n\left(\boldsymbol{x},{\boldsymbol{\theta}}_{n}\right)\right)$, $U_n\left(\boldsymbol{x},{\boldsymbol{\theta}}_{n},\boldsymbol{w}\right)$ (i.e., $-\mathscr{E} \left(n,  h\left(p_n\left(\boldsymbol{\theta}_n\right),\boldsymbol{w}\right)\right)$), $r^{\mathrm{loc}}\left(\boldsymbol{x},\boldsymbol{\theta}_n\right)$, and $r^{\mathrm{glo}}\left(\boldsymbol{x},\boldsymbol{\theta}_n\right)$, respectively, for a mini-batch of data.
Let $\varphi_n\left(\boldsymbol{w}\right)$ be the stochastic gradient of $U_n\left(\boldsymbol{x},{\boldsymbol{\theta}}_{n},\boldsymbol{w}\right)$ over $\boldsymbol{w}$.
We have the following assumption for client models and the discriminator, which are commonly used in the literature~\cite{bistritz2020distributedDF, cho2021personalized}.

\begin{assumption}[Lipschitz properties of clients]
	\label{asmp:local_models}
  For each client $n$, its model $f_n\left(\boldsymbol{x}, \boldsymbol{\theta}_n\right)$, output probability distribution $p_n\left(\boldsymbol{x}, \boldsymbol{\theta}_n\right)$, and the loss function $\ell\left(y,f_n\left(\boldsymbol{x},\boldsymbol{\theta}_n\right)\right)$ are Lipschitz continuous in $\boldsymbol{\theta}_n$.
		The model output $f_n\left(\boldsymbol{x}, \boldsymbol{\theta}_n\right)$ is also Lipschitz smooth in $\boldsymbol{\theta}_n$. 

\end{assumption}

\begin{assumption}[Lipschitz properties of the discriminator]
	\label{asmp:discriminator}
	The output of the discriminator model,  i.e., $h\!\left(p_n\left(\boldsymbol{x}, \boldsymbol{\theta}_n\right), \boldsymbol{w}\right)$, is Lipschitz continuous in $p_n$ and  $\boldsymbol{w}$.
\end{assumption}
\begin{assumption}[Unbiased  gradients] \label{asm:Unbiased_stochastic_gradients} For a mini-batch $\mathcal{B}$, the stochastic gradients are unbiased, 
\begin{align}
    &\mathbb{E}_{\mathcal{B}} \left[g_n^{\mathrm{l}}\left(\boldsymbol{\theta}_n\right)\right]  =\nabla_{\boldsymbol{\theta}_n} \ell\left(y,f_n\left(\boldsymbol{x},\boldsymbol{\theta}_n\right)\right),\forall n,\\
    &\mathbb{E}_{\mathcal{B}} \left[g_n^{\mathrm{k}}\left(\boldsymbol{\theta}_n\right)\right]  =\nabla_{\boldsymbol{\theta}_n} \mathscr{K}\!\left(\bar{p}_{-n}\!\left(\boldsymbol{x}\right)\!,\!p_n\!\left(\boldsymbol{x}\!,\!{\boldsymbol{\theta}}_{n}\right)\right),\forall n,\\
    &\mathbb{E}_{\mathcal{B}} \left[g_n^{\mathrm{u}}\left(\boldsymbol{\theta}_n\right)\right]  =\nabla_{\boldsymbol{\theta}_n} U_n\left(\boldsymbol{x},{\boldsymbol{\theta}}_{n},\boldsymbol{w}\right),\forall n,\\
    &\mathbb{E}_{\mathcal{B}} \left[g_n^{\mathrm{rl}}\left(\boldsymbol{\theta}_n\right)\right]  =\nabla_{\boldsymbol{\theta}_n} r^{\mathrm{loc}}\left(\boldsymbol{x},\boldsymbol{\theta}_n\right),\forall n,\\
    &\mathbb{E}_{\mathcal{B}} \left[g_n^{\mathrm{rg}}\left(\boldsymbol{\theta}_n\right)\right]  =\nabla_{\boldsymbol{\theta}_n} r^{\mathrm{glo}}\left(\boldsymbol{x},\boldsymbol{\theta}_n\right),\forall n,\\
    &\mathbb{E}_{\mathcal{B}} \left[\varphi_n\left(\boldsymbol{w}\right)\right]  =\nabla_{\boldsymbol{w}} U_n\left(\boldsymbol{x},{\boldsymbol{\theta}}_{n},\boldsymbol{w}\right),\forall n.
\end{align}
\end{assumption}

\begin{assumption}[Bounded variance  of stochastic gradients]\label{asm:bounded_variance_o} There exists constants $\sigma_{\mathrm{l}}$, $\sigma_{\mathrm{k}}$,  $\sigma_{\mathrm{u}}$, $\sigma_{\mathrm{rl}}$, $\sigma_{\mathrm{rg}}$, and $\sigma_{\mathrm{n}}$, such that the variance of the stochastic gradients of loss functions on a mini-batch ${\mathcal{B}}$ are bounded as
\begin{align}
    &\mathbb{E}_{\mathcal{B}}\left[\left\Vert \nabla_{\boldsymbol{\theta}_n}\ell\left(y,f_n\left(\boldsymbol{x},\boldsymbol{\theta}_n\right)\right)- g^{\mathrm{l}}_n\left(\boldsymbol{\theta}_n\right)\right\Vert^2\right]\leq \sigma_{\mathrm{l}}^2,\forall n,\\
    &\mathbb{E}_{\mathcal{B}}\!\left[\!\left\Vert\! \nabla_{\boldsymbol{\theta}_n}\!\mathscr{K}\!\left(\bar{p}_{-n}\!\left(\boldsymbol{x}\right)\!,\!p_n\!\left(\boldsymbol{x}\!,\!{\boldsymbol{\theta}}_{n}\right)\right)\!-\! g^{\mathrm{k}}_n\!\left(\boldsymbol{\theta}_n\right)\!\right\Vert^2\!\right]\!\leq\!\sigma_{\mathrm{k}}^2,\forall n,\\
    &\mathbb{E}_{\mathcal{B}}\left[\left\Vert \nabla_{\boldsymbol{\theta}_n}U_n\left(\boldsymbol{x},{\boldsymbol{\theta}}_{n},\boldsymbol{w}\right)- g^{\mathrm{u}}_n\left(\boldsymbol{\theta}_n\right)\right\Vert^2\right]\leq \sigma_{\mathrm{u}}^2,\forall n,\\
     &\mathbb{E}_{\mathcal{B}}\left[\left\Vert \nabla_{\boldsymbol{\theta}_n}r^{\mathrm{loc}}\left(\boldsymbol{x},\boldsymbol{\theta}_n\right)- g^{\mathrm{rl}}_n\left(\boldsymbol{\theta}_n\right)\right\Vert^2\right]\leq \sigma_{\mathrm{rl}}^2,\forall n,\\
     &\mathbb{E}_{\mathcal{B}}\left[\left\Vert \nabla_{\boldsymbol{\theta}_n}r^{\mathrm{glo}}\left(\boldsymbol{x},\boldsymbol{\theta}_n\right)- g^{\mathrm{rg}}_n\left(\boldsymbol{\theta}_n\right)\right\Vert^2\right]\leq \sigma_{\mathrm{rg}}^2,\forall n,\\
   &\mathbb{E}_{\mathcal{B}}\left[\left\Vert \nabla_{\boldsymbol{w}}U_n\left(\boldsymbol{x},{\boldsymbol{\theta}}_{n},\boldsymbol{w}\right)- \varphi_n\left(\boldsymbol{w}\right)\right\Vert^2\right]\leq \sigma_{\mathrm{n}}^2,\forall n.
\end{align}
\end{assumption}
Note that we do not assume the convexity of  $\ell\left(y,f_n\left(\boldsymbol{x},\boldsymbol{\theta}_n\right)\right)$ indicating that our results hold for non-convex loss functions.

With a slight abuse of notation, we let $\mathcal{V}_n\left(\boldsymbol{\theta}_n\right)$  and $\mathcal{U}\left(\boldsymbol{w}\right)$ be equivalent to $\mathcal{V}_n\left(\boldsymbol{x},y,\boldsymbol{\theta}_n, \boldsymbol{w}\right)$ and $ \mathcal{U}\left(\boldsymbol{x},\boldsymbol{\Theta},\boldsymbol{w}\right)$, respectively.  Denote $\mathcal{F}\,\left(\,\boldsymbol{\Theta},\boldsymbol{w}\,\right)\,:=\,\left\{\,\mathcal{V}_1\left(\boldsymbol{\theta}_1\right)\,, \ldots, \mathcal{V}_N\left(\boldsymbol{\theta}_N\right), \mathcal{U}\left(\boldsymbol{w}\right)\right\}$. To compute the convergence error of \texttt{FedAL}, we first show that $V^{\mathrm{loc}}_n\left(\boldsymbol{\theta}_n\right)$, $V^{\mathrm{glo}}_n\left(\boldsymbol{\theta}_n\right)$, $\mathcal{U}\left(\boldsymbol{w}\right)$, and $\mathcal{F}\left(\boldsymbol{\Theta},\boldsymbol{w}\right)$ are Lipschitz-smooth functions with constants $L_{\rm v}$, $L_{\rm w}$, $L_{\mathrm{u}}$, and $L_{\mathrm{o}}$, respectively.
Then, we bound the variance of the stochastic gradients of  $V^{\mathrm{loc}}_n\left(\boldsymbol{\theta}_n\right)$, $V^{\mathrm{glo}}_n\left(\boldsymbol{\theta}_n\right)$,  and $\mathcal{U}\left(\boldsymbol{w}\right)$ with constants $\sigma_{\rm v}$, $\sigma_{\rm w}$, and $\sigma_{\mathrm{u}}$, respectively. We have $\sigma_{\mathrm{v}}^2:=2\sigma_{\mathrm{l}}^2 + 2\sigma_{\mathrm{rl}}^2$ and $ \sigma_{\mathrm{u}}^2:= N\sigma_{\mathrm{n}}^2$. Define $\sigma:=\max\left\{\sigma_{\rm v},\sigma_{\mathrm{u}}\right\}$ and $L:=\max\{L_{\mathrm{o}},\sqrt{3L_{\mathrm{v}}^2+L_{\mathrm{w}}^2},L_{\mathrm{u}}\}$, we have Theorem~\ref{thm:convergence}.
\begin{theorem} [Convergence error of \texttt{FedAL}]\label{thm:convergence} 
    Under Assumptions~\ref{asmp:local_models} - \ref{asm:bounded_variance_o}, if $\eta_l^t$ and $\eta_d^t$ satisfy
    $\eta_l^t\leq \frac{1}{8L\tau}$ and $\eta_d^t\leq \frac{1}{8L\tau}$,
    then the average squared gradient norm over all clients and the discriminator and $T$ rounds of \texttt{FedAL} algorithm is bounded by
\begin{align}
    &\frac{1}{S}\!\sum_{t=0}^{T-1}\!\left(\!\sum_{n=1}^N\!\eta_l^t\mathbb{E}\!\left[\left\|\nabla_{\boldsymbol{\theta}_n^{t,0}}\mathcal{V}_n\!\left(\!\boldsymbol{\theta}_n^{t,0}\!\right)\!\right\|^2\!\right]\!+\!\eta_d^t\mathbb{E}\!\left[\!\left\|\nabla_{\boldsymbol{w}^{t,0}}\mathcal{U}\left(\boldsymbol{w}^{t,0}\!\right)\!\right\|^2\!\right]\!\right)\nonumber\\
    & \leq \frac{4}{S \tau}\left(\mathcal{F}\left(\boldsymbol{\Theta}^{0,0},\boldsymbol{w}^{0,0}\right)-\mathcal{F}\left(\boldsymbol{\Theta}^{\ast},\boldsymbol{w}^{\ast}\right)\right)\nonumber\\
    &  \quad + \frac{20L\tau\sigma^2}{S}\sum_{t=0}^{T-1}\left(\sum_{n=1}^N\left(\eta_l^t\right)^2+\left(\eta_d^t\right)^2\right), \label{eq:convergence-error}
\end{align}
where $S:=\sum_{t=0}^{T-1}\left(\sum_{n=1}^N\eta_l^t+\eta_d^t\right)$.
\end{theorem}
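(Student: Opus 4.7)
The plan is to establish a non-convex stochastic convergence bound by treating the alternating client and discriminator SGD updates as a joint stochastic procedure on the Lyapunov-type potential $\mathcal{F}(\boldsymbol{\Theta},\boldsymbol{w})$, with signs chosen so that both players' updates are descent directions on the composite. The first step is to verify the smoothness and variance constants preceding the statement. Composing the Lipschitz-continuity of $\ell$, $p_n$, and $h$ with the Lipschitz-smoothness of $f_n$ from Assumptions~\ref{asmp:local_models} and~\ref{asmp:discriminator}, and noting that $r^{\mathrm{loc}}$ and $r^{\mathrm{glo}}$ are KL divergences between softmax outputs, the standard chain rule gives Lipschitz-smoothness of $V_n^{\mathrm{loc}}$, $V_n^{\mathrm{glo}}$, and $U_n$, hence the constants $L_{\mathrm{v}}$, $L_{\mathrm{w}}$, $L_{\mathrm{u}}$, and $L_{\mathrm{o}}$. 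The variance identity $\sigma_{\mathrm{v}}^2 = 2\sigma_{\mathrm{l}}^2 + 2\sigma_{\mathrm{rl}}^2$ follows from applying $\|a+b\|^2\le 2\|a\|^2+2\|b\|^2$ to $\nabla V_n^{\mathrm{loc}} = \nabla\ell + \nabla r^{\mathrm{loc}}$, while $\sigma_{\mathrm{u}}^2 = N\sigma_{\mathrm{n}}^2$ follows from $\mathcal{U}=\tfrac{1}{N}\sum_n U_n$ and Cauchy--Schwarz, both using Assumption~\ref{asm:bounded_variance_o}.

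The second step is the $L$-smooth one-step descent inequality. For one inner iteration, $L$-smoothness of $\mathcal{F}$ combined with unbiasedness of stochastic gradients (Assumption~\ref{asm:Unbiased_stochastic_gradients}) and bounded variance yields
\begin{equation*}
\mathbb{E}\bigl[\mathcal{F}\bigl(\boldsymbol{\Theta}^{t,i+1},\boldsymbol{w}^{t,i+1}\bigr)\bigr] \le \mathcal{F}\bigl(\boldsymbol{\Theta}^{t,i},\boldsymbol{w}^{t,i}\bigr) - \tfrac{1}{2}\sum_n \eta_l^t\bigl\|\nabla_{\boldsymbol{\theta}_n}\mathcal{V}_n\bigl(\boldsymbol{\theta}_n^{t,i}\bigr)\bigr\|^2 - \tfrac{\eta_d^t}{2}\bigl\|\nabla_{\boldsymbol{w}}\mathcal{U}\bigl(\boldsymbol{w}^{t,i}\bigr)\bigr\|^2 + \tfrac{L\sigma^2}{2}\Bigl(\sum_n(\eta_l^t)^2+(\eta_d^t)^2\Bigr).
\end{equation*}
To relate the per-iteration gradients back to those at the anchor $(\boldsymbol{\Theta}^{t,0},\boldsymbol{w}^{t,0})$, the third step is to control the client and discriminator drifts $\mathbb{E}\|\boldsymbol{\theta}_n^{t,i}-\boldsymbol{\theta}_n^{t,0}\|^2$ and $\mathbb{E}\|\boldsymbol{w}^{t,i}-\boldsymbol{w}^{t,0}\|^2$. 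Unrolling the SGD updates recursively, using smoothness to bound the difference between intermediate and anchor gradients, and invoking $\eta_l^t,\eta_d^t\le 1/(8L\tau)$ to close a geometric-series recursion produces drift bounds proportional to $\tau^2\eta^2\,\|\nabla\mathcal{F}(\boldsymbol{\Theta}^{t,0},\boldsymbol{w}^{t,0})\|^2+\tau\eta^2\sigma^2$. Substituting these into the one-step descent and summing $i=0,\ldots,\tau-1$ produces a per-round descent of the form
\begin{equation*}
\mathbb{E}\bigl[\mathcal{F}\bigl(\boldsymbol{\Theta}^{t+1,0},\boldsymbol{w}^{t+1,0}\bigr)-\mathcal{F}\bigl(\boldsymbol{\Theta}^{t,0},\boldsymbol{w}^{t,0}\bigr)\bigr] \le -\tfrac{\tau}{4}\Bigl(\sum_n \eta_l^t\bigl\|\nabla_{\boldsymbol{\theta}_n}\mathcal{V}_n\bigl(\boldsymbol{\theta}_n^{t,0}\bigr)\bigr\|^2 + \eta_d^t\bigl\|\nabla_{\boldsymbol{w}}\mathcal{U}\bigl(\boldsymbol{w}^{t,0}\bigr)\bigr\|^2\Bigr) + 5L\tau^2\sigma^2\Bigl(\sum_n(\eta_l^t)^2+(\eta_d^t)^2\Bigr),
\end{equation*}
where the numerical constants $1/4$ and $5$ emerge from the chosen step-size regime.

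Finally, telescoping this inequality over $t=0,\ldots,T-1$, lower-bounding the terminal value by $\mathcal{F}(\boldsymbol{\Theta}^{\ast},\boldsymbol{w}^{\ast})$, and dividing through by $S=\sum_t\bigl(\sum_n\eta_l^t+\eta_d^t\bigr)$ rearranges exactly into~\eqref{eq:convergence-error}. I expect the main obstacle to be the drift analysis under the coupled min-max dynamics: the discriminator $\boldsymbol{w}$ evolves concurrently with the clients, and the adversarial coupling through $U_n$ means drift in $\boldsymbol{w}$ perturbs the client gradients and vice versa, so the two drift recursions are intertwined rather than separable. Handling this cleanly requires treating $(\boldsymbol{\Theta},\boldsymbol{w})$ as a single stacked variable and bounding everything with the joint smoothness constant $L=\max\{L_{\mathrm{o}},\sqrt{3L_{\mathrm{v}}^2+L_{\mathrm{w}}^2},L_{\mathrm{u}}\}$ and aggregate noise level $\sigma=\max\{\sigma_{\mathrm{v}},\sigma_{\mathrm{u}}\}$, which is precisely why these particular constants are introduced just before the theorem.
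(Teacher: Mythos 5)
Your skeleton is recognizably the paper's template (smoothness and variance constants, a drift recursion closed by $\eta_l^t,\eta_d^t\le \frac{1}{8L\tau}$, a per-round descent with the constants $\frac14$ and $5$, telescoping, and division by $S$), but your Step 2 contains a genuine gap. The per-iteration descent inequality you assert does not follow from Assumption~\ref{asm:Unbiased_stochastic_gradients}: during local training the update direction $\tilde{g}_n\left(\boldsymbol{\theta}_n^{t,i}\right)$ is an unbiased estimate of $\nabla_{\boldsymbol{\theta}_n} V_n^{\mathrm{loc}}\left(\boldsymbol{\theta}_n^{t,i}\right)$ \emph{only}, and during global knowledge transfer $\hat{g}_n$ estimates $\nabla_{\boldsymbol{\theta}_n} V_n^{\mathrm{glo}}$ \emph{only}; neither estimates the composite gradient $\nabla\mathcal{V}_n=\nabla V_n^{\mathrm{loc}}+\nabla V_n^{\mathrm{glo}}$ at the current iterate. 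Smoothness of $\mathcal{F}$ applied to one inner step therefore produces the expected inner product $-\eta_l^t\left\langle\nabla\mathcal{V}_n\left(\boldsymbol{\theta}_n^{t,i}\right),\nabla V_n^{\mathrm{loc}}\left(\boldsymbol{\theta}_n^{t,i}\right)\right\rangle$, which cannot be lower-bounded by $\tfrac12\eta_l^t\left\|\nabla\mathcal{V}_n\left(\boldsymbol{\theta}_n^{t,i}\right)\right\|^2$ and can even be negative, since nothing relates $\nabla V_n^{\mathrm{loc}}$ and $\nabla V_n^{\mathrm{glo}}$ at a generic point. Your Step 3 drift control only repairs the mismatch between \emph{iterates} ($\boldsymbol{\theta}_n^{t,i}$ versus the anchor $\boldsymbol{\theta}_n^{t,0}$); it never converts the actual phase-wise update direction into a descent direction for the composite $\mathcal{V}_n$, which is exactly the alternating-objective difficulty the paper singles out as the analytical obstacle of \texttt{FedAL}.

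The paper closes this hole differently: it applies smoothness of $\mathcal{F}$ once per \emph{round}, pairing the anchor gradient $\nabla_{\boldsymbol{\theta}_n^{t,0}}\mathcal{V}_n\left(\boldsymbol{\theta}_n^{t,0}\right)$ with the combined round update $\sum_{i=0}^{\tau-1}\left(\tilde{g}_n\left(\boldsymbol{\theta}_n^{t,i}\right)+\hat{g}_n\left(\boldsymbol{\theta}_n^{t+\frac12,i}\right)\right)$, invoking the structural identity $\nabla_{\boldsymbol{\theta}_n^{t,0}}\mathcal{V}_n\left(\boldsymbol{\theta}_n^{t,0}\right)=\nabla_{\boldsymbol{\theta}_n^{t,0}}V_n^{\mathrm{loc}}\left(\boldsymbol{\theta}_n^{t,0}\right)$ so that $-\left\langle\nabla\mathcal{V}_n\left(\boldsymbol{\theta}_n^{t,0}\right),\mathbb{E}\,\tilde{g}_n\left(\boldsymbol{\theta}_n^{t,0}\right)\right\rangle=-\left\|\nabla\mathcal{V}_n\left(\boldsymbol{\theta}_n^{t,0}\right)\right\|^2$, and then anchoring the global-phase stochastic gradients $\hat{g}_n\left(\boldsymbol{\theta}_n^{t+\frac12,i}\right)$ to $\tilde{g}_n\left(\boldsymbol{\theta}_n^{t,0}\right)$ through its drift lemma (Lemma~\ref{lem:one-round-update}); the constant $2L_{\mathrm{v}}^2+L_{\mathrm{w}}^2$ there, hence the $\sqrt{3L_{\mathrm{v}}^2+L_{\mathrm{w}}^2}$ inside $L$, arises precisely from carrying the local-phase drift into the global phase. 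Without this anchored cross-phase pairing (or some substitute bound on $\hat{g}_n-\tilde{g}_n$ at the anchor), your descent chain does not close, even though your Step 1 (the constants $\sigma_{\mathrm{v}}^2=2\sigma_{\mathrm{l}}^2+2\sigma_{\mathrm{rl}}^2$, $\sigma_{\mathrm{u}}^2=N\sigma_{\mathrm{n}}^2$ and the smoothness constants) and your final telescoping step coincide with the paper's. One further caution: your proposed fix for the min-max coupling, treating $(\boldsymbol{\Theta},\boldsymbol{w})$ as a single stacked variable with the joint constant $L$, is the right instinct, but note that the paper's drift recursions for $\boldsymbol{\theta}_n$ and $\boldsymbol{w}$ are in fact run separately, with the coupling absorbed into the smoothness constants rather than handled jointly, so simply citing the stacked variable does not by itself supply the missing argument.
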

We have included the proof of Theorem~\ref{thm:convergence} in our supplementary material. Briefly, we demonstrate the Lipschitz smoothness and bounded variance of client and discriminator objectives, considering various loss function components. We then examine the gradient changes for both clients and the discriminator across each training round, factoring in multiple local iterations. Unlike typical FL algorithms, our \texttt{FedAL} approach integrates local training with global KD for convergence analysis. This integration poses analytical challenges as the global KD process for each client is influenced by their local training outcomes. To address this, we analyze gradient changes post-local training and global KD from the end of the last training round. We then calculate the convergence rate of all clients' local models and the discriminator by summing up the gradient changes over several training rounds.

We have the following observations from  Theorem~\ref{thm:convergence}:
\begin{itemize}
    \item \textbf{Composition of convergence error upper bound}: In Theorem~\ref{thm:convergence}, we analyze the upper bound for the accumulated gradient norm of all clients and the discriminator. The right-hand side of \eqref{eq:convergence-error} reveals that the convergence error of \texttt{FedAL} correlates with two factors: the initialization error (the first term of \eqref{eq:convergence-error}) and the error stemming from variance in stochastic gradient steps and multiple local iterations (the second term of \eqref{eq:convergence-error}).
    \item \textbf{Convergence rate of \texttt{FedAL}}: Based on Theorem~\ref{thm:convergence}, if we let $\eta_l^t=\eta_l^d=\frac{1}{\sqrt{T\tau}}$, the above theorem gives a convergence rate of $O\left(\frac{1}{\sqrt{T\tau}}\right)$. This suggests that as the number of training rounds $T$ approaches infinity, the gradient norm of \texttt{FedAL} approaches zero, signifying algorithm convergence.
    \item \textbf{Impact of stochastic gradient updates}: Assumption \ref{asm:bounded_variance_o} characterizes the bound for the variance of clients' stochastic gradients,  denoted by $\sigma$. Theorem~\ref{thm:convergence} demonstrates that the convergence error increases with the variance $\sigma$.
    \item \textbf{Choose of learning rate}:  As the number of local iterations $\tau$ increases, it is beneficial to use a smaller learning rate to optimize convergence.
\end{itemize}

\section{Experimental Results}\label{sec:experimentation}

\subsection{Setup}

\textbf{Datasets and models.} 
The datasets include MNIST, 
SVHN~\cite{37648}, CIFAR-10~\cite{CIFAR10}, CINIC-10~\cite{darlow2018cinic}, and CelebA. 
For SVHN, CIFAR-10, and CINIC-10, we split the whole training dataset for clients using Dirichlet distribution~\cite{lin2020ensemble} with a parameter $\alpha$. 
The Dirichlet distribution, commonly employed to model non-IID client distributions, has been utilized in various studies \cite{lin2020ensemble,hsu2019measuring,yurochkin2019bayesian}. To implement this, we first generate a distribution of data across different classes, denoted as $q_n$, for each client $n$ using the Dirichlet distribution with parameter $\alpha$, where $q_n \in \mathbb{R}^K$. Subsequently, we select data samples for different classes of the client based on the distribution $q_n$.
Figure ~\ref{fig:svhn_data} illustrates the data distributions of SVHN among clients for different Dirichlet distributions with $\alpha$ equal to 5, 2, and 1. 
For each subfigure of Figure ~\ref{fig:svhn_data}, we only illustrate the data of 10 clients for presentation clarity, though more clients are evaluated in the experiments. A larger circle corresponds to more data samples.  Apparently, a lower $\alpha$ corresponds to a higher heterogeneity of clients' local data.  For CelebA, we use the original data partition in the LEAF benchmark as the local datasets of clients and train a CNN model for each client.

For neural network models, we use LeNet5 for MNIST,
WResNet-10-1, WResNet-10-2, and WResNet-16-1 for SVHN, ResNet18 and ResNet34 for CIFAR-10 and CINIC-10, and CNN for CelebA. We train 10, 20, 15, and 10 clients for MNIST, SVHN, CIFAR-10, and CINIC-10, respectively. We vary the number of clients for CelebA. Each client randomly selects a local model architecture from the candidate models of the corresponding dataset.
The discriminator has two fully-connected layers with 32 and 265 neurons.

\textbf{Baselines.} 
We consider two different settings. First, we consider the case where each client's model is a black box and not shared with other clients. In this case, we compare \texttt{FedAL} with 
\texttt{FedMD}~\cite{li2019fedmd}, \texttt{FedProto} \cite{tan2021fedproto},  \texttt{CFD}~\cite{sattler2020communication}, \texttt{MHAT}~\cite{hu2021mhat}, \texttt{FCCL}~\cite{huang2022learn}, and \texttt{FCCL+}~\cite{huang2023generalizable}, where \texttt{CFD} is the same as \texttt{MHAT}.
In \texttt{FedProto}, clients and the server conduct KD using the intermediate features before the fully-connected layer of client models, instead of model outputs in \texttt{FedMD}. For fair comparison, we adapt \texttt{FedProto} to incorporate unlabeled public data.
Note that \texttt{FedProto} only works for heterogeneous models with the same feature dimension. 
Second, we consider the case where clients' model parameters are allowed to be shared with other clients. 
In this case, it is possible to aggregate those client models that have identical architecture using parameter averaging (as in \texttt{FedAvg}). Simultaneously, KD is used to achieve knowledge transfer among models with different architectures. 
A baseline method that achieves this without our AL component is known as \texttt{FedDF}~\cite{lin2020ensemble}. For our \texttt{FedAL} method, we extend it to support parameter averaging among models with identical architecture and refer to this extended \texttt{FedAL} method as \texttt{FedDF-AL}.
In addition, we also consider a baseline in our ablation study, referred to as \texttt{FedMD-LF}, which applies the LF regularization in \texttt{FedMD} but does not include the AL component of \texttt{FedAL}.

\textbf{Hyper-parameter settings.}
We optimize $\eta_{l}, \eta_{d}$, $\tau$, and the temperature $E$ of discriminator's input $p_n\left(\boldsymbol{x},\boldsymbol{\theta}_n\right)$ by grid search.
In the grid search of hyper-parameters, we first find the best performing $\eta_{l}$ and $\tau$ by grid-search tuning. Based on the grid search results of $\eta_{l}$ and $\tau$, we further search for $\eta_{d}$ and $E$ for our proposed \texttt{FedAL} and its variants. The grid search is done by running algorithms for 120 rounds and finding the parameters with the highest training accuracy. Specifically, the grid for $\eta_{l}$ and $\eta_{d}$ are $\left\{10^{-4},10^{-3},10^{-2},10^{-1}\right\}$. The grid for $\tau$ is $\left\{1,5,10,15,20\right\}$. The grid for $E$ is $\left\{1,2,4,8\right\}$. 

The learning rate of client local models are $\eta_{l}=0.0001$ for CelebA and $\eta_{l}=0.001$ for other datasets. 
The learning rate for the discriminator is $\eta_{d}=0.0001$ for all datasets. For \texttt{FedMD} and \texttt{FedProto}, the number of consecutive iterations in each local training and global knowledge transfer stage is $\tau=1$. For \texttt{FedAL}, we have $\tau=5$. For \texttt{FedDF} and \texttt{FedDF-AL}, we have $\tau=10$, and $\tau=15$, respectively. The temperature of $p_n\left(\boldsymbol{x},\boldsymbol{\theta}_n\right)$ for discriminator's input for \texttt{FedAL} and \texttt{FedDF-AL} is $E=2$.
We use Adam optimizer~\cite{2014Adam} for training all client models. 
The batch size for the local training and global knowledge transfer is 32.
By default, we set the number of public data as $\left|\mathcal{P}\right|=1000$ and the temperature in KL divergence as $E=1$.

\textbf{Experimental environment.} We build up a Federated KD hardware prototype system to evaluate the performance of our \texttt{FedAL} algorithm on MNIST dataset. The prototype system, shown in Fig.~\ref{fig:platform}, includes 10 RaspberryPis, representing edge devices, and a server equipped with GPU RTX 3090. The communication between RaspberryPis with the server is WiFi.
For other datasets including SVHN, CIFAR-10, CINIC-10, CelebA, we conduct simulations on a server with GPU RTX 3090.

\begin{figure}[t]
 \centering \includegraphics[width=0.15\textwidth]{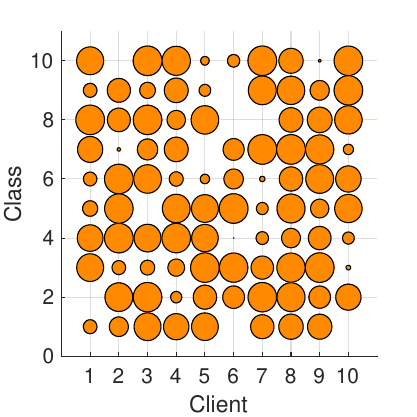}
\centering \includegraphics[width=0.15\textwidth]{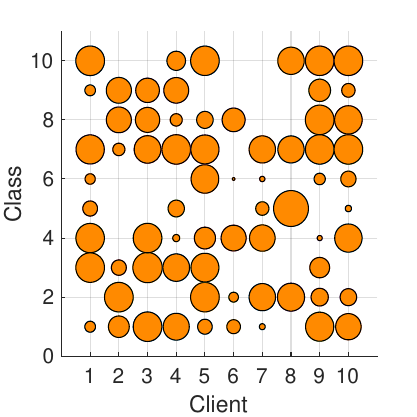}
 \centering \includegraphics[width=0.15\textwidth]{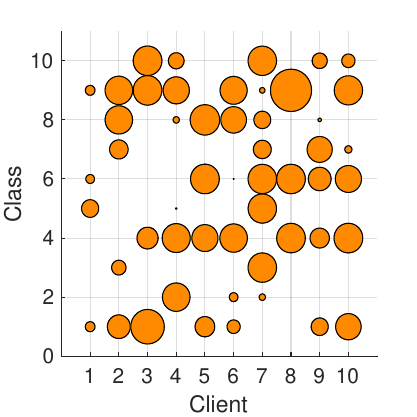}
\caption{Data distributions of SVHN for different $\alpha$, Left to right: $\alpha= 5, 2,$, and 1.}\label{fig:svhn_data}
\end{figure}

\begin{figure}[t]
 \centering \includegraphics[width=0.45\textwidth]{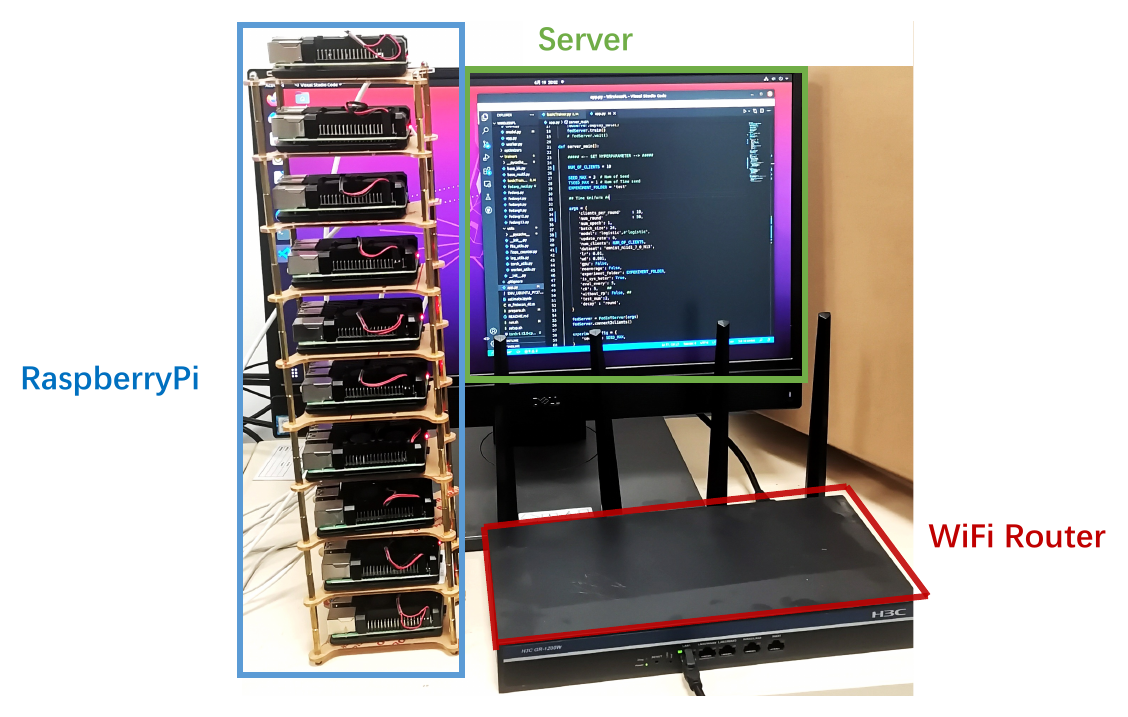}
\caption{Federated KD prototype system.}\label{fig:platform}
\end{figure}

\begin{figure}[t]
 \centering \includegraphics[width=0.3\textwidth]{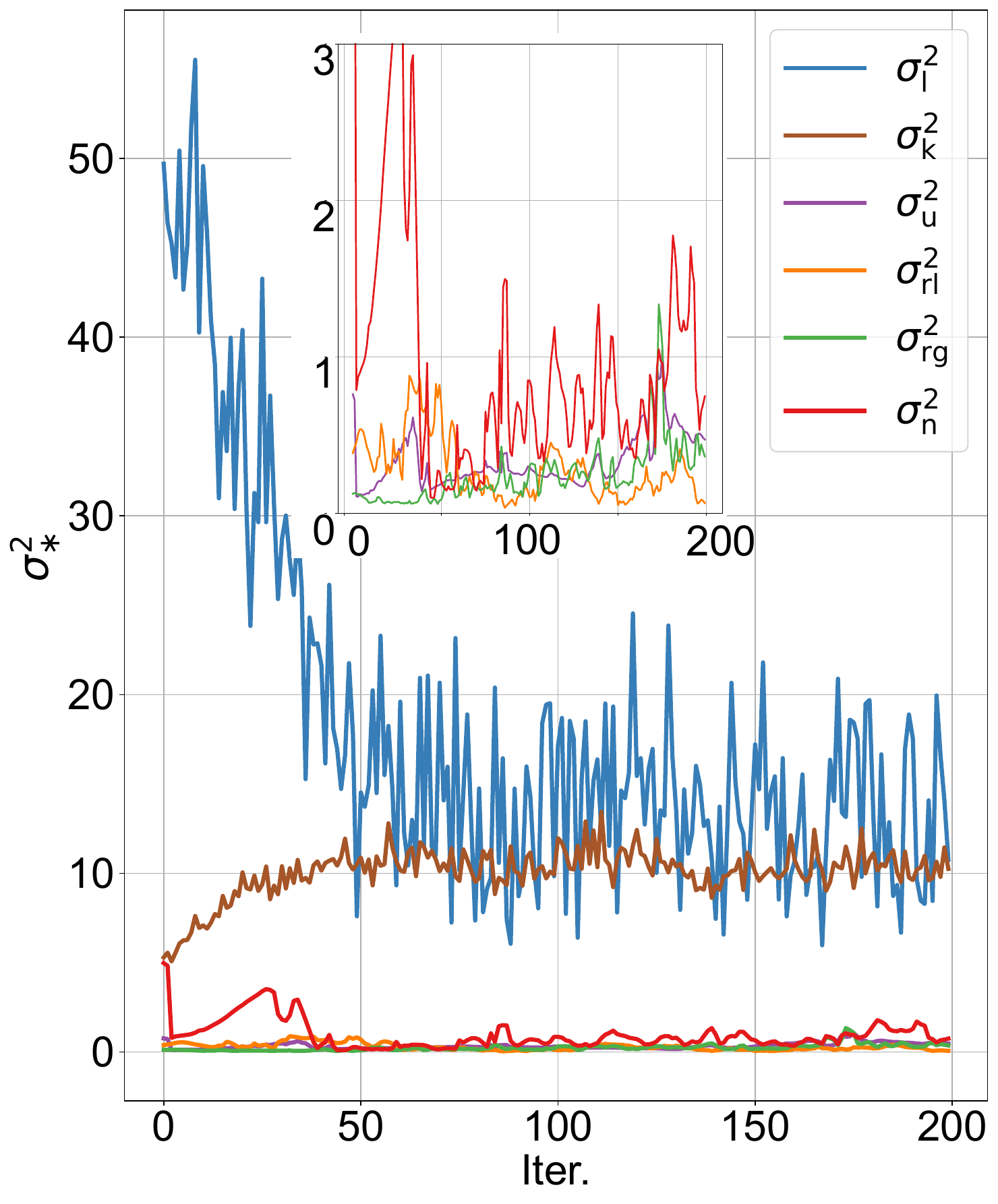}
\caption{Justification of Assumption \ref{asm:bounded_variance_o} on SVHN.}\label{fig:grad}
\end{figure}

\begin{figure}[t!]
\centering
	{
\includegraphics[width=0.38\linewidth]{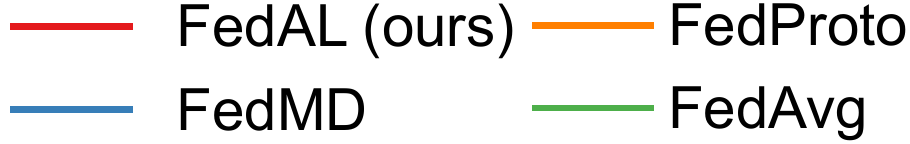}
 \includegraphics[width=0.3\linewidth]{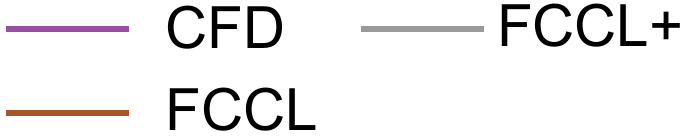}
 		\centering \includegraphics[width=0.3\linewidth]{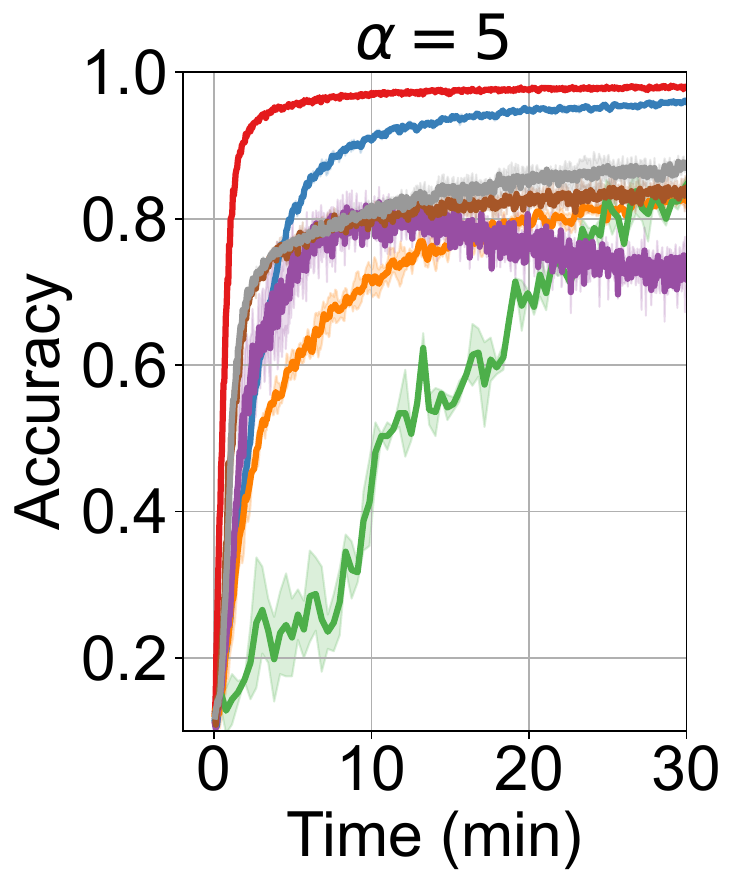}\centering \includegraphics[width=0.3\linewidth]{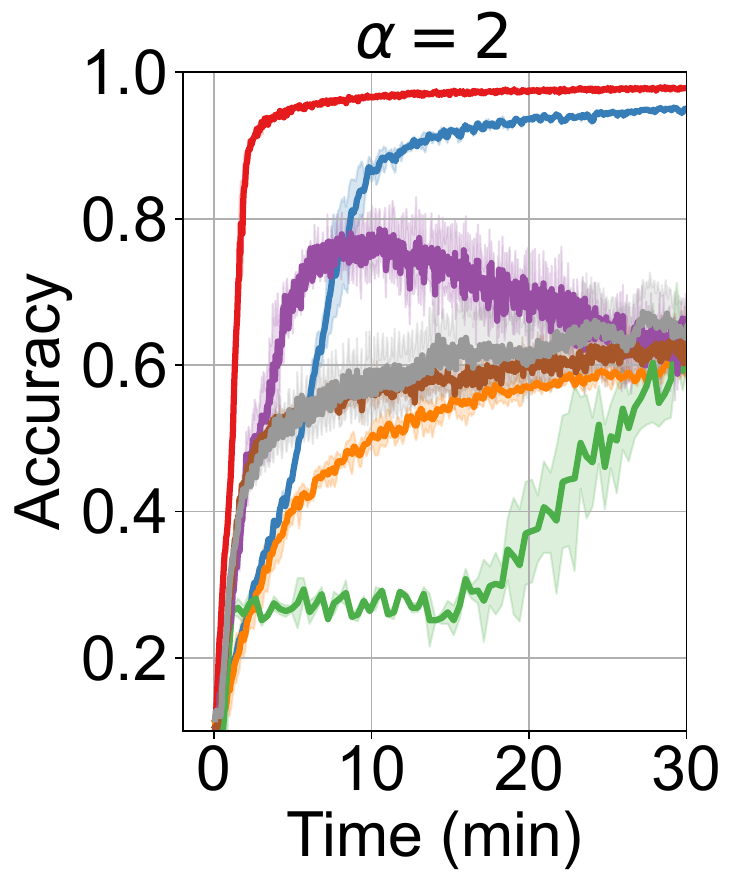}
        \includegraphics[width=0.3\linewidth]{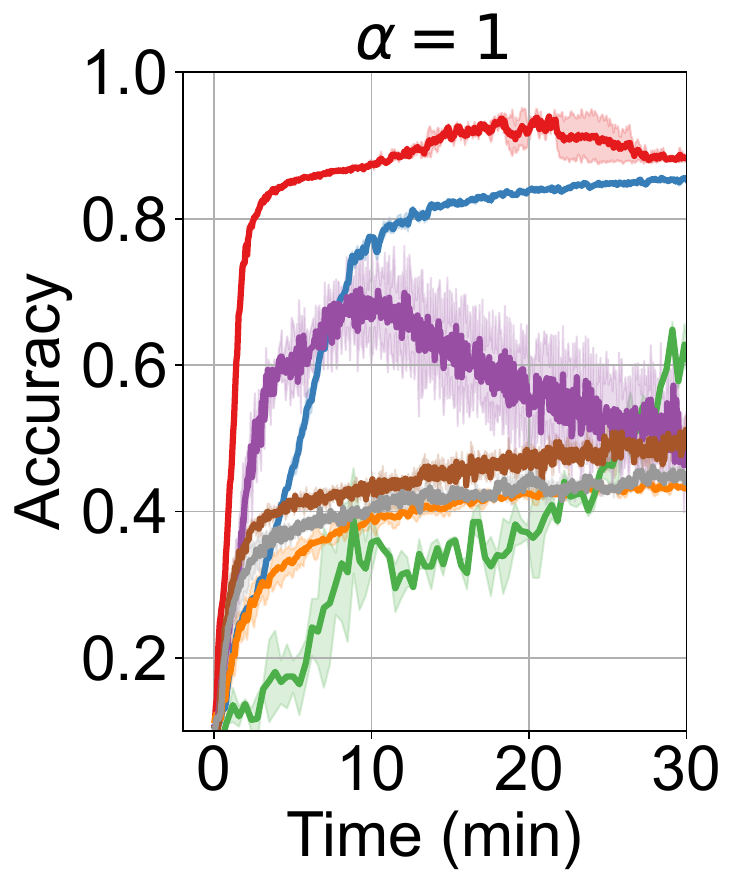}
        \caption{Accuracy vs. time of MNIST under different $\alpha$  (shareable client models).} 		\label{fig:mnist}
}
\end{figure}

\begin{figure}[t!]
\includegraphics[width=0.5\linewidth]{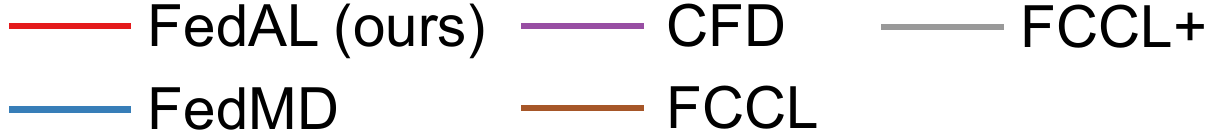}
	\centering
	{\centering \includegraphics[width=0.3\linewidth]{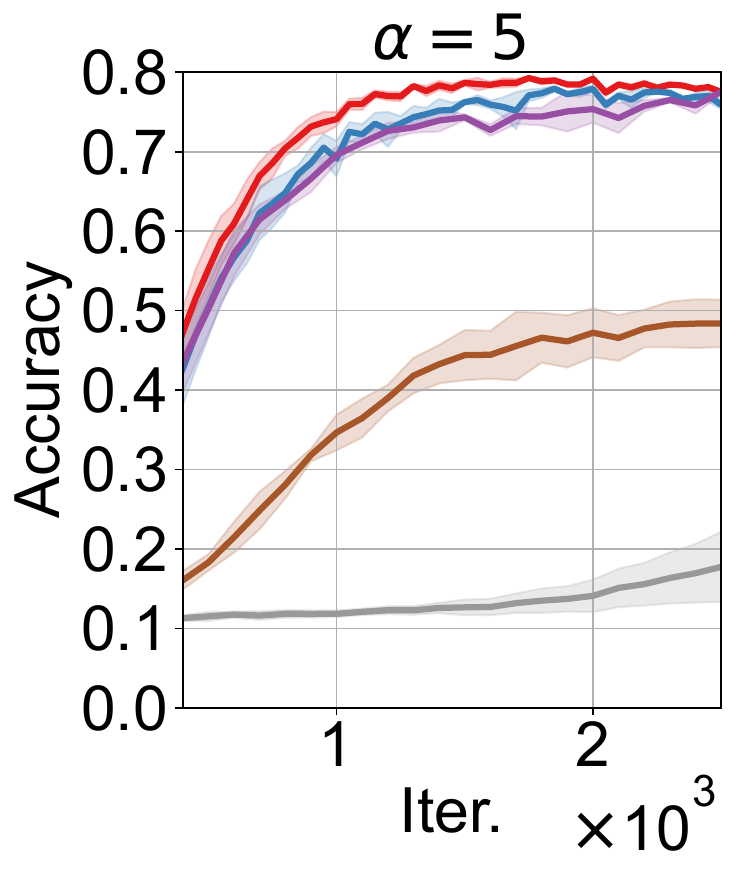}\includegraphics[width=0.3\linewidth]{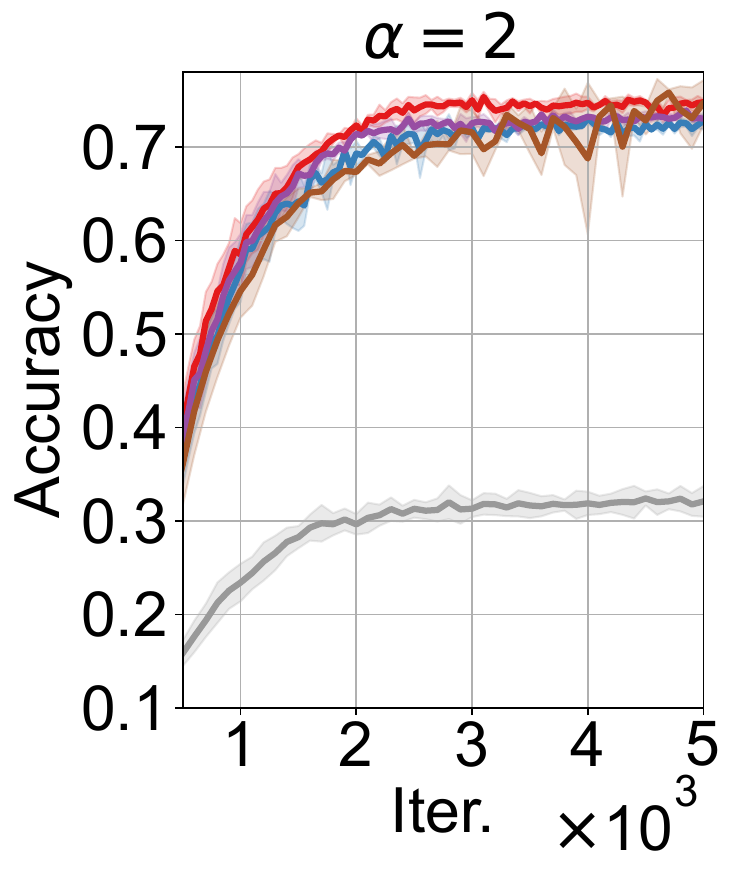}\centering \includegraphics[width=0.3\linewidth]{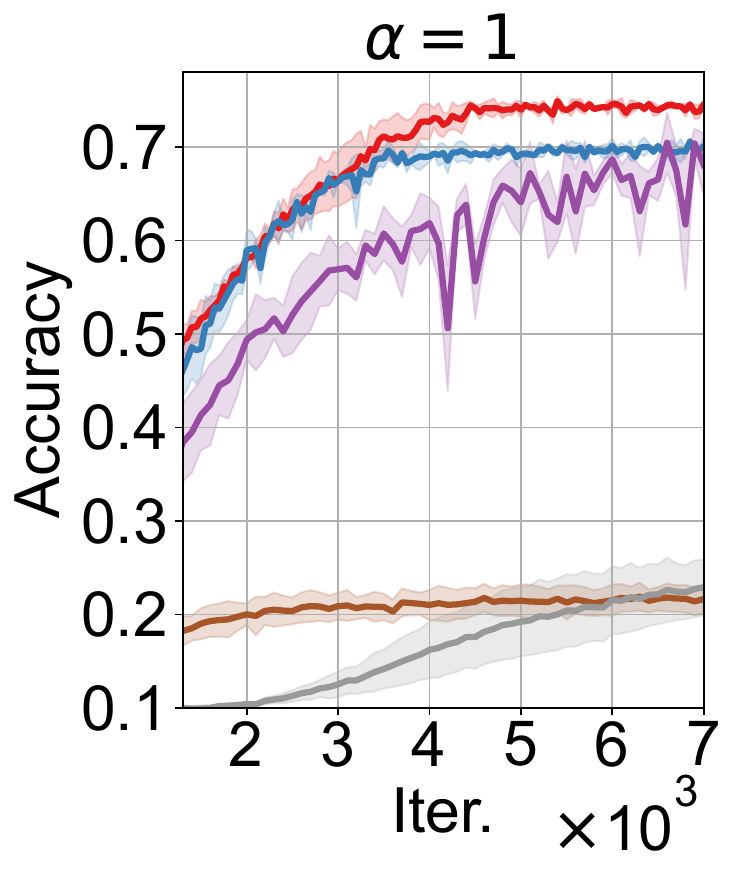}
 \caption{Accuracy vs. number of training iterations of SVHN under different $\alpha$ (black-box, i.e., non-shareable, client models).}	\label{fig:svhn}
}
\end{figure}

\begin{figure}[t!]
	\centering
	{
 \includegraphics[width=0.52\linewidth]{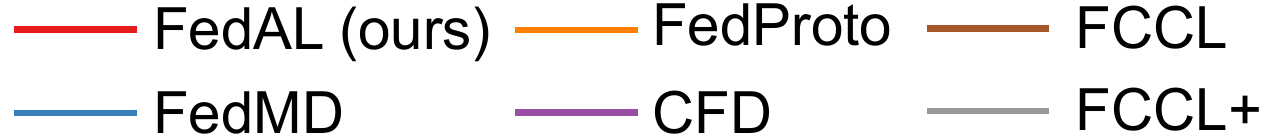}
 
		\centering \includegraphics[width=0.3\linewidth]{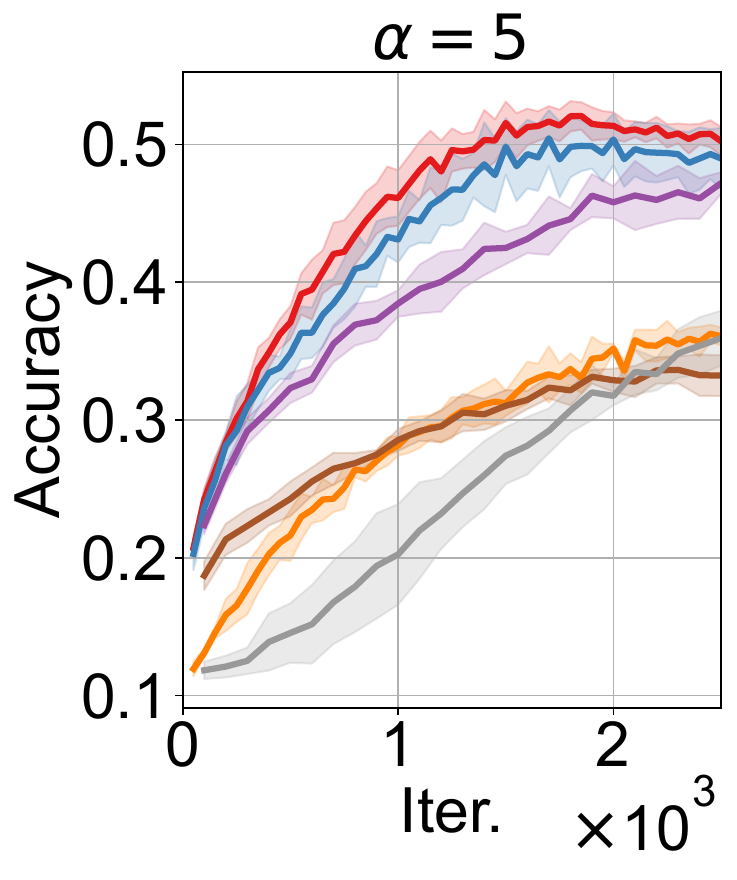}\includegraphics[width=0.3\linewidth]{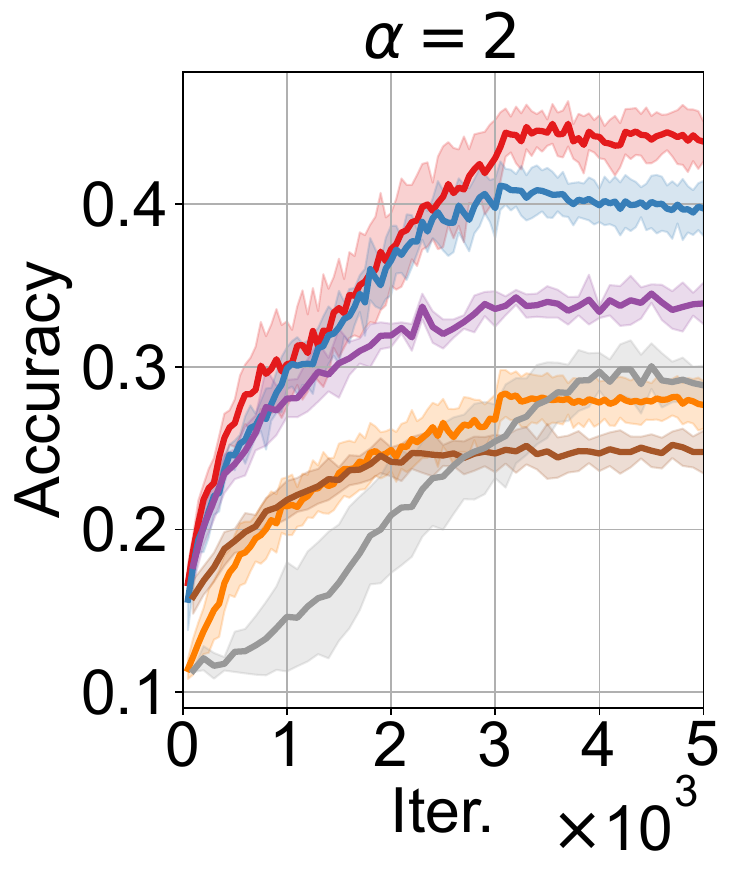}
        \includegraphics[width=0.3\linewidth]{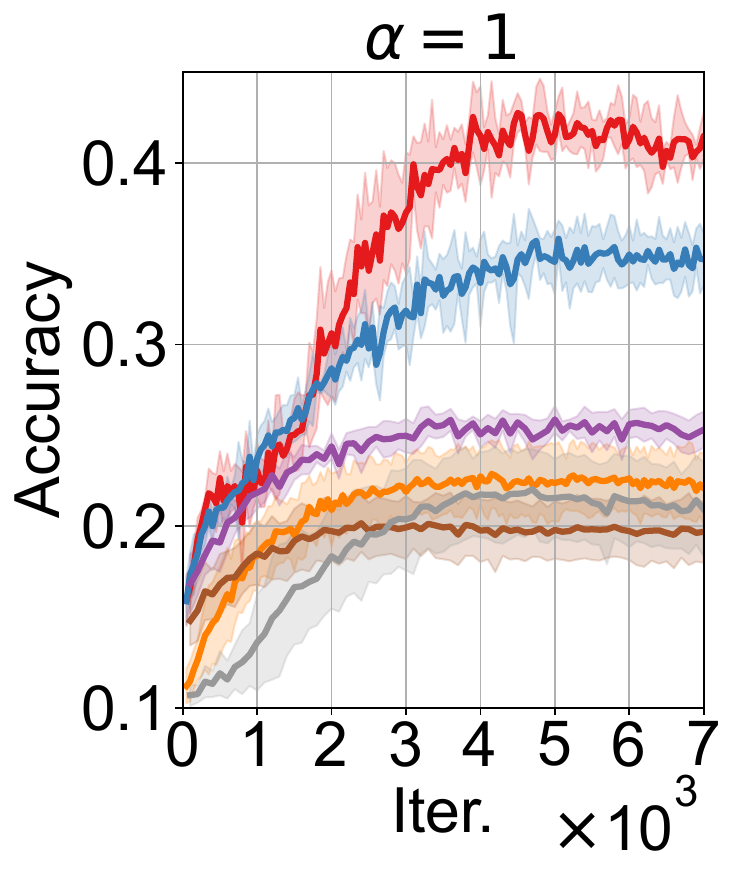}
        \caption{Accuracy vs. number of training iterations of CIFAR-10 under different $\alpha$ (black-box, i.e., non-shareable, client models).} \label{fig:cifar}
}
\end{figure}

\subsection{Results}\label{subsec:perm_homo}
\textbf{Justification of Assumptions.}
Most assumptions in this paper align with existing works. However, due to the introduction of new loss functions in our proposed \texttt{FedAL} algorithm, we substantiate the establishment of upper bounds on gradient variance for each loss function in Assumption \ref{asm:bounded_variance_o}. Figure \ref{fig:grad} illustrates the values of the variance between stochastic gradients and real gradients, denoted as $\sigma_{\rm l}^2$, $\sigma_{\rm k}^2$, $\sigma_{\rm u}^2$, $\sigma_{\rm rl}^2$, $\sigma_{\rm rg}^2$, and $\sigma_{\rm n}^2$, on the SVHN dataset. Notably, all stochastic gradient variances are confined to values smaller than 60. Particularly for the introduced loss functions in \texttt{FedAL}, namely $\sigma_{\rm u}^2$, $\sigma_{\rm rl}^2$, $\sigma_{\rm rg}^2$, and $\sigma_{\rm n}^2$, their stochastic gradient variances are limited to values smaller than 3. Hence, the feasibility of the assumption is empirically justified.

\textbf{Performance for black-box models.}
When clients have black-box models,  Figure~\ref{fig:mnist} depicts the accuracy of \texttt{FedAL}, \texttt{FedMD}, \texttt{FedProto}, and \texttt{FedAvg} under different heterogeneity of clients' local data, i.e., $\alpha= 5, 2,$ and 1 for MNIST. Here, we set all clients to have the same model architecture such that we can make a comparison with \texttt{FedAvg}. We can observe that \texttt{FedAL} achieves higher accuracy than other algorithms under the same running time.

Figure~\ref{fig:svhn} shows the accuracy of \texttt{FedAL} and other comparing algorithms for SVHN dataset under different heterogeneity. We can observe that \texttt{FedAL} outperforms \texttt{FedMD}, \texttt{CFD}, \texttt{FCCL}, and \texttt{FCCL+} for varying $\alpha$. Here, \texttt{FedProto} is not applicable for the model architectures used in SVHN, i.e., WResNet-10-1, WResnet-10-2, and WResNet-16-1, because they have different dimensions of features before the fully-connected layer.
For CIFAR-10, Figure~\ref{fig:cifar} depicts the accuracy of different algorithms under different data heterogeneity levels $\alpha=5, 2,$ and 1, where \texttt{FedAL} achieves higher accuracy than \texttt{FedMD}, \texttt{CFD}, \texttt{FCCL}, \texttt{FCCL+} and \texttt{FedProto} and is more robust across different values of $\alpha$. 
In addition, we have run the experiments on the CelebA and CINIC-10 datasets with 10,000 iterations. Table \ref{tab:rst_celeba} presents the model accuracy of different methods for CelebA. It demonstrates that \texttt{FedAL} can achieve a higher model accuracy compared to other algorithms across different numbers of clients $N$. 
Table \ref{tab:rst_cinic10} demonstrates the model accuracy of different methods on CINIC-10  under varying levels of data heterogeneity. We have the same observations and conclusions for CINIC-10 as other datasets.

To evaluate the performance of \texttt{FD} compared with our \texttt{FedAL}, we have conducted experiments on the SVHN and CIFAR-10 datasets under varying levels of clients' data heterogeneity. For a fair comparison with our existing results in Figures \ref{fig:svhn}-\ref{fig:svhndf}, we trained models with 3000, 5000, and 7000 iterations for $\alpha = 5, 2,$ and 1, respectively. We present the model accuracy of \texttt{FD} compared with \texttt{FedMD}, \texttt{FedProto}, and our \texttt{FedAL} in Tables \ref{tab:rst_fd} and \ref{tab:rst_fd_cifar10}. For FD, unfortunately, the authors of the paper did not make their code available. As a result, we were unable to replicate their reported results and can only achieve a low accuracy for FD.

{
\begin{table}[t] \centering
    \caption{Results of CelebA for 10, 000 iterations. }
    \label{tab:rst_celeba}
        \begin{center}
            \begin{small}
	\begin{tabular}{m{1.8cm}m{1.75cm}m{1.75cm}m{1.75cm}}
		\hline
   Method    &   $N=$ 20  &   $N=$ 50  &   $N=$ 100  \tabularnewline
\hline
\texttt{FedProto}&0.51 $\pm$ 0.005&0.53 $\pm$ 0.02&0.52 $\pm$ 0.008\tabularnewline
\texttt{FedMD}&0.51 $\pm$ 0.02&0.64 $\pm$ 0.01&0.66 $\pm$ 0.01\tabularnewline
\texttt{CFD}&0.49 $\pm$ 0.09&0.53 $\pm$ 0.15&0.55 $\pm$ 0.17\tabularnewline
\texttt{FCCL}&0.49 $\pm$ 0.08&0.57 $\pm$ 0.13&0.56 $\pm$ 0.15\tabularnewline
\texttt{FCCL+}& 0.53$\pm$ 0.10&0.54 $\pm$ 0.13&0.54 $\pm$ 0.13\tabularnewline
\texttt{FedAL} (ours)&\textbf{0.56} $\pm$ 0.03&\textbf{0.69} $\pm$ 0.02&\textbf{0.70} $\pm$ 0.01\tabularnewline
		\hline
	\end{tabular}
\end{small}
\end{center}
\end{table}
~
\begin{table}[t!] \centering
    \caption{Results of CINIC-10 for 10, 000 iterations. }
    \label{tab:rst_cinic10}
        \begin{center}
            \begin{small}
	\begin{tabular}{m{1.8cm}m{1.75cm}m{1.75cm}m{1.75cm}}
		\hline
   Method    &   $\alpha=$ 5  &   $\alpha=$ 2 &   $\alpha=$ 1 \tabularnewline
\hline
\texttt{FedProto}&0.43 $\pm$ 0.02&0.34 $\pm$ 0.03&0.25 $\pm$ 0.02\tabularnewline
\texttt{FedMD}&0.54 $\pm$ 0.02&0.43 $\pm$ 0.02&0.31 $\pm$ 0.04\tabularnewline
\texttt{CFD}&0.49$\pm$ 0.04& 0.38$\pm$ 0.03& 0.29$\pm$ 0.01\tabularnewline
\texttt{FCCL}& 0.37$\pm$ 0.05& 0.29$\pm$ 0.04&0.25 $\pm$ 0.03\tabularnewline
\texttt{FCCL+}& 0.41$\pm$ 0.05& 0.31$\pm$ 0.03&0.27 $\pm$ 0.01\tabularnewline
\texttt{FedAL} (ours)&\textbf{0.56} $\pm$ 0.02&\textbf{0.45} $\pm$ 0.03&\textbf{0.35} $\pm$ 0.05\tabularnewline
		\hline
	\end{tabular}
\end{small}
\end{center}
\end{table}
~
\begin{table}[t] \centering
    \caption{Results of FD for SHVN dataset. }
    \label{tab:rst_fd}
        \begin{center}
            \begin{small}
	\begin{tabular}{p{1.78cm}p{1.68cm}p{1.83cm}p{1.83cm}}
		\hline
    Method    &   $\alpha=$ 5  &   $\alpha=$ 2 &   $\alpha=$ 1 \tabularnewline
\hline
 \texttt{FD}          & 0.16 $\pm$ 0.01  & 0.14 $\pm$ 0.005  & 0.12 $\pm$ 0.004 \tabularnewline
 \texttt{FedMD} & 0.76 $\pm$ 0.006 & 0.73 $\pm$ 0.0009 & 0.71 $\pm$ 0.0008 \tabularnewline
\texttt{FedAL} (ours) & \textbf{0.78} $\pm$ 0.003 & \textbf{0.75} $\pm$ 0.0003 & \textbf{0.75} $\pm$ 0.0009 \tabularnewline
		\hline
	\end{tabular}
\end{small}
\end{center}
\end{table}
~
\begin{table}[t] \centering
    \caption{Results of FD for CIFAR-10 dataset. }
    \label{tab:rst_fd_cifar10}
        \begin{center}
            \begin{small}
	\begin{tabular}{p{1.78cm}p{1.68cm}p{1.83cm}p{1.83cm}}
		\hline
    Method    &   $\alpha=$ 5  &   $\alpha=$ 2 &   $\alpha=$ 1 \tabularnewline
\hline
 \texttt{FD}          &  0.15 $\pm$ 0.02 & 0.17 $\pm$ 0.007& 0.17 $\pm$ 0.02  \tabularnewline
\texttt{FedProto} & 0.38 $\pm$ 0.009 & 0.28 $\pm$ 0.01& 0.22 $\pm$ 0.02 \tabularnewline
\texttt{FedMD} & 0.49 $\pm$ 0.02 & 0.40 $\pm$ 0.02& 0.35 $\pm$ 0.01 \tabularnewline
\texttt{FedAL} (ours) & \textbf{0.51} $\pm$ 0.008 & \textbf{0.44} $\pm$ 0.01 & \textbf{0.41} $\pm$ 0.01 
\tabularnewline
		\hline
	\end{tabular}
\end{small}
\end{center}
\end{table}
}

\textbf{Performance for shareable models.} 
When clients agree to share their model parameters with the server, Figure~\ref{fig:svhndf} shows the comparison between vanilla \texttt{FedDF} and our \texttt{FedDF-AL}. 
We observe that \texttt{FedDF-AL} generally achieves higher model accuracy under different heterogeneity levels. When the local data of clients have high heterogeneity, e.g., $\alpha=1$, the accuracy of the proposed AL mechanism changes slowly at the beginning of model training. However, clients can have higher final model accuracy because our proposed AL mechanism alleviates the bias of clients toward different classes of data and facilitates knowledge transfer among clients.

\textbf{Performance of communication overheads.}
We evaluate the communication overhead, quantified by the volume of upstream transmitting data, across various algorithms. Fig. \ref{fig:svhncomm} depicts the accuracy concerning communication overhead for the SVHN dataset across different levels of data heterogeneity ($\alpha=5, 2$, and 1). We simulate 20 clients using homogeneous WResNet10-1 models to ensure the feasibility of all comparing algorithms.
Notably, \texttt{FedAL} induces significantly lower (two orders of magnitude) communication overhead than \texttt{FedAvg} to reach the same accuracy. Furthermore, despite incurring similar communication overhead levels, \texttt{FedAL} achieves higher accuracy than \texttt{FedMD}, \texttt{FedProto}, \texttt{CFD}, \texttt{FCCL}, and \texttt{FCCL+}.

\textbf{Ablation study.}
We conduct an ablation study to verify the usefulness of all the components in the optimization objectives of \texttt{FedAL}.
We note that \texttt{FedAL} without the AL component is equivalent to \texttt{FedMD-LF}, and \texttt{FedAL} with neither the AL component or LF regularization is equivalent to \texttt{FedMD}. For SVHN under $\alpha=2$ with 5000 iterations, 
we can observe from Table~\ref{tab:ablation} that, both LF regularization and the AL process contribute to a higher model accuracy for \texttt{FedAL}.

In addition, we evaluate the impact of public data on the superiority of \texttt{FedAL}. Table \ref{tab:ablation_pub} presents a comparison of the test accuracy between \texttt{FedMD} and \texttt{FedAL} across different numbers of public data samples, utilizing SVHN with $\alpha=2$ for 5000 iterations. We observe that while the number of public data samples significantly affects test accuracy, our proposed \texttt{FedAL} algorithm consistently outperforms \texttt{FedMD}.

\begin{figure}[t!]
\centering
	{
		\centering \includegraphics[width=0.315\linewidth]{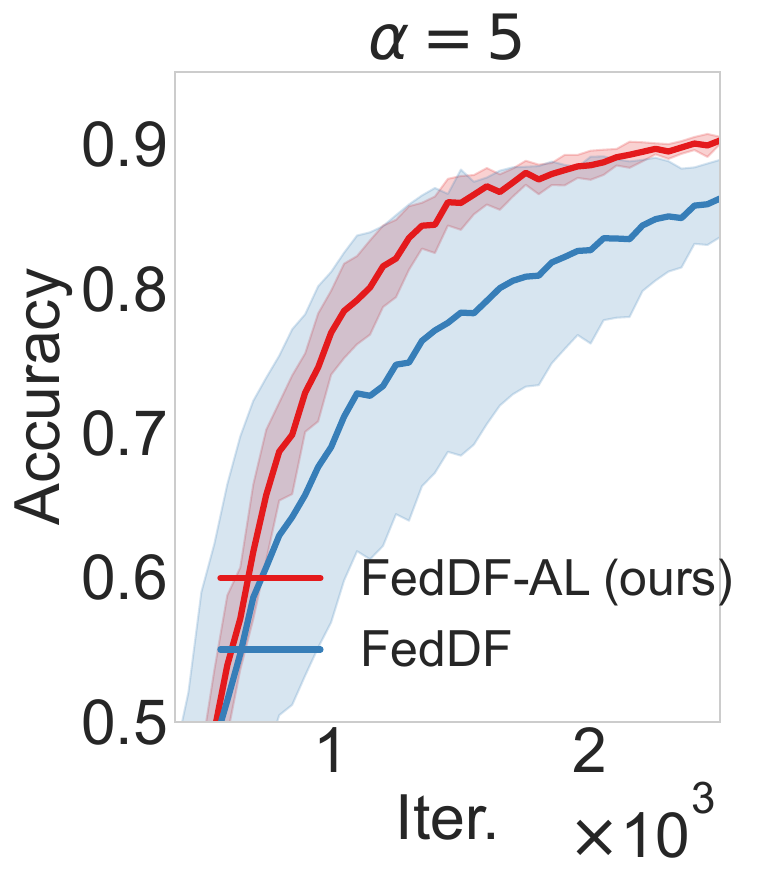}\centering \includegraphics[width=0.3\linewidth]{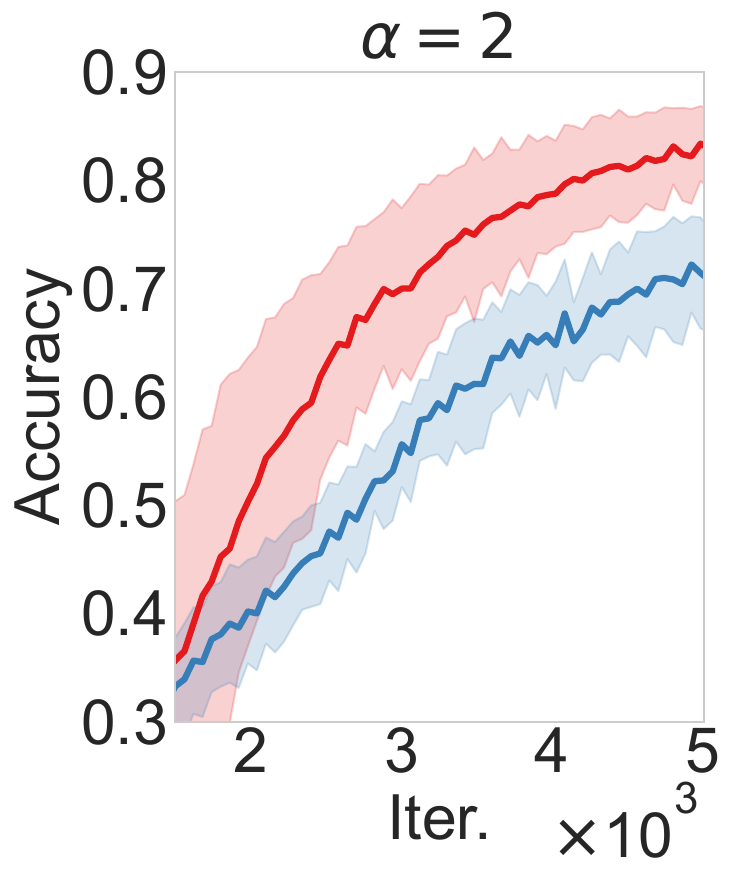}
        \includegraphics[width=0.3\linewidth]{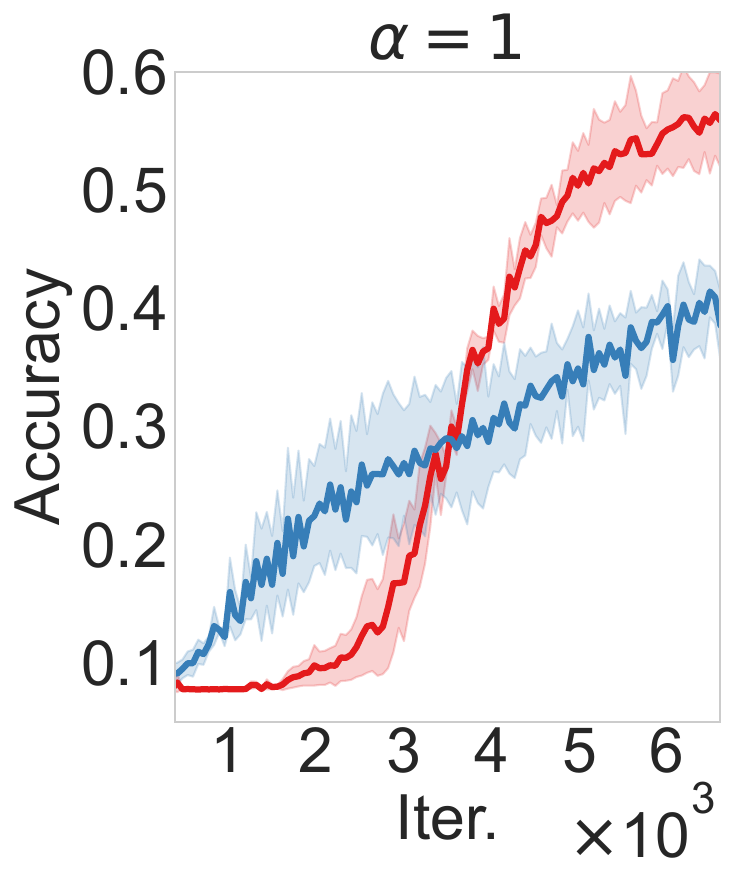}
        \caption{Accuracy vs. number of training iterations of SVHN under different $\alpha$  (shareable client models).} 		\label{fig:svhndf}
}
\end{figure}
\begin{table}[t] \centering
    \caption{Ablation study of loss components on SVHN with $\alpha=2$ for 5000 iterations. }
    \label{tab:ablation}
        \begin{center}
            \begin{small}
            \begin{sc}
	\begin{tabular}{p{1.2cm}p{1.75cm}p{1.75cm}p{1.75cm}}
		\hline
		Method&\texttt{FedMD}  & \texttt{FedMD-LF} & \texttt{FedAL}\tabularnewline
  \hline
  Accuracy & 0.722$\pm$0.005 & 0.732$\pm$0.004 & 0.744$\pm$0.007 \tabularnewline
		\hline
	\end{tabular}
\end{sc}
\end{small}
\end{center}
\end{table}

\begin{figure}[t!]
\centering
	{
 \includegraphics[width=0.38\linewidth]{figure/homo_svhn_legend1.pdf}
 \includegraphics[width=0.3\linewidth]{figure/homo_svhn_legend2.pdf}
 
		\centering \includegraphics[width=0.3\linewidth]{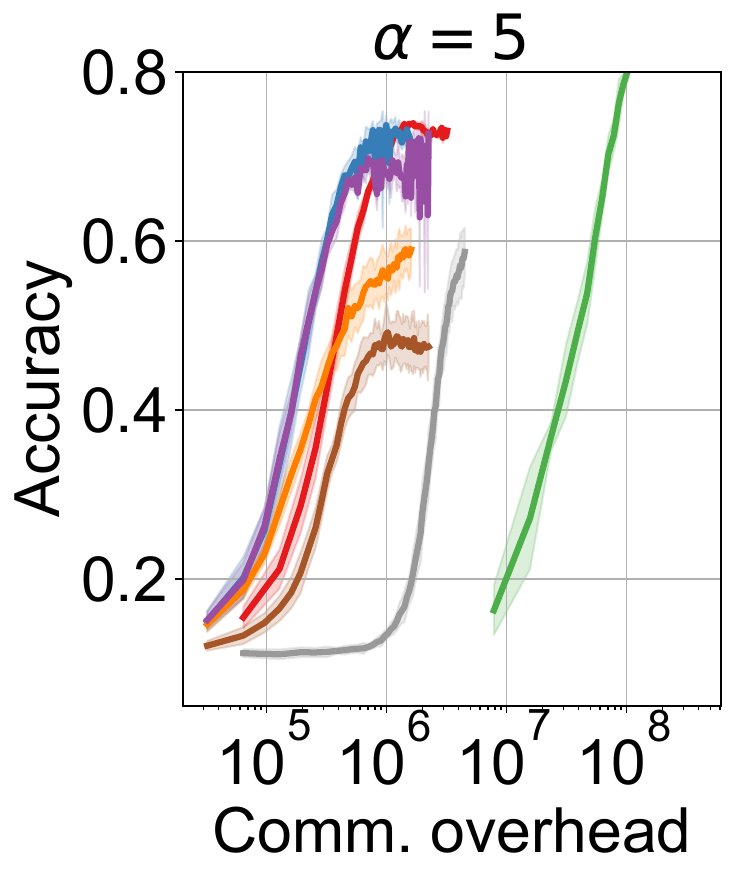}\centering \includegraphics[width=0.3\linewidth]{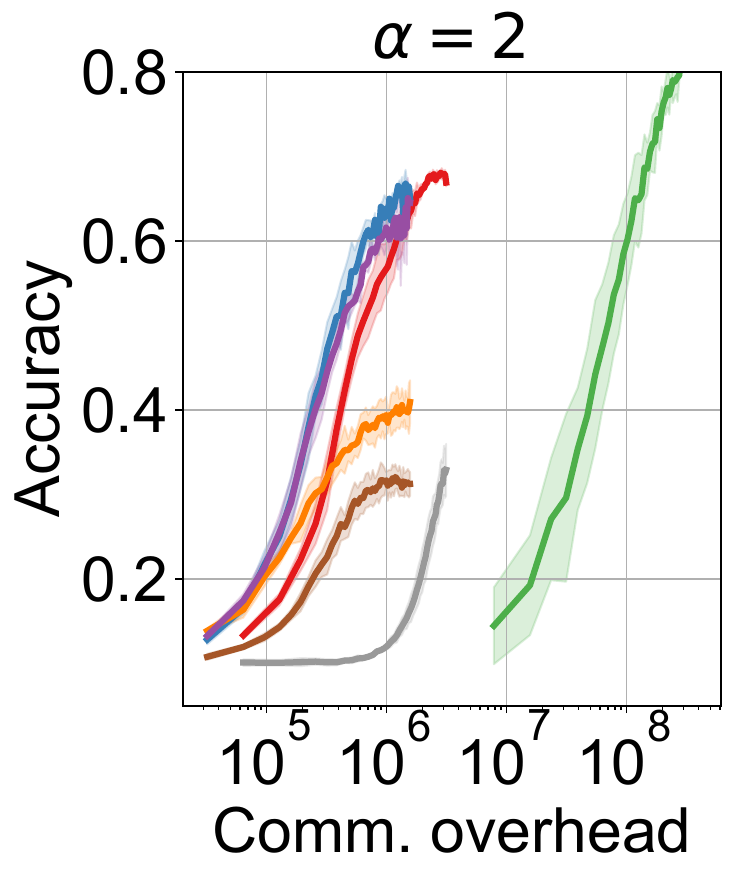}
        \includegraphics[width=0.3\linewidth]{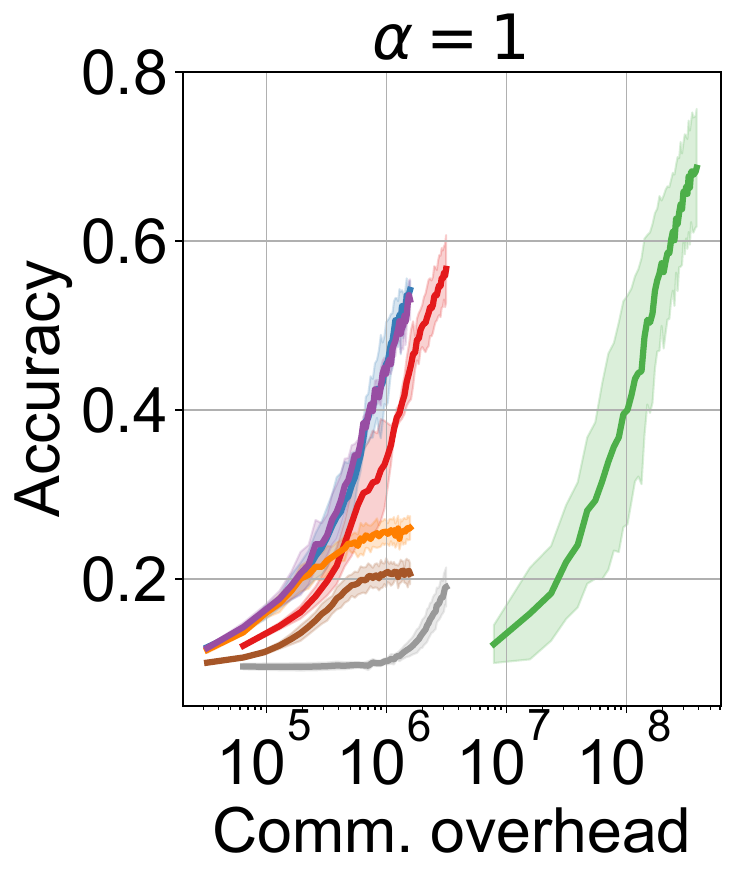}
        \caption{Accuracy vs. communication overhead of SVHN under different $\alpha$.}		\label{fig:svhncomm}
}
\end{figure}

\begin{table}[t] \centering
    \caption{Accuracy under the varying number of public data on SVHN with $\alpha=1$ for 5000 iterations. }
    \label{tab:ablation_pub}
        \begin{center}
            \begin{small}
            \begin{sc}
	\begin{tabular}{m{1cm}m{1.cm}m{1.cm}m{1.cm}m{1.cm}m{1.cm}}
		\hline
		$\left|\mathcal{P}\right|$ &200  & 400 & 600 & 800 &1000\tabularnewline
  \hline
  \texttt{FedMD} & 0.35 $\pm$0.001 & 0.54 $\pm$0.007 & 0.60 $\pm$0.006 & 0.65 $\pm$0.007 & 0.70 $\pm$0.006 
 \tabularnewline
\texttt{FedAL} & 0.43 $\pm$0.003 & 0.57 $\pm$0.003 & 0.64 $\pm$0.006  & 0.72 $\pm$0.002 & 0.75 $\pm$0.002 \tabularnewline
\hline
	\end{tabular}
\end{sc}
\end{small}
\end{center}
\end{table}

\section{Conclusion}\label{sec:conclusion}
In this paper, we proposed \texttt{FedAL} to achieve efficient knowledge transfer in federated KD, considering black-box client local models trained using heterogeneous local data. \texttt{FedAL} trains a discriminator on the server to stimulate the knowledge transfer among clients.
We have formulated a min-max game between clients and the discriminator 
to promote all clients producing the same model output close to the true data label. 
We have further designed the LF regularization for both local training and global knowledge transfer to improve client model accuracy.
Experimental results have demonstrated that \texttt{FedAL} and its variants achieve higher accuracy than other federated KD baselines.

\bibliographystyle{IEEEtran}
\bibliography{ref}

% Generated by IEEEtran.bst, version: 1.14 (2015/08/26)
\begin{thebibliography}{10}
\providecommand{\url}[1]{#1}
\csname url@samestyle\endcsname
\providecommand{\newblock}{\relax}
\providecommand{\bibinfo}[2]{#2}
\providecommand{\BIBentrySTDinterwordspacing}{\spaceskip=0pt\relax}
\providecommand{\BIBentryALTinterwordstretchfactor}{4}
\providecommand{\BIBentryALTinterwordspacing}{\spaceskip=\fontdimen2\font plus
\BIBentryALTinterwordstretchfactor\fontdimen3\font minus
  \fontdimen4\font\relax}
\providecommand{\BIBforeignlanguage}[2]{{%
\expandafter\ifx\csname l@#1\endcsname\relax
\typeout{** WARNING: IEEEtran.bst: No hyphenation pattern has been}%
\typeout{** loaded for the language `#1'. Using the pattern for}%
\typeout{** the default language instead.}%
\else
\language=\csname l@#1\endcsname
\fi
#2}}
\providecommand{\BIBdecl}{\relax}
\BIBdecl

\bibitem{kairouz2019advances}
P.~Kairouz, H.~B. McMahan, B.~Avent, A.~Bellet, M.~Bennis, A.~N. Bhagoji,
  K.~Bonawitz, Z.~Charles, G.~Cormode, R.~Cummings \emph{et~al.}, ``Advances
  and open problems in federated learning,'' \emph{Foundations and
  Trends{\textregistered} in Machine Learning}, vol.~14, no. 1--2, pp. 1--210,
  2021.

\bibitem{li2019fedmd}
D.~Li and J.~Wang, ``{Fedmd}: Heterogenous federated learning via model
  distillation,'' in \emph{NeurIPS Workshop on Federated Learning for Data
  Privacy and Confidentiality}, 2019.

\bibitem{pmlr-v97-hoang19a}
M.~Hoang, N.~Hoang, B.~K.~H. Low, and C.~Kingsford, ``Collective model fusion
  for multiple black-box experts,'' in \emph{Proceedings of International
  Conference on Machine Learning}, vol.~97, 2019, pp. 2742--2750.

\bibitem{mcmahan2017communication}
B.~McMahan, E.~Moore, D.~Ramage, S.~Hampson, and B.~A. y~Arcas,
  ``Communication-efficient learning of deep networks from decentralized
  data,'' in \emph{Artificial Intelligence and Statistics}.\hskip 1em plus
  0.5em minus 0.4em\relax PMLR, 2017, pp. 1273--1282.

\bibitem{sun2021pain}
P.~Sun, H.~Che, Z.~Wang, Y.~Wang, T.~Wang, L.~Wu, and H.~Shao, ``Pain-fl:
  Personalized privacy-preserving incentive for federated learning,''
  \emph{IEEE Journal on Selected Areas in Communications}, vol.~39, no.~12, pp.
  3805--3820, 2021.

\bibitem{sun2022profit}
P.~Sun, X.~Chen, G.~Liao, and J.~Huang, ``A profit-maximizing model marketplace
  with differentially private federated learning,'' in \emph{IEEE INFOCOM
  2022-IEEE Conference on Computer Communications}.\hskip 1em plus 0.5em minus
  0.4em\relax IEEE, 2022, pp. 1439--1448.

\bibitem{jiao2024provably}
Y.~Jiao, K.~Yang, T.~Wu, C.~Jian, and J.~Huang, ``Provably convergent federated
  trilevel learning,'' in \emph{Proceedings of the AAAI Conference on
  Artificial Intelligence}, vol.~38, no.~11, 2024, pp. 12\,928--12\,937.

\bibitem{ba2014deep}
J.~Ba and R.~Caruana, ``Do deep nets really need to be deep?'' \emph{Advances
  in neural information processing systems}, vol.~27, 2014.

\bibitem{zhang2021fedzkt}
L.~Zhang, D.~Wu, and X.~Yuan, ``{FedZKT}: Zero-shot knowledge transfer towards
  resource-constrained federated learning with heterogeneous on-device
  models,'' in \emph{IEEE International Conference on Distributed Computing
  Systems (ICDCS)}, 2022, pp. 928--938.

\bibitem{8904164}
J.-H. Ahn, O.~Simeone, and J.~Kang, ``Wireless federated distillation for
  distributed edge learning with heterogeneous data,'' in \emph{2019 IEEE 30th
  Annual International Symposium on Personal, Indoor and Mobile Radio
  Communications (PIMRC)}, 2019, pp. 1--6.

\bibitem{jeong2018communication}
E.~Jeong, S.~Oh, H.~Kim, J.~Park, M.~Bennis, and S.-L. Kim,
  ``Communication-efficient on-device machine learning: Federated distillation
  and augmentation under non-iid private data,'' in \emph{NeurIPS Workshop on
  Machine Learning on the Phone and other Consumer Devices}, 2018.

\bibitem{tan2021fedproto}
Y.~Tan, G.~Long, L.~Liu, T.~Zhou, Q.~Lu, J.~Jiang, and C.~Zhang, ``{FedProto}:
  Federated prototype learning across heterogeneous clients,'' in \emph{AAAI
  Conference on Artificial Intelligence}, 2022.

\bibitem{lin2020ensemble}
T.~Lin, L.~Kong, S.~U. Stich, and M.~Jaggi, ``Ensemble distillation for robust
  model fusion in federated learning,'' in \emph{Conference on Neural
  Information Processing Systems}, 2021.

\bibitem{zhu2021data}
Z.~Zhu, J.~Hong, and J.~Zhou, ``Data-free knowledge distillation for
  heterogeneous federated learning,'' in \emph{International Conference on
  Machine Learning}, 2021, pp. 12\,878--12\,889.

\bibitem{9879661}
L.~Zhang, L.~Shen, L.~Ding, D.~Tao, and L.-Y. Duan, ``Fine-tuning global model
  via data-free knowledge distillation for non-iid federated learning,'' in
  \emph{IEEE/CVF Conference on Computer Vision and Pattern Recognition (CVPR)},
  2022, pp. 10\,164--10\,173.

\bibitem{he2020group}
C.~He, M.~Annavaram, and S.~Avestimehr, ``Group knowledge transfer: Federated
  learning of large cnns at the edge,'' \emph{Advances in Neural Information
  Processing Systems}, vol.~33, pp. 14\,068--14\,080, 2020.

\bibitem{yao2021local}
D.~Yao, W.~Pan, Y.~Dai, Y.~Wan, X.~Ding, H.~Jin, Z.~Xu, and L.~Sun,
  ``Local-global knowledge distillation in heterogeneous federated learning
  with non-iid data,'' \emph{arXiv preprint arXiv:2107.00051}, 2021.

\bibitem{wu2021fedkd}
C.~Wu, F.~Wu, R.~Liu, L.~Lyu, Y.~Huang, and X.~Xie, ``{FedKD}: Communication
  efficient federated learning via knowledge distillation,'' \emph{Nature
  Communications}, 2022.

\bibitem{li2020practical}
Q.~Li, B.~He, and D.~Song, ``Practical one-shot federated learning for
  cross-silo setting,'' in \emph{Proceedings of the Thirtieth International
  Joint Conference on Artificial Intelligence}, 2021.

\bibitem{cho2021personalized}
Y.~J. Cho, J.~Wang, T.~Chiruvolu, and G.~Joshi, ``Personalized federated
  learning for heterogeneous clients with clustered knowledge transfer,''
  \emph{arXiv preprint arXiv:2109.08119}, 2021.

\bibitem{ozkara2021quped}
K.~Ozkara, N.~Singh, D.~Data, and S.~Diggavi, ``Quped: Quantized
  personalization via distillation with applications to federated learning,''
  \emph{Advances in Neural Information Processing Systems}, vol.~34, pp.
  3622--3634, 2021.

\bibitem{heinbaughdata}
C.~Heinbaugh, E.~Luz-Ricca, and H.~Shao, ``Data-free one-shot federated
  learning under very high statistical heterogeneity,'' in \emph{International
  Conference on Learning Representations}, 2023.

\bibitem{sattler2020communication}
F.~Sattler, A.~Marban, R.~Rischke, and W.~Samek, ``Communication-efficient
  federated distillation,'' \emph{arXiv preprint arXiv:2012.00632}, 2020.

\bibitem{hu2021mhat}
L.~Hu, H.~Yan, L.~Li, Z.~Pan, X.~Liu, and Z.~Zhang, ``Mhat: An efficient
  model-heterogenous aggregation training scheme for federated learning,''
  \emph{Information Sciences}, vol. 560, pp. 493--503, 2021.

\bibitem{huang2022learn}
W.~Huang, M.~Ye, and B.~Du, ``Learn from others and be yourself in
  heterogeneous federated learning,'' in \emph{Proceedings of the IEEE/CVF
  Conference on Computer Vision and Pattern Recognition}, 2022, pp.
  10\,143--10\,153.

\bibitem{huang2023generalizable}
W.~Huang, M.~Ye, Z.~Shi, and B.~Du, ``Generalizable heterogeneous federated
  cross-correlation and instance similarity learning,'' \emph{IEEE Transactions
  on Pattern Analysis and Machine Intelligence}, 2023.

\bibitem{lee2021preservation}
G.~Lee, Y.~Shin, M.~Jeong, and S.-Y. Yun, ``Preservation of the global
  knowledge by not-true self knowledge distillation in federated learning,'' in
  \emph{Conference on Neural Information Processing Systems}, 2021.

\bibitem{tang2023fedrad}
J.~Tang, X.~Ding, D.~Hu, B.~Guo, Y.~Shen, P.~Ma, and Y.~Jiang, ``Fedrad:
  Heterogeneous federated learning via relational adaptive distillation,''
  \emph{Sensors}, vol.~23, no.~14, p. 6518, 2023.

\bibitem{he2022learning}
Y.~He, Y.~Chen, X.~Yang, H.~Yu, Y.-H. Huang, and Y.~Gu, ``Learning critically:
  Selective self-distillation in federated learning on non-iid data,''
  \emph{IEEE Transactions on Big Data}, 2022.

\bibitem{xu2022acceleration}
C.~Xu, Z.~Hong, M.~Huang, and T.~Jiang, ``Acceleration of federated learning
  with alleviated forgetting in local training,'' \emph{arXiv preprint
  arXiv:2203.02645}, 2022.

\bibitem{wang2024dfrd}
S.~Wang, Y.~Fu, X.~Li, Y.~Lan, M.~Gao \emph{et~al.}, ``Dfrd: Data-free
  robustness distillation for heterogeneous federated learning,''
  \emph{Advances in Neural Information Processing Systems}, vol.~36, 2024.

\bibitem{aljahdali2024flashback}
M.~Aljahdali, A.~M. Abdelmoniem, M.~Canini, and S.~Horv{\'a}th, ``Flashback:
  Understanding and mitigating forgetting in federated learning,'' \emph{arXiv
  preprint arXiv:2402.05558}, 2024.

\bibitem{kim2024federated}
S.~Kim, H.~Park, M.~Kang, K.~H. Jin, E.~Adeli, K.~M. Pohl, and S.~H. Park,
  ``Federated learning with knowledge distillation for multi-organ segmentation
  with partially labeled datasets,'' \emph{Medical Image Analysis}, p. 103156,
  2024.

\bibitem{liu2023adaptive}
J.~Liu, Q.~Zeng, H.~Xu, Y.~Xu, Z.~Wang, and H.~Huang, ``Adaptive block-wise
  regularization and knowledge distillation for enhancing federated learning,''
  \emph{IEEE/ACM Transactions on Networking}, 2023.

\bibitem{9964434}
H.~Jin, D.~Bai, D.~Yao, Y.~Dai, L.~Gu, C.~Yu, and L.~Sun, ``Personalized edge
  intelligence via federated self-knowledge distillation,'' \emph{IEEE
  Transactions on Parallel and Distributed Systems}, vol.~34, no.~2, pp.
  567--580, 2023.

\bibitem{shoham1910overcoming}
N.~Shoham, T.~Avidor, A.~Keren, N.~Israel, D.~Benditkis, L.~Mor-Yosef, and
  I.~Zeitak, ``Overcoming forgetting in federated learning on non-iid data,''
  in \emph{NeurIPS Workshop}, 2019.

\bibitem{xu2017training}
Z.~Xu, Y.~C. Hsu, and J.~Huang, ``Training shallow and thin networks for
  acceleration via knowledge distillation with conditional adversarial
  networks,'' in \emph{International Conference on Learning Representations
  Workshop}, 2018.

\bibitem{liu2020learning}
Y.~Liu, W.~Zhang, and J.~Wang, ``Learning from a lightweight teacher for
  efficient knowledge distillation,'' \emph{arXiv preprint arXiv:2005.09163},
  2020.

\bibitem{gao2020private}
C.~Zhuo, D.~Gao, and L.~Liu, ``Pkdgan: Private knowledge distillation with
  generative adversarial networks,'' \emph{IEEE Transactions on Big Data}, pp.
  1--14, 2022.

\bibitem{wang2018kdgan}
X.~Wang, R.~Zhang, Y.~Sun, and J.~Qi, ``{KDGAN}: Knowledge distillation with
  generative adversarial networks,'' \emph{Advances in Neural Information
  Processing Systems}, vol.~31, 2018.

\bibitem{wang2018adversarial}
Y.~Wang, C.~Xu, C.~Xu, and D.~Tao, ``Adversarial learning of portable student
  networks,'' in \emph{Proceedings of the AAAI Conference on Artificial
  Intelligence}, vol.~32, 2018.

\bibitem{Distilling}
G.~Hinton, O.~Vinyals, and J.~Dean, ``Distilling the knowledge in a neural
  network,'' in \emph{Conference on Neural Information Processing Systems
  (NeurIPS)}, 2014.

\bibitem{han2023fedal}
P.~Han, X.~Shi, and J.~Huang, ``Fedal: Black-box federated knowledge
  distillation enabled by adversarial learning,'' \emph{arXiv preprint
  arXiv:2311.16584}, 2024.

\bibitem{ben2010theory}
S.~Ben-David, J.~Blitzer, K.~Crammer, A.~Kulesza, F.~Pereira, and J.~W.
  Vaughan, ``A theory of learning from different domains,'' \emph{Machine
  learning}, vol.~79, no.~1, pp. 151--175, 2010.

\bibitem{Understanding}
S.~Shalev~Shwartz and S.~Ben~David, \emph{Understanding Machine Learning: From
  Theory to Algorithms}.\hskip 1em plus 0.5em minus 0.4em\relax USA: Cambridge
  University Press, 2014.

\bibitem{bistritz2020distributedDF}
I.~Bistritz, A.~Mann, and N.~Bambos, ``Distributed distillation for on-device
  learning,'' \emph{Advances in Neural Information Processing Systems},
  vol.~33, pp. 22\,593--22\,604, 2020.

\bibitem{37648}
Y.~Netzer, T.~Wang, A.~Coates, A.~Bissacco, B.~Wu, and A.~Y. Ng, ``Reading
  digits in natural images with unsupervised feature learning,'' in
  \emph{NeurIPS Workshop on Deep Learning and Unsupervised Feature Learning},
  2011.

\bibitem{CIFAR10}
A.~Krizhevsky and G.~Hinton, ``Learning multiple layers of features from tiny
  images,'' University of Toronto, Tech. Rep., 2009.

\bibitem{darlow2018cinic}
L.~N. Darlow, E.~J. Crowley, A.~Antoniou, and A.~J. Storkey, ``Cinic-10 is not
  imagenet or cifar-10,'' \emph{arXiv preprint arXiv:1810.03505}, 2018.

\bibitem{hsu2019measuring}
T.-M.~H. Hsu, H.~Qi, and M.~Brown, ``Measuring the effects of non-identical
  data distribution for federated visual classification,'' \emph{arXiv preprint
  arXiv:1909.06335}, 2019.

\bibitem{yurochkin2019bayesian}
M.~Yurochkin, M.~Agarwal, S.~Ghosh, K.~Greenewald, N.~Hoang, and Y.~Khazaeni,
  ``Bayesian nonparametric federated learning of neural networks,'' in
  \emph{International conference on machine learning}.\hskip 1em plus 0.5em
  minus 0.4em\relax PMLR, 2019, pp. 7252--7261.

\bibitem{2014Adam}
D.~Kingma and J.~Ba, ``Adam: A method for stochastic optimization,''
  \emph{Computer Science}, 2014.

\bibitem{ghosh2018multi}
A.~Ghosh, V.~Kulharia, V.~P. Namboodiri, P.~H. Torr, and P.~K. Dokania,
  ``Multi-agent diverse generative adversarial networks,'' in \emph{Proceedings
  of the IEEE conference on computer vision and pattern recognition}, 2018, pp.
  8513--8521.

\end{thebibliography}

\begin{IEEEbiography}
	[{\includegraphics[width=1in,height=1.25in,clip,keepaspectratio]{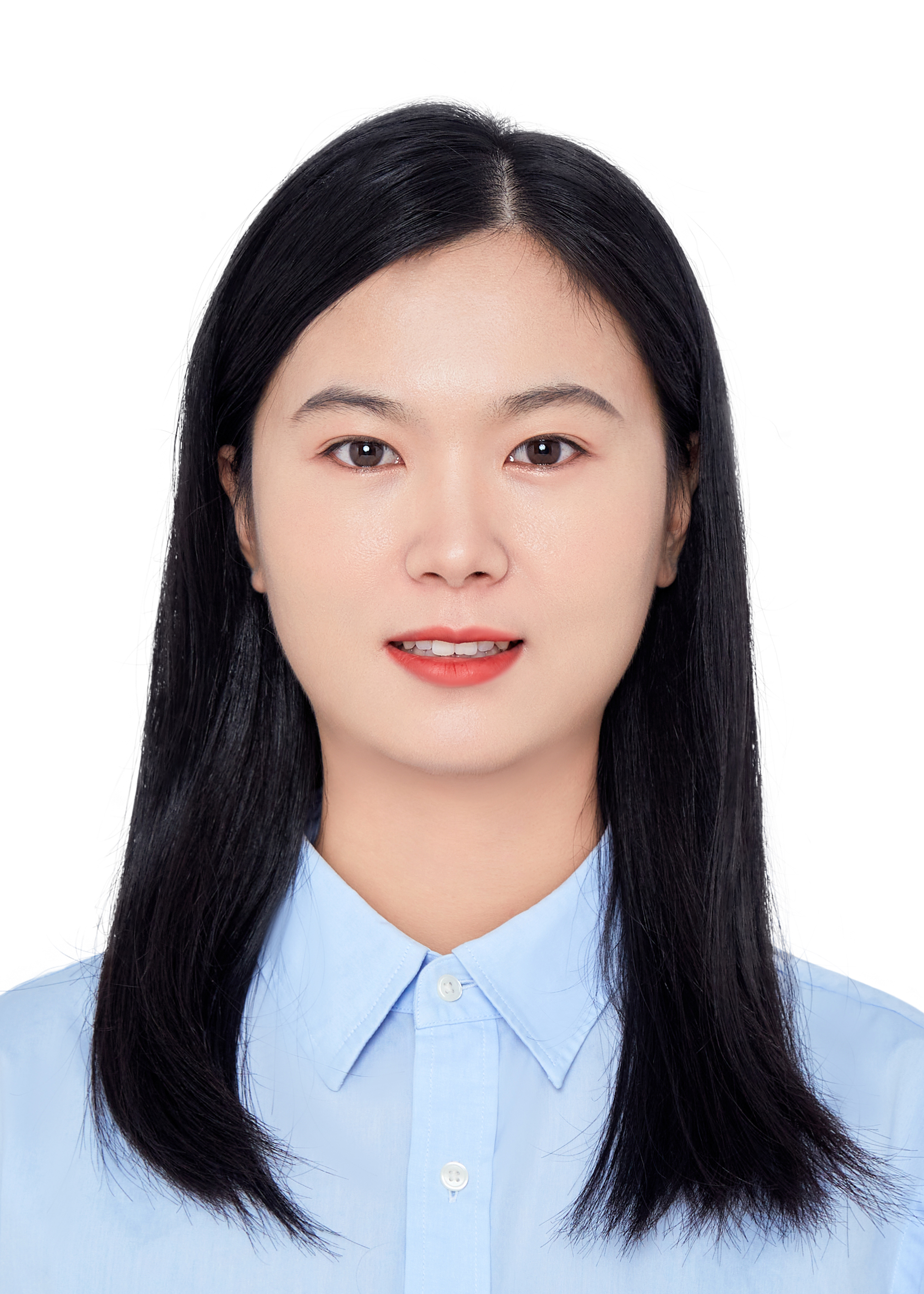}}]{Pengchao Han} received the Ph.D. degree in communication and information systems at Northeastern University, China, in 2021. She has conducted research at Imperial College London from 2018 to 2019. She was a  Postdoc research associate at The Chinese University of Hong Kong, Shenzhen, China, from 2021 to 2023. She is currently an associate professor at Guangdong University of Technology. Her research interests include wireless and optical networks, mobile edge computing, federated learning, and knowledge distillation. 
\end{IEEEbiography}

\begin{IEEEbiography}[{\includegraphics[width=1in,height=1.25in,clip,keepaspectratio]{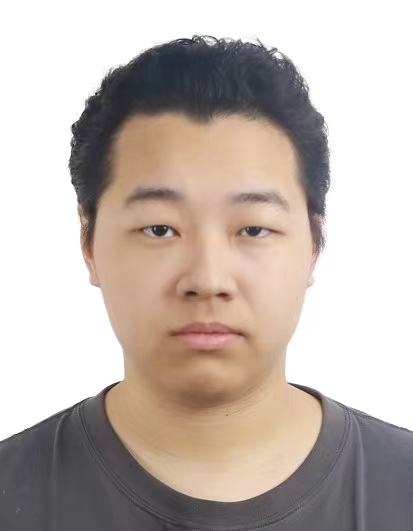}}]{Xingyan Shi} is an undergraduate student at The Chinese University of Hong Kong, Shenzhen. His research interests include federated learning, reinforcement learning, multi-agent learning, and computer vision. 
\end{IEEEbiography}

\begin{IEEEbiography}
	[{\includegraphics[width=1in,height=1.25in,clip,keepaspectratio]{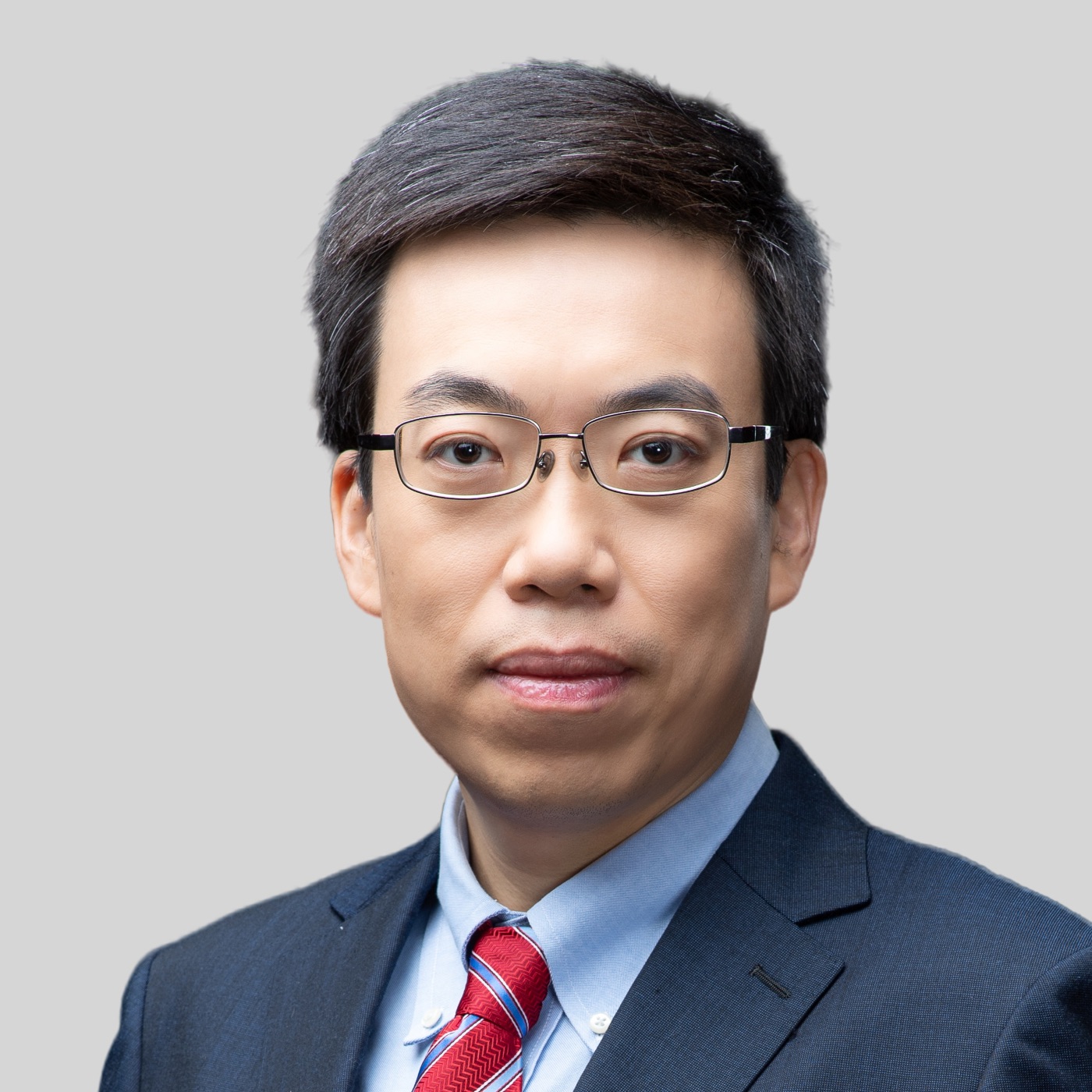}}]	
{Jianwei Huang} is a Presidential Chair Professor and Associate Vice President (Institutional Development) of the Chinese University of Hong Kong, Shenzhen, and the Associate Director of Shenzhen Institute of Artificial Intelligence and Robotics for Society. He received his Ph.D. from Northwestern University in 2005 and worked as a Postdoc Research Associate at Princeton University during 2005-2007. His research interests are network optimization and economics, with applications in communication networks, energy networks, data markets, and crowd intelligence. He has published 350+ papers in leading international venues, with a Google Scholar citation of 17,000+ and an H-index of 68. He has co-authored 11 Best Paper Awards, including the 2011 IEEE Marconi Prize Paper Award in Wireless Communications. He has co-authored seven books, including the textbook "Wireless Network Pricing." He has been an IEEE Fellow, an IEEE ComSoc Distinguished Lecturer, a Clarivate Web of Science Highly Cited Researcher, and an Elsevier Most Cited Chinese Researcher. He is the Editor-in-Chief of IEEE Transactions on Network Science and Engineering and Associate Editor-in-Chief of the IEEE Open Journal of the Communications Society.
\end{IEEEbiography}

\vfill

\clearpage
\onecolumn
\section{Supplementary Material}
\subsection{Paramilitaries}\label{sup:pre}
\begin{proposition} (Proposition 1 in \cite{ghosh2018multi})
	\label{prop:lpCons}
	Given ${\bf y} = (y_1, \cdots, y_n)$, $y_i \geq 0$, and $a_i \in \mathbb{R}$, the optimal solution for the problem defined below is achieved at $y_i^* = \frac{a_i}{\sum_{i' = 1}^n a_{i'}}, \forall i$,
	\begin{align}
		\max_{\bf y} \sum_{i=1}^n a_i\log y_i,  \;\; s.t. \; \sum_i^n y_i = 1.
	\end{align}
\end{proposition}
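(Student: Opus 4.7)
The plan is to solve the constrained maximization by Lagrange multipliers, exploit the concavity of the objective to conclude that the stationary point is the unique global maximum, and read off the closed-form solution. Throughout I will treat the $a_i$ as nonnegative (as is implicit in the intended usage, since the proposition is invoked where the $a_i$ play the role of probability masses such as $p_n(\boldsymbol{x},\boldsymbol{\theta}_n)$; otherwise $y_i^\ast = a_i/\sum_{i'}a_{i'}$ could violate $y_i\geq 0$). I will also restrict attention to the interior $y_i>0$, since any $y_i=0$ makes $a_i\log y_i=-\infty$ when $a_i>0$ and hence cannot be optimal.

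First I would form the Lagrangian
\begin{equation}
\mathcal{L}({\bf y},\lambda) \;=\; \sum_{i=1}^n a_i \log y_i \;-\; \lambda\!\left(\sum_{i=1}^n y_i - 1\right),
\end{equation}
associated with the equality constraint $\sum_i y_i = 1$. Setting $\partial \mathcal{L}/\partial y_i = 0$ yields the stationarity condition $a_i/y_i = \lambda$ for every $i$, i.e.\ $y_i = a_i/\lambda$. Substituting into the constraint gives $\sum_i a_i/\lambda = 1$, so $\lambda = \sum_{i'=1}^n a_{i'}$, and consequently
\begin{equation}
y_i^\ast \;=\; \frac{a_i}{\sum_{i'=1}^n a_{i'}}, \qquad \forall i.
\end{equation}
This point lies in the feasible region (it is nonnegative and sums to one), so KKT conditions are satisfied with the inequality constraints $y_i \geq 0$ inactive.

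Next I would argue optimality globally. The objective $F({\bf y}) = \sum_i a_i \log y_i$ is concave on the open positive orthant because each $a_i \log y_i$ is concave in $y_i$ (as $a_i\geq 0$ and $\log$ is concave), and sums of concave functions are concave. The feasible set $\{{\bf y}\geq 0 : \sum_i y_i = 1\}$ is convex. Hence any stationary point of the Lagrangian that is primal-feasible is a global maximizer, and the solution $y_i^\ast$ derived above is the unique such maximizer (strict concavity in each coordinate along the constraint set rules out other maxima on the interior, and boundary points are excluded as noted).

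I do not expect any serious obstacle: the main subtlety is simply being careful about the domain (interior $y_i>0$ versus boundary $y_i=0$) and about the sign convention on the $a_i$. As long as $a_i\geq 0$ with at least one strictly positive (so that $\sum_{i'} a_{i'}>0$ and $y_i^\ast$ is well-defined), the Lagrangian calculation together with concavity gives the claimed closed form in essentially one line each.
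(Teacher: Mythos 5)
Your proof is correct. Note, though, that the paper itself offers no proof of this proposition: it is imported verbatim as Proposition~1 of the cited work of Ghosh et al., so there is no in-paper argument to compare against. Your Lagrangian derivation (stationarity $a_i/y_i=\lambda$, then $\lambda=\sum_{i'}a_{i'}$ from the constraint) combined with concavity of $\sum_i a_i\log y_i$ on the positive orthant is the canonical route and fully settles the claim. You were also right to flag the hypothesis $a_i\in\mathbb{R}$: as literally stated the proposition is false, since any $a_i<0$ makes the objective unbounded above (take $y_i\to 0^+$, so $a_i\log y_i\to+\infty$), and the result genuinely requires $a_i\geq 0$ — which holds in the paper's only use of it, where the $a_i$ are the probability masses $p_n(\boldsymbol{x},\boldsymbol{\theta}_n)$ in the proof of Lemma~\ref{lem:dis_best}. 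Two small refinements you could add: (i) your uniqueness remark via ``strict concavity in each coordinate'' only applies to coordinates with $a_i>0$; when some $a_i=0$ the formula gives $y_i^\ast=0$, and you should argue separately that placing any mass on such a coordinate is wasteful because it strictly decreases the feasible mass available to coordinates with $a_i>0$; (ii) an even shorter alternative avoiding KKT entirely is Gibbs' inequality — writing $\tilde a_i=a_i/\sum_{i'}a_{i'}$, one has $\sum_i \tilde a_i\log y_i=\sum_i \tilde a_i\log \tilde a_i-\mathscr{K}(\tilde a,\mathbf{y})\leq\sum_i \tilde a_i\log \tilde a_i$, with equality iff $\mathbf{y}=\tilde a$, which gives global optimality and uniqueness in one line.
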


\begin{theorem}[Uniform Convergence~\cite{Understanding}]~\label{thm:uniform_convergence}
	Let $\mathcal{F}$ be a class and let $\tau_{\mathcal{F}}$ be its growth function. Denote $L_{\mathcal{D}}(h)$ the expected risk of $h$ over all data points that follow distribution $\mathcal{D}$.  Then, for every $\mathcal{D}$ and every $\delta \in (0, 1)$, with probability of at least $1 - \delta$ over the choice of $ S \sim \mathcal{D}^{m} $ with $m$ denoting the number of data samples, we have
	\begin{equation}
		 \left|L_{\mathcal{D}} (h) - L_{S} (h)\right|  \leq \frac{ 4 + \sqrt{ \log ( \tau_{\mathcal{F}} (2m) ) } }{ \delta \sqrt{ 2m } }. 
	\end{equation}
\end{theorem}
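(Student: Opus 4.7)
The plan is to combine Lemma~\ref{lem:consensus} (cross-client knowledge transfer) with the uniform convergence bound (Theorem~\ref{thm:uniform_convergence} in the supplementary) via a double-sum decomposition over the clients. First I would write $\mathcal{D} = \cup_{n=1}^N \mathcal{D}_n$ and, assuming the local shards are weighted uniformly in the global distribution, use the identity
\begin{equation}
\mathscr{E}_{\mathcal{D}}(f_n) \;=\; \frac{1}{N}\sum_{m=1}^N \mathscr{E}_{\mathcal{D}_m}(f_n),
\end{equation}
so that the quantity of interest becomes
\begin{equation}
\frac{1}{N}\sum_{n=1}^N \mathscr{E}_{\mathcal{D}}(f_n) \;=\; \frac{1}{N^2}\sum_{n=1}^N \sum_{m=1}^N \mathscr{E}_{\mathcal{D}_m}(f_n).
\end{equation}

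Next I would split this double sum into its diagonal ($m=n$) and off-diagonal ($m\neq n$) parts. The diagonal piece contributes $\frac{1}{N^2}\sum_n \mathscr{E}_{\mathcal{D}_n}(f_n)$ directly. For each off-diagonal term, I would apply Lemma~\ref{lem:consensus} with indices swapped,
\begin{equation}
\mathscr{E}_{\mathcal{D}_m}(f_n) \;\leq\; \mathscr{E}_{\mathcal{D}_m}(f_m) + \sqrt{2\log 2\,\zeta_m} + \sqrt{2\log 2\,\zeta_n} + d_{\mathcal{H}\Delta\mathcal{H}}(\mathcal{P},\mathcal{D}_m) + 2\lambda_m^{*},
\end{equation}
and then sum over $n$ and over $m\neq n$. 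Each $\mathscr{E}_{\mathcal{D}_m}(f_m)$ appears $N-1$ times, so it combines with the diagonal piece to give exactly $\frac{1}{N}\sum_n \mathscr{E}_{\mathcal{D}_n}(f_n)$; the $\sqrt{2\log 2\,\zeta_n}$ and $\sqrt{2\log 2\,\zeta_m}$ contributions each collapse to $\frac{N-1}{N^2}\sum_n \sqrt{2\log 2\,\zeta_n}$ and merge into the coefficient $2$; the discrepancy and $\lambda^{*}$ terms aggregate analogously to yield the $\frac{N-1}{N^2}$-weighted bracket in the target inequality.

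The final step is to swap the expected local risk $\mathscr{E}_{\mathcal{D}_n}(f_n)$ for its empirical counterpart $\mathscr{E}_{\hat{\mathcal{D}}_n}(f_n)$. Here I would invoke Theorem~\ref{thm:uniform_convergence} on the class $\mathcal{F}$ with $m=\phi$ samples, which gives
\begin{equation}
\mathscr{E}_{\mathcal{D}_n}(f_n) \;\leq\; \mathscr{E}_{\hat{\mathcal{D}}_n}(f_n) + \frac{4+\sqrt{\log(\epsilon_{\mathcal{F}}(2\phi))}}{\delta\sqrt{2\phi}}
\end{equation}
with probability at least $1-\delta$, and average over $n$. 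Plugging this into the previous aggregated bound yields exactly \eqref{eq:bound_FDAL}.

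The main obstacle I foresee is the subtle bookkeeping around (i) justifying the uniform weighting $\mathscr{E}_{\mathcal{D}} = \tfrac{1}{N}\sum_m \mathscr{E}_{\mathcal{D}_m}$ (which is why $\phi$ is assumed common across clients), and (ii) applying uniform convergence simultaneously to all $N$ client models while keeping a single failure probability $\delta$: a naive union bound would degrade the deviation term to $\frac{4+\sqrt{\log(\epsilon_{\mathcal{F}}(2\phi))}}{(\delta/N)\sqrt{2\phi}}$, so I would need either to state that each client's uniform convergence holds on its own sample (independent draws allow a pointwise application inside the average), or to absorb the $\log N$ factor into $\delta$; matching the stated bound exactly requires the pointwise interpretation, which is the step I would handle most carefully. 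The remaining algebra — counting multiplicities of the $\zeta_n$, $d_{\mathcal{H}\Delta\mathcal{H}}(\mathcal{P},\mathcal{D}_n)$, and $\lambda_n^{*}$ terms after the diagonal/off-diagonal split — is routine but is the source of the characteristic $\frac{N-1}{N^2}$ prefactor, and I would verify it by relabeling the inner summation index.
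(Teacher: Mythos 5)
Your proposal does not prove the statement in question. The statement is the classical uniform-convergence bound for a hypothesis class in terms of its growth function (Theorem~\ref{thm:uniform_convergence}, quoted from \cite{Understanding}): with probability at least $1-\delta$ over $S\sim\mathcal{D}^m$, $\left|L_{\mathcal{D}}(h)-L_S(h)\right|\leq \frac{4+\sqrt{\log(\tau_{\mathcal{F}}(2m))}}{\delta\sqrt{2m}}$ for all $h$. What you wrote instead is a proof of the paper's Theorem~\ref{thm:generalization_bound} (the \texttt{FedAL} generalization bound \eqref{eq:bound_FDAL}): you decompose $\mathscr{E}_{\mathcal{D}}(f_n)$ over the local shards, apply Lemma~\ref{lem:consensus} to the off-diagonal terms, and then \emph{invoke} Theorem~\ref{thm:uniform_convergence} as a black box in the final step to swap $\mathscr{E}_{\mathcal{D}_n}(f_n)$ for $\mathscr{E}_{\hat{\mathcal{D}}_n}(f_n)$. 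As a proof of the statement itself this is circular: the very inequality you are asked to establish is assumed in your last display. Note that the paper gives no proof of this theorem either --- it is imported verbatim as a preliminary in the supplementary material, with the citation serving as the proof.

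A genuine proof would follow standard statistical-learning-theory lines, none of which appear in your writeup: symmetrization with a ghost sample $S'$ of size $m$ bounds $\mathbb{E}_{S}\left[\sup_{h\in\mathcal{F}}\left|L_{\mathcal{D}}(h)-L_S(h)\right|\right]$ by an expected Rademacher-type quantity over the combined double sample of size $2m$; on any fixed double sample the class induces at most $\tau_{\mathcal{F}}(2m)$ distinct behaviors, so Massart's finite-class lemma gives a bound of order $\sqrt{\log(\tau_{\mathcal{F}}(2m))/m}$, yielding an expectation bound of the form $\frac{4+\sqrt{\log(\tau_{\mathcal{F}}(2m))}}{\sqrt{2m}}$; finally, Markov's inequality (not a concentration inequality such as McDiarmid's) converts the expectation bound into the high-probability statement, which is exactly why the bound carries the $1/\delta$ factor rather than the sharper $\sqrt{\log(1/\delta)}$ dependence. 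Incidentally, your double-sum bookkeeping does faithfully mirror the paper's own derivation of \eqref{eq:bound_FDAL} --- including the $\frac{N-1}{N^2}$ prefactor --- and your concern about applying uniform convergence simultaneously to all $N$ clients under a single $\delta$ is a real subtlety that the paper's proof of Theorem~\ref{thm:generalization_bound} also glosses over; but that is an answer to a different question than the one posed.
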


\begin{theorem}[Domain adaptation \cite{ben2010theory}] \label{thm:simplified_adaptation_stochastic_bound} Let $\mathcal{F}$ be the hypothesis set. 
	Denote $L_{\mathcal{D}}(h)$ the expected risk of $h$ over all data points that follow distribution $\mathcal{D}$. Considering the distributions $\mathcal{D}_S$ and $\mathcal{D}_T$,
	for every $h \in \mathcal{F}$ and any $\delta \in (0, 1)$,
	with probability at least $1 - \delta$ (over the choice of the samples),
	there exists:
	\begin{align}
		\textstyle
		L_{\mathcal{D}_T}(h)
		\leq L_{\mathcal{D}_S}(h) + \frac{1}{2} d_{\mathcal{H} \Delta \mathcal{H}} (\mathcal{D}_S, \mathcal{D}_T) + \lambda^\ast \,,
	\end{align}
	where $\lambda^\ast = L_{\mathcal{D}_S}(h^ \ast) + L_{\mathcal{D}_T}(h^ \ast)$ and
	$h^* := \arg\min_{h \in \mathcal{F}} L_{\mathcal{D}_S} (h) + L_{\mathcal{D}_T} (h) $
	corresponds to \emph{ideal joint hypothesis} that minimizes the combined error. $d_{\mathcal{H} \Delta \mathcal{H}} (\cdot)$ denotes the domain discrepancy between two distributions.
\end{theorem}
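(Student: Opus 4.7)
The plan is to adapt the classical argument of Ben-David et al., which reduces the claim to two applications of a triangle inequality for the loss together with the definition of the $\mathcal{H}\Delta\mathcal{H}$-divergence. I will first introduce the auxiliary \emph{disagreement risk} $L_{\mathcal{D}}(h, h') := \mathbb{E}_{x \sim \mathcal{D}}[\ell(h(x), h'(x))]$, which for the $0/1$ loss reduces to $\Pr_{x \sim \mathcal{D}}[h(x) \neq h'(x)]$; for such metric losses one has $L_{\mathcal{D}}(h) \leq L_{\mathcal{D}}(h^\ast) + L_{\mathcal{D}}(h, h^\ast)$ for any reference hypothesis $h^\ast$, which is the workhorse inequality behind the proof.

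The steps I would execute, in order, are: (i) apply the triangle inequality on the target distribution to write $L_{\mathcal{D}_T}(h) \leq L_{\mathcal{D}_T}(h^\ast) + L_{\mathcal{D}_T}(h, h^\ast)$, where $h^\ast$ is the ideal joint hypothesis; (ii) bridge from target to source by adding and subtracting $L_{\mathcal{D}_S}(h, h^\ast)$, which yields $L_{\mathcal{D}_T}(h, h^\ast) \leq L_{\mathcal{D}_S}(h, h^\ast) + |L_{\mathcal{D}_T}(h, h^\ast) - L_{\mathcal{D}_S}(h, h^\ast)|$; (iii) upper bound the cross-domain gap by $\tfrac{1}{2} d_{\mathcal{H}\Delta\mathcal{H}}(\mathcal{D}_S, \mathcal{D}_T)$, which is immediate from the defining supremum of the divergence by taking $h_1 = h$ and $h_2 = h^\ast$ as the witnessing pair; (iv) apply the triangle inequality a second time, now on the source, to split $L_{\mathcal{D}_S}(h, h^\ast) \leq L_{\mathcal{D}_S}(h) + L_{\mathcal{D}_S}(h^\ast)$; (v) collect $L_{\mathcal{D}_T}(h^\ast) + L_{\mathcal{D}_S}(h^\ast)$ into $\lambda^\ast$.

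To promote the resulting population bound to the stated high-probability form with failure parameter $\delta$, I would argue that the deterministic inequality holds for the true risks and divergence, and then invoke a uniform-convergence argument (Theorem~\ref{thm:uniform_convergence}) to replace the population source risk and population $\mathcal{H}\Delta\mathcal{H}$-divergence by their empirical counterparts, using the fact that the VC dimension of the symmetric-difference class is at most twice that of $\mathcal{H}$; a union bound across the two concentration events then yields the $1-\delta$ guarantee with the $1/(\delta\sqrt{2m})$-type slack absorbed into the statement.

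The main obstacle is the triangle-inequality step in (i) and (iv): for the $0/1$ loss it is automatic from the indicator representation, but for general bounded losses one must verify that the chosen $\ell$ satisfies $|\ell(h(x), y) - \ell(h^\ast(x), y)| \leq \ell(h(x), h^\ast(x))$ pointwise (which is the metric/symmetry property). A secondary subtlety is keeping the factor $\tfrac{1}{2}$ that accompanies $d_{\mathcal{H}\Delta\mathcal{H}}$, which arises because the divergence is conventionally scaled by $2$ relative to the supremum gap of disagreement risks; I would track this constant carefully throughout step (iii) to match the statement exactly.
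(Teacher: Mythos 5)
Your reconstruction is correct, and there is nothing in the paper to compare it against: the paper states this theorem as an imported preliminary, citing \cite{ben2010theory} without reproducing any proof, and your steps (i)--(v) are precisely the canonical argument from that reference (triangle inequality on the target with the ideal joint hypothesis $h^{\ast}$, bridging the cross-domain disagreement gap via the witnessing pair $(h, h^{\ast})$ in the defining supremum of $d_{\mathcal{H}\Delta\mathcal{H}}$ with the factor $\tfrac{1}{2}$ tracked correctly, and a second triangle inequality on the source). Your plan for the probabilistic qualifier---uniform convergence over the symmetric-difference class with $\mathrm{VCdim}$ at most twice that of $\mathcal{H}$, plus a union bound---is likewise the standard route in the cited source, so the proposal faithfully supplies the proof that the paper delegates to the citation.
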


\subsection{Proof of Lemma~\ref{lem:dis_best}}
{
\begin{proof}
The objective of the discriminator in \eqref{eq:min-max-obj} is
	\begin{align}
		  &- 
   \frac{1}{N}\sum_{n=1}^N\mathscr{E} \left(n,  h\left(p_n\left(\boldsymbol{\theta}_n\right),\boldsymbol{w}\right)\right) \nonumber\\
		& = -
  \sum_{n=1}^{N} \mathbb{E}_{p \sim p_n \left(\boldsymbol{\theta}_n\right)} \mathscr{E}\left(n, h\left(p,\boldsymbol{w}\right)\right) \nonumber\\
  & = 
  \sum_{n=1}^{N}   p_n\left(\boldsymbol{\theta}_n\right) \log \left([h\left(p_n\left(\boldsymbol{\theta}_n\right),\boldsymbol{w}\right)]_n\right).\label{eq:dis_trans}
	\end{align}
Using Proposition 1 in \cite{ghosh2018multi}, i.e., Proposition \ref{prop:lpCons}, maximizing \eqref{eq:dis_trans} over ${\boldsymbol{w}}$ with the fact that $\sum_{n=1}^N [h\left(p_n\left(\boldsymbol{\theta}_n\right),\boldsymbol{w}\right)]_n=1$ gives the optimal distribution learned by the discriminator as \eqref{eq:best_dis}.
\end{proof}
}

\subsection{Proof of Lemma \ref{lem:client_best}}

{
\begin{proof}
	Given $\boldsymbol{\Theta}_{-n}$ and $\boldsymbol{w}$, applying Lemma~\ref{lem:dis_best} for client $n$ gives
	\begin{align} 
	&  -
 \mathscr{E}\left(n,h\left(p_n\left(\boldsymbol{\theta}_n\left( \boldsymbol{\Theta}_{-n} , \boldsymbol{w}\right)\right),\boldsymbol{w}\right) \right)	\nonumber\\
	&  =\sum_{m=1,m\neq n}^{N}  p_m\left(\boldsymbol{\theta}_m\right) \log \frac{p_m\left(\boldsymbol{\theta}_m\right)}{\sum_{m'=1}^{N} p_{m'}\left(\boldsymbol{\theta}_{m'}\right)/N}\nonumber\\
  &+ p_n\left(\boldsymbol{\theta}_n\left( \boldsymbol{\Theta}_{-n} , \boldsymbol{w}\right)\right) \log \frac{p_n\left(\boldsymbol{\theta}_n\left( \boldsymbol{\Theta}_{-n} , \boldsymbol{w}\right)\right)}{\sum_{m'=1}^{N} p_{m'}\left(\boldsymbol{\theta}_{m'}\right)/N}- N\log N \nonumber
 \end{align}
 \begin{align}
	&  =\sum_{m=1,m\neq n}^{N} \mathscr{K}\left(p_m\left(\boldsymbol{\theta}_m\right), \sum_{m'=1}^{N}\frac{ p_{m'}\left(\boldsymbol{x},\boldsymbol{\theta}_{m'}\right)}{N}\right) \nonumber\\
 &+ \mathscr{K}\left(p_n\left(\boldsymbol{\theta}_n\left( \boldsymbol{\Theta}_{-n}, \boldsymbol{w}\right)\right), \sum_{m'=1}^{N} \frac{p_{m'}\left(\boldsymbol{x},\boldsymbol{\theta}_{m'}\right)}{N}\right)
	- N\log N. \label{eq:reversk_kl}
	\end{align}
	The KL divergence in \eqref{eq:reversk_kl}  reaches its minimum when $p_n\left(\boldsymbol{\theta}_n^\ast\left( \boldsymbol{\Theta}_{-n} , \boldsymbol{w}\right)\right)={\sum_{n=1}^{N} p_n\left(\boldsymbol{x},\boldsymbol{\theta}_n\right)}/{N}$. 
\end{proof}
}

\subsection{Proof of Lemma \ref{lem:consensus}}
\begin{proof}
	Let $\tilde{p}:=\sum_{n=1}^N{p_n}/N$. Applying Pinsker inequality~\cite{pmlr-v97-hoang19a} and Assumption~\ref{asmp:al} 
	to the reverse KL divergence in \eqref{eq:reversk_kl} gives 
	\begin{align}
	\left\Vert\!p_n\!-\!\tilde{p}\!\right\Vert _1 \leq \sqrt{2\log 2 \mathscr{K}\left(p_n,\tilde {p}\right)}\!\leq\!\sqrt{2\log 2 \zeta_n}, \forall n \in \mathcal{N}.\label{eq:pinkser}
	\end{align} 
Recall that $p_{n,k}$ denotes the probability of client $n$'s model output for classes $k$.
	Using \eqref{eq:pinkser} for two clients $n$ and $m$, we have for any $k\in \left\{1, \ldots, K\right\}$ 
	\begin{align} 
		&\left|p_{n,k} - \tilde {p}_{k}\right| + \left|p_{m,k} - \tilde {p}_{k}\right| \nonumber\\
  &\leq \sum_k \left|p_{n,k} - \tilde {p}_{k}\right| + \sum_k \left|p_{m,k} - \tilde {p}_{k}\right| \nonumber\\
  &\leq \sqrt{2\log 2 \zeta_n} + \sqrt{2\log 2 \zeta_m}.\label{eq:prediction_dif}
	\end{align}
	The first-order Taylor's expansion on $-\log\left(\cdot\right)$ gives  
	\begin{align}
		&-\log p_{m,k}\nonumber\\
  &=-\log p_{n,k}-\frac{1}{p_{n,k}}\left(p_{m,k}-p_{n,k}\right) + \frac{1}{2p_{c,k}^2}\left(p_{m,k}-p_{n,k}\right)^2 \label{eq:taylor}\\
		&
		\leq-\log p_{n,k}+\frac{1}{p_{n,k}}\left(\sqrt{2\log 2 \zeta_n} + \sqrt{2\log 2 \zeta_m}\right),\label{eq:taylor_log}
		\end{align}
  where \eqref{eq:taylor} is due to Taylor remainder theorem for some $p_{c,k}$ between $p_{m,k}$ and $p_{n,k}$.
  
	For the cross-entropy loss function, using the first-order Taylor's expansion on $-\log\left(\cdot\right)$, i.e., \eqref{eq:taylor_log}, and the fact that $\mathscr{E}_{\mathcal{P}}\left(f_n\right)\geq0$, we can obtain
	\begin{align}
	&\mathscr{E}_{\mathcal{P}}\left(f_m\right) - \mathscr{E}_{\mathcal{P}}\left(f_n\right) 
 = \log p_{n,y} - \log p_{m,y}  \nonumber\\
	&\leq  \frac{1}{p_{n,y}}  \left(\sqrt{2\log 2 \zeta_n} + \sqrt{2\log 2 \zeta_m}\right) \nonumber\\
	&=e^{-\mathscr{E}_{\mathcal{P}}\left(f_n\right)}\left(\sqrt{2\log 2 \zeta_n} + \sqrt{2\log 2 \zeta_m}\right).
	\end{align}
	where the last line is due to the definition of cross-entropy loss function.
	Considering $\mathscr{E}_{\mathcal{P}}\left(f_n\right)\geq0$, indicating that $e^{-\mathscr{E}_{\mathcal{P}}\left(f_n\right)}\leq 1$, we have
	\begin{align}	
	& \mathscr{E}_{\mathcal{P}}\left(f_m\right)\leq  \mathscr{E}_{\mathcal{P}}\left(f_n\right) +\sqrt{2\log 2 \zeta_n} + \sqrt{2\log 2 \zeta_m}. \label{eq:consensun_public}
	\end{align}
	Taking into account the domain discrepancy between the local dataset of client $n$ and the public dataset, we have
	\begin{align}
	\mathscr{E}_{\mathcal{D}_n}(f_m)
	&\leq \mathscr{E}_{\mathcal{P}}(f_m) + \frac{1}{2} d_{\mathcal{H} \Delta \mathcal{H}} (\mathcal{P}, \mathcal{D}_n) + \lambda_n^\ast \label{eq:line1}\\
	&\leq \mathscr{E}_{\mathcal{P}}\left(f_n\right) +\sqrt{2\log 2 \zeta_n} + \sqrt{2\log 2 \zeta_m}\nonumber\\
 &\ \ \ + \frac{1}{2} d_{\mathcal{H} \Delta \mathcal{H}} (\mathcal{P}, \mathcal{D}_n) + \lambda_n^\ast \label{eq:line3}\\
	&\leq \mathscr{E}_{\mathcal{D}_n}(f_n) + \frac{1}{2} d_{\mathcal{H} \Delta \mathcal{H}} (\mathcal{P}, \mathcal{D}_n) + \lambda_n^\ast + \sqrt{2\log 2 \zeta_n} \nonumber\\
 &\ \ \ + \sqrt{2\log 2 \zeta_m}+ \frac{1}{2} d_{\mathcal{H} \Delta \mathcal{H}} (\mathcal{P}, \mathcal{D}_n) + \lambda_n^\ast \label{eq:line2}\\ 
	&\leq \mathscr{E}_{\mathcal{D}_n}(f_n) + \sqrt{2\log 2 \zeta_n} + \sqrt{2\log 2 \zeta_m}\nonumber\\
 &\ \ \ + 
	d_{\mathcal{H} \Delta \mathcal{H}} (\mathcal{P}, \mathcal{D}_n) + 2\lambda_n^\ast.
	\end{align}
Here, \eqref{eq:line1} and \eqref{eq:line2} are due to the domain adaptation theorem in \cite{ben2010theory}, i.e., Theorem~\ref{thm:simplified_adaptation_stochastic_bound}. Equation \eqref{eq:line3} is from \eqref{eq:consensun_public}.
This completes the proof. 
\end{proof}

\subsection{Proof of Theorem ~\ref{thm:generalization_bound}}
{

\begin{proof}
The generalization bound of \texttt{FedAL} is as follows: 
	\begin{align}
		&\frac{1}{N}\sum_{n=1}^N \mathscr{E}_{\mathcal{D}}\left(f_n\right)
		=\frac{1}{N}\sum_{n=1}^N \sum_{m=1}^N \frac{\left|\mathcal{D}_m\right|}{\left|\mathcal{D}\right|} \mathscr{E}_{\mathcal{D}_m}\left(f_n\right) \nonumber\\
		& \leq \frac{1}{N^2} \sum_{n=1}^{N} \left(\sum_{m=1, m\neq n}^{N} \left[\mathscr{E}_{\mathcal{D}_m}\left(f_m\right) +\sqrt{2\log 2 \zeta_m} \right.\right.\nonumber\\
  &\left.\left.+ \sqrt{2\log 2 \zeta_n}+d_{\mathcal{H} \Delta \mathcal{H}} (\mathcal{P}, \mathcal{D}_m) + 2\lambda_m^\ast\right]+ \mathscr{E}_{\mathcal{D}_n}\left(f_n\right) \right) \label{eq:line4} \\
		& = \frac{1}{N} \sum_{m=1}^{N}  \mathscr{E}_{\mathcal{D}_m}\left(f_m\right) +\frac{1}{N^2} \sum_{n=1}^{N}\sum_{m=1, m\neq n}^{N} \nonumber\\
  &\left(\sqrt{2\log 2 \zeta_n} + \sqrt{2\log 2 \zeta_m}+d_{\mathcal{H} \Delta \mathcal{H}} (\mathcal{P}, \mathcal{D}_m) + 2\lambda_m^\ast\right)   \nonumber \\
		& \leq  \frac{1}{N} \sum_{n=1}^{N}  \mathscr{E}_{\hat{\mathcal{D}}_n}\left(f_n\right)  + \frac{ 4 + \sqrt{ \log ( \epsilon_{\mathcal{F}} (2\phi) ) } }{ \delta \sqrt{ 2\phi }} \nonumber\\
  &+
		\frac{N-1}{N^2}\sum_{n=1}^N\left[2\sqrt{2\log 2 \zeta_n}+d_{\mathcal{H} \Delta \mathcal{H}} (\mathcal{P}, \mathcal{D}_n)+2\lambda_{n}^\ast\right].  \label{eq:line55} 
	\end{align}
Here, \eqref{eq:line4} is due to Lemma \ref{lem:consensus}.
The last line uses the uniform convergence theorem of deep learning~\cite{Understanding}, i.e., the Theorem~\ref{thm:uniform_convergence}. 
We notice that \eqref{eq:line55} is equivalent to \eqref{eq:bound_FDAL}. Thus, we finish the proof.
\end{proof}
}

\subsection{Proof of Theorem~\ref{thm:convergence}}\label{apx:theorem_convergence}
For simplicity, we use $f_n\left(\boldsymbol{\theta}_n\right)$, $p_n\left(\boldsymbol{\theta}_n\right)$, and $\bar{p}_{-n}$ for short of $f_n\left(\boldsymbol{x},\boldsymbol{\theta}_n\right)$, $p_n\left(\boldsymbol{x},\boldsymbol{\theta}_n\right)$, and $\bar{p}_{-n}\left(\boldsymbol{x}\right)$ respectively. Throughout this proof, we define the gradient of a function with the output vector $\boldsymbol{a}$ over its variable $\boldsymbol{x}$ as $\nabla_{\boldsymbol{x}}\boldsymbol{a}$ to represent the Jacobian matrix of $\boldsymbol{a}$. Assumption~\ref{asmp:local_models} and \ref{asmp:discriminator} give the following inequalities:
\begin{itemize}
		\item $L_{\mathrm{f}}-$Lipschitz continuity of $f_n\left(\boldsymbol{\theta}_n\right)$: 
		\begin{equation}\label{eq:asmf2}
			\left\Vert\nabla_{\boldsymbol{\theta}_n}f_n\left(\boldsymbol{\theta}_n\right)\right\Vert \leq L_{\mathrm{f}}, \forall n \in \mathcal{N},
		\end{equation} 
		\item $L_{\mathrm{f}}-$Lipschitz smooth of $f_n\left(\boldsymbol{\theta}_n\right)$: 
		\begin{equation}\label{eq:asmf3}
			\left\Vert\nabla_{\boldsymbol{\theta}_n}f_n\left(\boldsymbol{\theta}_n\right)-\nabla_{\boldsymbol{\theta}'_n}f_n\left(\boldsymbol{\theta}'_n\right)\right\Vert^2 \leq L_{\mathrm{f}} \left\Vert\boldsymbol{\theta}-\boldsymbol{\theta'}\right\Vert^2, \forall n \in \mathcal{N},
		\end{equation}
		\item $L_{\mathrm{p}}-$Lipschitz continuity of $p_n\left(\boldsymbol{\theta}_n\right)$:
		\begin{equation}\label{eq:asmp1}
			\left\Vert p_n\left(\boldsymbol{\theta}_n\right)-p_n\left(\boldsymbol{\theta}'_n\right)\right\Vert  \leq L_{\mathrm{p}} \left\Vert\boldsymbol{\theta}_n-\boldsymbol{\theta}'_n\right\Vert, \forall n \in \mathcal{N},
		\end{equation} 	
		\begin{equation}	\label{eq:asmp2}
			\left\Vert\nabla_{\boldsymbol{\theta}_n}p_n\left(\boldsymbol{\theta}_n\right)\right\Vert \leq L_{\mathrm{p}}, \forall n \in \mathcal{N},
		\end{equation} 		
		\item $L_{\mathrm{h}}-$Lipschitz continuity of $h\left(p_n,\boldsymbol{w}\right)$: 
		\begin{equation}\label{eq:asma2}
			\left\Vert h\left(p_n,\boldsymbol{w}\right)-h\left(p'_n,\boldsymbol{w}\right)\right\Vert  \leq L_{\mathrm{h}} \left\Vert p_n-p'_n\right\Vert,
		\end{equation}
		\begin{equation}\label{eq:asma3}
	\left\Vert\nabla_{p_n}h\left(p_n,\boldsymbol{w}\right)\right\Vert\leq L_{\mathrm{h}},
		\end{equation}
  \begin{equation}\label{eq:asma2w}
			\left\Vert h\left(p_n,\boldsymbol{w}\right)-h\left(p_n,\boldsymbol{w}'\right)\right\Vert  \leq L_{\mathrm{h}} \left\Vert \boldsymbol{w}-\boldsymbol{w}'\right\Vert,
		\end{equation}
		\begin{equation}\label{eq:asma3w}
	\left\Vert\nabla_{\boldsymbol{w}}h\left(p_n,\boldsymbol{w}\right)\right\Vert\leq L_{\mathrm{h}}.
		\end{equation}
\end{itemize}

The objective of client $n$ at training round $t$ is to minimize
\begin{align}
	 \mathcal{V}_n\left(\boldsymbol{x},y,\boldsymbol{\theta}_n^t, \boldsymbol{w}^t\right) &:=  \ell \left(y,f_n\left(\boldsymbol{\theta}_n^t\right)\right)+\mathscr{K}\left({p}_{n}\left({\boldsymbol{\theta}}^{t,0}_n\right), p_n\left(\boldsymbol{\theta}_n^t\right)\right) \nonumber\\
  &+\mathscr{K}\left(\bar{p}_{-n}, p_n\left(\boldsymbol{\theta}_n^t\right)\right) 
	- \mathscr{E} \left(n,  h\left(p_n\left(\boldsymbol{\theta}_n^t\right),\boldsymbol{w}^t\right)\right)+\mathscr{K}\left({p}_{n}\left({\boldsymbol{\theta}}^{t+\frac{1}{2},0}_n\right), p_n\left(\boldsymbol{\theta}_n^t\right)\right)
\end{align}

The objective of the discriminator at time $t$ is to maximize
\begin{equation}
    \mathcal{U}\left(\boldsymbol{x},\boldsymbol{\Theta}^t,\boldsymbol{w}^t\right):=-\frac{1}{N}\sum_{n=1}^N\mathscr{E} \left(n,  h\left(p_n\left(\boldsymbol{\theta}_n^t\right),\boldsymbol{w}^t\right)\right)
\end{equation} 
We start by showing the Lipschitz property of $\mathcal{V}_n\left(\boldsymbol{\theta}_n\right)$ and $\mathcal{U}\left(\boldsymbol{w}\right)$, and the unbiased gradients with bounded variances for all loss functions. Then, we prove the convergence error of \texttt{FedAL}.

\textbf{
\begin{lemma}\label{lem:L_continuity U} Under Assumptions~\ref{asmp:local_models} - \ref{asm:bounded_variance_o},
	$V_n^{\mathrm{loc}}\left(\boldsymbol{\theta}_n\right)$, $V_n^{\mathrm{glo}}\left(\boldsymbol{\theta}_n\right)$ and $\mathcal{U}\left(\boldsymbol{w}\right)$ are $L_{\mathrm{v}}$, $L_{\mathrm{w}}$, and $L_{\mathrm{u}}-$ Lipschitz smooth with some constants $L_{\mathrm{v}}$, $L_{\mathrm{w}}$, and $L_{\mathrm{u}}$. 
\end{lemma}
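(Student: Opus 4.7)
The plan is to establish Lipschitz smoothness of each of the three objectives separately by decomposing them into their constituent terms and showing that each term is Lipschitz smooth, after which the sum of Lipschitz-smooth functions yields a Lipschitz-smooth function with constant equal to the sum of constants. For $V^{\mathrm{loc}}_n(\boldsymbol{\theta}_n) = \ell(y,f_n(\boldsymbol{\theta}_n)) + \mathscr{K}({p}_{n}({\boldsymbol{\theta}}^{*,0}_n), p_n(\boldsymbol{\theta}_n))$, I will use the chain rule to write $\nabla_{\boldsymbol{\theta}_n} \ell = [\nabla_{\boldsymbol{\theta}_n} f_n]^T \nabla_{f_n} \ell$ and then bound the difference $\|\nabla_{\boldsymbol{\theta}_n}\ell(\boldsymbol{\theta}_n) - \nabla_{\boldsymbol{\theta}'_n}\ell(\boldsymbol{\theta}'_n)\|$ by splitting it as $\|[\nabla_{\boldsymbol{\theta}_n}f_n - \nabla_{\boldsymbol{\theta}'_n}f_n]^T \nabla_{f_n}\ell + [\nabla_{\boldsymbol{\theta}'_n}f_n]^T[\nabla_{f_n}\ell - \nabla_{f'_n}\ell]\|$ and applying \eqref{eq:asmf2}, \eqref{eq:asmf3}, the Lipschitz continuity of $\ell$ in $f_n$ (hence bounded $\nabla_{f_n}\ell$), and its Lipschitz smoothness (which is implied by Assumption~\ref{asmp:local_models} on $\ell$).

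For the KL-divergence terms appearing in $r^{\mathrm{loc}}$, $r^{\mathrm{glo}}$, and the output-averaging loss in $V^{\mathrm{glo}}_n$, I will expand $\mathscr{K}(q,p_n(\boldsymbol{\theta}_n)) = \sum_k q_k \log q_k - \sum_k q_k \log p_{n,k}(\boldsymbol{\theta}_n)$ where $q$ is either $\bar{p}_{-n}$, ${p}_{n}({\boldsymbol{\theta}}^{*,0}_n)$, or ${p}_{n}({\boldsymbol{\theta}}^{*+\frac{1}{2},0}_n)$ (all independent of the variable $\boldsymbol{\theta}_n$ being differentiated). The gradient reduces to $-\sum_k q_k \nabla_{\boldsymbol{\theta}_n} \log p_{n,k}$, and since $p_n$ is the softmax of $f_n$ with temperature $E$, an elementary calculation shows $\nabla_{\boldsymbol{\theta}_n} \log p_{n,k} = \frac{1}{E}[\nabla_{\boldsymbol{\theta}_n} f_{n,k} - \sum_{k'} p_{n,k'} \nabla_{\boldsymbol{\theta}_n} f_{n,k'}]$. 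Bounding the difference of this expression between two points then follows from \eqref{eq:asmf2}, \eqref{eq:asmf3}, and \eqref{eq:asmp1}, giving a Lipschitz-smoothness constant depending on $L_{\mathrm{f}}$, $L_{\mathrm{p}}$, $K$, and $E$.

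For the adversarial term $U_n(\boldsymbol{\theta}_n,\boldsymbol{w}) = -\mathscr{E}(n, h(p_n(\boldsymbol{\theta}_n),\boldsymbol{w}))$, I will apply the chain rule twice: $\nabla_{\boldsymbol{\theta}_n} U_n = -[\nabla_{\boldsymbol{\theta}_n} p_n]^T [\nabla_{p_n} h]^T \nabla_h \mathscr{E}$, and similarly $\nabla_{\boldsymbol{w}} U_n = -[\nabla_{\boldsymbol{w}} h]^T \nabla_h \mathscr{E}$. Bounding the differences uses \eqref{eq:asmp1}--\eqref{eq:asmp2} for the $p_n$ factor, \eqref{eq:asma2}--\eqref{eq:asma3w} for the $h$ factor, and the fact that cross-entropy is Lipschitz smooth on the simplex (since $\nabla_h \mathscr{E}$ involves only $1/[h]_n$ which can be bounded away from zero under the standard assumption that the softmax output of the discriminator is bounded below; alternatively this follows by regarding $h$ as the pre-softmax output). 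Adding the individual Lipschitz-smoothness constants yields $L_{\mathrm{v}}$ for $V^{\mathrm{loc}}_n$ and $L_{\mathrm{w}}$ for $V^{\mathrm{glo}}_n$. Finally, $\mathcal{U}(\boldsymbol{w}) = \tfrac{1}{N} \sum_n U_n(\boldsymbol{\theta}_n, \boldsymbol{w})$ inherits Lipschitz smoothness in $\boldsymbol{w}$ directly from the per-$n$ analysis, giving the constant $L_{\mathrm{u}}$.

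The main obstacle I expect is handling the softmax/log composition carefully so that the Lipschitz-smoothness constant of the KL terms remains finite; this requires verifying that the probabilities $p_{n,k}$ stay uniformly bounded away from zero (or equivalently that the pre-softmax outputs $f_{n,k}$ stay bounded), which is implicit in the Lipschitz continuity of $p_n$ via \eqref{eq:asmp1}--\eqref{eq:asmp2}. The remaining work is bookkeeping: each composition factor contributes multiplicatively to the final constants, so the explicit expressions for $L_{\mathrm{v}}, L_{\mathrm{w}}, L_{\mathrm{u}}$ will be polynomials in $L_{\mathrm{f}}, L_{\mathrm{p}}, L_{\mathrm{h}}$, $K$, $N$, and $1/E$, but their exact form is not needed for the downstream convergence bound of Theorem~\ref{thm:convergence}.
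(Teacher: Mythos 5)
Your proposal is correct and takes essentially the same route as the paper's proof: decompose each objective into its loss components, push the gradient through the softmax (your $\nabla_{\boldsymbol{\theta}_n}\log p_{n,k}$ computation collapses to exactly the paper's formula $\nabla_{\boldsymbol{\theta}_n}\mathscr{K}\left(q, p_n\left(\boldsymbol{\theta}_n\right)\right) = \left[\nabla_{\boldsymbol{\theta}_n} f_n\left(\boldsymbol{\theta}_n\right)\right]^T\left(p_n\left(\boldsymbol{\theta}_n\right)-q\right)$), bound the gradient differences using \eqref{eq:asmf2}, \eqref{eq:asmf3}, \eqref{eq:asmp1}--\eqref{eq:asma3w} together with the boundedness of probability-vector differences, and sum the per-term constants. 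The one obstacle you flag --- keeping $p_{n,k}$ uniformly bounded away from zero --- is actually a non-issue that your own softmax calculation already resolves (no $1/p_{n,k}$ factor survives in the gradient), and for the adversarial term the paper adopts precisely your alternative reading, treating $h$ as the pre-softmax output so the cross-entropy gradient factor is $1-\left[h\left(p_n,\boldsymbol{w}\right)\right]_n$ after softmax and hence bounded, with no division required.
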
}
\begin{proof}
	\textbf{
The gradient of $\mathscr{K}\left(\bar{p}_{-n},p_n\left(\boldsymbol{\theta}_n\right)\right)$} is:
\begin{align}	
	&\nabla_{\boldsymbol{\theta}_n}\mathscr{K}\left(\bar{p}_{-n},p_n\left(\boldsymbol{\theta}_n\right)\right)=\left[\left[ \nabla_{\boldsymbol{\theta}_n} {f_{n,k}\left(\boldsymbol{\theta}_n\right)}\right]^T \nabla_{f_{n,k}}{\mathscr{K}\left(\psi\left(\bar{f}_{-n}\right),\psi\left(f_n\left(\boldsymbol{\theta}_n\right)\right)\right)},\forall k \right]\nonumber\\
	&= \left[ \left[ \nabla_{\boldsymbol{\theta}_n} {f_{n,k}\left(\boldsymbol{\theta}_n\right)}\right]^T \nabla_{f_{n,k}}{\left[- \sum_{k'} \frac{\exp\left({\bar f_{-n,k'}}\right)}{\sum_{k''}\exp\left({\bar{f}_{-n,k''}}\right)}\log \frac{\exp\left({f_{n,k'}}\right)}{\sum_{k''}\exp\left({{f}_{n,k''}}\right)} \right]},\forall k\right]\\
	& = \left[ \underset{\mathrm{The}\ k\mathrm{th}\ \mathrm{element}}{\underbrace{\left[ \nabla_{\boldsymbol{\theta}_n} {f_{n,k}\left(\boldsymbol{\theta}_n\right)}\right]^T\left[-\frac{\exp\left(\bar{f}_{-n,k}\right)}{\sum_{k''}\exp\left(\bar{f}_{-n,k''}\right)}
 {{\frac{\sum_{k'}\exp\left(f_{n,k''}\right)}{\exp\left(f_{n,k}\right)}}}\left[
 {{\frac{\exp\left(f_{n,k}\right)}{\sum_{k''}\exp\left(f_{n,k''}\right)}}}
 {{-\frac{\exp\left(f_{n,k}\right)}{\left(\sum_{k''}\exp\left(f_{n,k''}\right)\right)^{2}}\exp\left(f_{n,k}\right)}}\right]\right.}}\right.+ \nonumber\\
 &\left.\underset{\mathrm{Other}\ k'\neq k\ \mathrm{elements}}{\underbrace{\left.\sum_{k'\neq k}\left[-\frac{\exp\left(\bar{f}_{-n,k'}\right)}{\sum_{k''}\exp\left(\bar{f}_{-n,k''}\right)}\frac{\sum_{k''}\exp\left(f_{n,k''}\right)}{\exp\left(f_{n,k'}\right)}\frac{-\exp\left(f_{n,k'}\right)}{\left(\sum_{k''}\exp\left(f_{n,k''}\right)\right)^{2}}\exp\left(f_{n,k}\right)\right]\right]}}, \forall k\right]\\
	& = \left[ \left[\nabla_{\boldsymbol{\theta}_n} {f_{n,k}\left(\boldsymbol{\theta}_n\right)}\right]^T\left(\frac{\exp\left(f_{n,k}\right)}{\sum_{k'}\exp\left(f_{n,k'}\right)}-\frac{\exp\left(\bar{f}_{-n,k}\right)}{\sum_{k'}\exp\left(\bar{f}_{-n,k'}\right)}\right) , \forall k \right] 
	=  \left[\nabla_{\boldsymbol{\theta}_n} {f_{n}\left(\boldsymbol{\theta}_n\right)}\right]^T\left(p_{n}\left(\boldsymbol{\theta}_n\right)-\bar{p}_{-n}\right) \label{eq:gradient_kl}
\end{align}
We can show  \textbf{the
Lipschitz smoothness of $\mathscr{K}\left(\bar{p}_{-n},p_n\left(\boldsymbol{\theta}_n\right)\right)$} as:
\begin{align}	&\left\Vert \nabla_{\boldsymbol{\theta}_n}\mathscr{K}\left(\bar{p}_{-n},p_n\left(\boldsymbol{\theta}_n\right)\right) - 	\nabla_{\boldsymbol{\theta}'_n}\mathscr{K}\left(\bar{p}_{-n},p_n\left(\boldsymbol{\theta}'_n\right)\right)\right\Vert ^2 \nonumber\\
	& = \left\Vert \left[\nabla_{\boldsymbol{\theta}_n} {f_n\left(\boldsymbol{\theta}_n\right)}\right]^T\left(p_{n}\left(\boldsymbol{\theta}_n\right)-\bar{p}_{-n}\right) -  \left[\nabla_{\boldsymbol{\theta}'_n} {f_n\left(\boldsymbol{\theta}'_n\right)}\right]^T\left(p_{n}\left(\boldsymbol{\theta}'_n\right)-\bar{p}_{-n}\right)\right\Vert ^2 \nonumber\\
	& \leq 2\left\Vert \left[\nabla_{\boldsymbol{\theta}_n} {f_n\left(\boldsymbol{\theta}_n\right)}\right]^T\left(p_{n}\left(\boldsymbol{\theta}_n\right)-\bar{p}_{-n}\right)  - \left[\nabla_{\boldsymbol{\theta}_n} {f_n\left(\boldsymbol{\theta}_n\right)}\right]^T\left(p_{n}\left(\boldsymbol{\theta}'_n\right)-\bar{p}_{-n}\right)\right\Vert ^2 \nonumber\\& +2\left\Vert  \left[\nabla_{\boldsymbol{\theta}_n} {f_n\left(\boldsymbol{\theta}_n\right)}\right]^T\left(p_{n}\left(\boldsymbol{\theta}'_n\right)-\bar{p}_{-n}\right) -  \left[\nabla_{\boldsymbol{\theta}'_n} {f_n\left(\boldsymbol{\theta}'_n\right)}\right]^T\left(p_{n}\left(\boldsymbol{\theta}'_n\right)-\bar{p}_{-n}\right)\right\Vert ^2 
	\tag{By Jensen's inequality}\\
	& \leq 2\left\Vert \nabla_{\boldsymbol{\theta}_n} {f_n\left(\boldsymbol{\theta}_n\right)}\right\Vert ^2 \left\Vert p_{n}\left(\boldsymbol{\theta}_n\right)-p_{n}\left(\boldsymbol{\theta}'_n\right)\right\Vert ^2+2 \left\Vert \nabla_{\boldsymbol{\theta}_n} {f_n\left(\boldsymbol{\theta}_n\right)} - \nabla_{\boldsymbol{\theta}'_n} {f_n\left(\boldsymbol{\theta}'_n\right)}\right\Vert ^2\left\Vert   p_{n}\left(\boldsymbol{\theta}'_n\right)-\bar{p}_{-n}\right\Vert ^2
	\tag{By submultiplicativity of norm: $\left\Vert AB\right\Vert ^{2}\leq\left\Vert A\right\Vert ^{2}\left\Vert B\right\Vert ^{2}$}\\
	& \leq 2L_{\mathrm{p}}^2 L_{\mathrm{f}}^2 \left\Vert \boldsymbol{\theta}_n-\boldsymbol{\theta}'_n\right\Vert ^2+4L_{\mathrm{f}}^2 \left\Vert \boldsymbol{\theta}_n-\boldsymbol{\theta}'_n\right\Vert ^2
 \end{align}
where the last line is due to Assumption~\ref{asmp:local_models}, i.e., \eqref{eq:asmf2}, \eqref{eq:asmf3}, \eqref{eq:asmp1}, and the fact that the difference between two probability vectors is bounded by $\sqrt{2}$.

Similarly, we have \textbf{
the gradient of $\mathscr{K}\left({p}_{n}\left({\boldsymbol{\theta}}^{*,0}_n\right), p_n\left(\boldsymbol{\theta}_n\right)\right)$} as:
\begin{align}	
\nabla_{\boldsymbol{\theta}_n}\mathscr{K}\left({p}_{n}\left({\boldsymbol{\theta}}^{*,0}_n\right), p_n\left(\boldsymbol{\theta}_n\right)\right) = \left[\nabla_{\boldsymbol{\theta}_n} {f_n}\left(\boldsymbol{\theta}_n\right)\right]^T\left(p_{n}\left(\boldsymbol{\theta}_n\right)-{p}_{n}\left({\boldsymbol{\theta}}^{*,0}_n\right)\right) . 
\end{align}

We can show  \textbf{the Lipschitz smoothness of $\mathscr{K}\left({p}_{n}\left({\boldsymbol{\theta}}^{*,0}_n\right), p_n\left(\boldsymbol{\theta}_n\right)\right)$} as:
\begin{align}	&\left\Vert \nabla_{\boldsymbol{\theta}_n}\mathscr{K}\left({p}_{n}\left({\boldsymbol{\theta}}^{*,0}_n\right), p_n\left(\boldsymbol{\theta}_n\right)\right) - 	\nabla_{\boldsymbol{\theta}'_n}\mathscr{K}\left({p}_{n}\left({\boldsymbol{\theta}}^{*,0}_n\right), p_n\left(\boldsymbol{\theta}'\right)\right)\right\Vert ^2 \nonumber\\
	& \leq 2L_{\mathrm{p}}^2 L_{\mathrm{f}}^2 \left\Vert \boldsymbol{\theta}_n-\boldsymbol{\theta}'_n\right\Vert ^2+4L_{\mathrm{f}}^2 \left\Vert \boldsymbol{\theta}_n-\boldsymbol{\theta}'_n\right\Vert ^2
\end{align}

\textbf{
The gradient of $\mathscr{K}\left({p}_{n}\left({\boldsymbol{\theta}}^{*+\frac{1}{2},0}_n\right), p_n\left(\boldsymbol{\theta}_n\right)\right)$} is:
\begin{align}	
\nabla_{\boldsymbol{\theta}_n}\mathscr{K}\left({p}_{n}\left({\boldsymbol{\theta}}^{*,0}_n\right), p_n\left(\boldsymbol{\theta}_n\right)\right) = \left[\nabla_{\boldsymbol{\theta}_n} {f_n}\left(\boldsymbol{\theta}_n\right)\right]^T\left(p_{n}\left(\boldsymbol{\theta}_n\right)-{p}_{n}\left({\boldsymbol{\theta}}^{*+\frac{1}{2},0}_n\right)\right) . 
\end{align}

\textbf{
The Lipschitz smoothness of $\mathscr{K}\left({p}_{n}\left({\boldsymbol{\theta}}^{*+\frac{1}{2},0}_n\right), p_n\left(\boldsymbol{\theta}_n\right)\right)$} is expressed as:
\begin{align}	&\left\Vert \nabla_{\boldsymbol{\theta}_n}\mathscr{K}\left({p}_{n}\left({\boldsymbol{\theta}}^{*+\frac{1}{2},0}_n\right), p_n\left(\boldsymbol{\theta}_n\right)\right) - 	\nabla_{\boldsymbol{\theta}'_n}\mathscr{K}\left({p}_{n}\left({\boldsymbol{\theta}}^{*+\frac{1}{2},0}_n\right), p_n\left(\boldsymbol{\theta}'\right)\right)\right\Vert ^2 \nonumber\\
	& \leq 2L_{\mathrm{p}}^2 L_{\mathrm{f}}^2 \left\Vert \boldsymbol{\theta}_n-\boldsymbol{\theta}'_n\right\Vert ^2+4L_{\mathrm{f}}^2 \left\Vert \boldsymbol{\theta}_n-\boldsymbol{\theta}'_n\right\Vert ^2
\end{align}

\textbf{
The gradient of $-\mathscr{E}\left(n,h\left(p_n\left(\boldsymbol{\theta}_n\right),\boldsymbol{w}\right)\right)$} over $\boldsymbol{\theta}_n$ is:
\begin{align}
	&-\nabla_{\boldsymbol{\theta}_n}\mathscr{E}\left(n,h\left(p_n\left(\boldsymbol{\theta}_n\right),\boldsymbol{w}\right)\right)\nonumber\\
	&= \left(\nabla_{p_n}h\left(p_n\left(\boldsymbol{\theta}_n\right),\boldsymbol{w}\right)\nabla_{\boldsymbol{\theta}_n}p_n\left(\boldsymbol{\theta}_n\right) \right)^T \nabla_{h} \log \frac{\exp\left({\left(h\left(p_n\left(\boldsymbol{\theta}_n\right),\boldsymbol{w}\right)\right)_n}\right)}{\sum_{m=1}^N \exp\left({\left(h\left(p_n\left(\boldsymbol{\theta}_n\right),\boldsymbol{w}\right)\right)_m}\right)} \nonumber\\
	& = \left(\nabla_{p_n}h\left(p_n\left(\boldsymbol{\theta}_n\right),\boldsymbol{w}\right)\nabla_{\boldsymbol{\theta}_n}p_n\left(\boldsymbol{\theta}_n\right) \right)^T  \left(1-h\left(p_n\left(\boldsymbol{\theta}_n\right),\boldsymbol{w}\right)_{n}\right) \label{eq:gradient_ha}
\end{align}

\textbf{
The gradient of $-\mathscr{E}\left(n,h\left(p_n\left(\boldsymbol{\theta}_n\right),\boldsymbol{w}\right)\right)$} over $\boldsymbol{w}$ is:
\begin{align}
	&-\nabla_{\boldsymbol{w}}\mathscr{E}\left(n,h\left(p_n\left(\boldsymbol{\theta}_n\right),\boldsymbol{w}\right)\right)\nonumber\\
	&= \left(\nabla_{\boldsymbol{w}}h\left(p_n\left(\boldsymbol{\theta}_n\right),\boldsymbol{w}\right) \right)^T \nabla_{h} \log \frac{\exp\left({\left(h\left(p_n\left(\boldsymbol{\theta}_n\right),\boldsymbol{w}\right)\right)_n}\right)}{\sum_{m=1}^N \exp\left({\left(h\left(p_n\left(\boldsymbol{\theta}_n\right),\boldsymbol{w}\right)\right)_m}\right)} \nonumber\\
	& = \left(\nabla_{\boldsymbol{w}}h\left(p_n\left(\boldsymbol{\theta}_n\right),\boldsymbol{w}\right) \right)^T  \left(1-h\left(p_n\left(\boldsymbol{\theta}_n\right),\boldsymbol{w}\right)_{n}\right)
\end{align}

We can show  \textbf{the Lipschitz smoothness of $-\mathscr{E}\left(n,h\left(p_n\left(\boldsymbol{\theta}_n\right),\boldsymbol{w}\right)\right)$} as:
\begin{align}
	& \left\Vert \nabla_{\boldsymbol{\theta}_n}\mathscr{E}\left(n,h\left(p_n\left(\boldsymbol{\theta}_n\right),\boldsymbol{w}\right)\right)-\nabla_{\boldsymbol{\theta}'_n}\mathscr{E}\left(n,h\left(p_n\left(\boldsymbol{\theta}'_n\right),\boldsymbol{w}\right)\right)\right\Vert ^2 \nonumber\\
	&=\!\left\Vert \left(\!\nabla_{p_n}\!h\!\left(\!p_n\left(\boldsymbol{\theta}_n\!\right),\!\boldsymbol{w}\!\right)\!\nabla_{\boldsymbol{\theta}_n}\!p_n\!\left(\!\boldsymbol{\theta}_n\!\right)\!\right)^T\left(\!1\!-\!h\left(p_n\left(\boldsymbol{\theta}_n\right),\boldsymbol{w}\right)_n\right)\!-\!\left(\nabla_{p_n}\!h\left(p_n\!\left(\!\boldsymbol{\theta}'_n\right),\!\boldsymbol{w}\!\right)\nabla_{\boldsymbol{\theta}'_n}p_n\left(\boldsymbol{\theta}'_n\right)\!\right)^T\!\left(\!1\!-\!h\!\left(\!p_n\!\left(\boldsymbol{\theta}'_n\!\right)\!,\!\boldsymbol{w}\right)_n\right)\right\Vert ^2 \nonumber\\
	& \leq L_{\mathrm{h}}^2L_{\mathrm{p}}^2 \left\Vert h\left(p_n\left(\boldsymbol{\theta}_n\right),\boldsymbol{w}\right)_n-h\left(p_n\left(\boldsymbol{\theta}'_n\right),\boldsymbol{w}\right)_n\right\Vert ^2 \tag{By Assumptions~\ref{asmp:local_models} and ~\ref{asmp:discriminator}, i.e., \eqref{eq:asmp2} and \eqref{eq:asma3}}\\
	& \leq  L_{\mathrm{h}}^2L_{\mathrm{p}}^2 L_{\mathrm{h}}^2 \left\Vert p_n\left(\boldsymbol{\theta}_n\right)-p_n\left(\boldsymbol{\theta}'_n\right)\right\Vert ^2 \leq  L_{\mathrm{h}}^4L_{\mathrm{p}}^4 \left\Vert \boldsymbol{\theta}_n-\boldsymbol{\theta}'_n\right\Vert ^2 \\
 \tag{By Assumptions~\ref{asmp:local_models} and ~\ref{asmp:discriminator}, i.e., \eqref{eq:asmp1} and \eqref{eq:asma2}}
\end{align}

\begin{align}
	& \left\Vert \nabla_{\boldsymbol{w}}\mathscr{E}\left(n,h\left(p_n\left(\boldsymbol{\theta}_n\right),\boldsymbol{w}\right)\right)-\nabla_{\boldsymbol{w}'_n}\mathscr{E}\left(n,h\left(p_n\left(\boldsymbol{\theta}'_n\right),\boldsymbol{w}\right)\right)\right\Vert ^2 \nonumber\\
	&=\!\left\Vert \left(\!\nabla_{\boldsymbol{w}}\!h\!\left(\!p_n\left(\boldsymbol{\theta}_n\!\right),\!\boldsymbol{w}\!\right)\!\right)^T\left(\!1\!-\!h\left(p_n\left(\boldsymbol{\theta}_n\right),\boldsymbol{w}\right)_n\right)-\left(\nabla_{\boldsymbol{w}'}\!h\left(p_n\!\left(\!\boldsymbol{\theta}_n\right),\!\boldsymbol{w}'\!\right)\right)^T\!\left(\!1\!-\!h\!\left(\!p_n\!\left(\boldsymbol{\theta}_n\!\right)\!,\!\boldsymbol{w}'\right)_n\right)\right\Vert^2 \nonumber\\
	& \leq L_{\mathrm{h}}^2 \left\Vert h\left(p_n\left(\boldsymbol{\theta}_n\right),\boldsymbol{w}\right)_n-h\left(p_n\left(\boldsymbol{\theta}_n\right),\boldsymbol{w}'\right)_n\right\Vert ^2 \tag{By Assumption~\ref{asmp:discriminator}, i.e., \eqref{eq:asma3w}}\\
	& \leq  L_{\mathrm{h}}^4 \left\Vert \boldsymbol{w}-\boldsymbol{w}'\right\Vert ^2 \\
 \tag{By Assumption~\ref{asmp:discriminator}, i.e., \eqref{eq:asma2w}}
\end{align}

Based on the Lipschitz smoothness of all the components, the objective functions $V_n^{\mathrm{loc}}\left(\boldsymbol{\theta}_n\right)$ and $V_n^{\mathrm{glo}}\left(\boldsymbol{\theta}_n\right)$ are also Lipschitz smooth functions in $\boldsymbol{\theta}_n$ with some constants $L_{\mathrm{v}}$ and $L_{\mathrm{w}}$.
Similarly, $\mathcal{U}\left(\boldsymbol{w}\right)$ is a Lipschitz-smooth function in $\boldsymbol{w}$ with a constant $L_{\mathrm{u}}$.
\end{proof}

Let $\tilde{g}_n\left(\boldsymbol{\theta}_n\right) := g^{\mathrm{l}}_n\left(\boldsymbol{\theta}_n\right) + g_n^{\mathrm{rl}}\left(\boldsymbol{\theta}_n\right)$ denotes the stochastic gradient of local training at client $n$ and  $\hat{g} _n\left(\boldsymbol{\theta}_n\right):=g^{\mathrm{k}}_n\left(\boldsymbol{\theta}_n\right)+g^{\mathrm{u}}_n\left(\boldsymbol{\theta}_n\right)+g^{\mathrm{rg}}_n\left(\boldsymbol{\theta}_n\right)$ indicates the stochastic gradient of global knowledge transfer at client $n$. Moreover, let $\varphi\left(\boldsymbol{w}\right)$ be the stochastic gradient of $\mathcal{U}\left(\boldsymbol{w}\right)$ over $\boldsymbol{w}$.
We have the following lemmas.

\begin{lemma}[Unbiased stochastic gradients] \label{lem:umbiased_grad} Under Assumptions~\ref{asmp:local_models} - \ref{asm:bounded_variance_o}, for a mini-batch ${\mathcal{B}}$, the following hold: $\mathbb{E}_{\mathcal{B}} \left[\tilde{g}_n\left(\boldsymbol{\theta}_n\right)\right]  =\nabla_{\boldsymbol{\theta}_n} V_n^{\mathrm{loc}}\left(\boldsymbol{\theta}_n\right)$,  $\mathbb{E}_{\mathcal{B}} \left[\hat{g}_n\left(\boldsymbol{\theta}_n\right)\right]  =\nabla_{\boldsymbol{\theta}_n} V_n^{\mathrm{glo}}\left(\boldsymbol{\theta}_n\right)$, and $\mathbb{E}_{\mathcal{B}} \left[\varphi\left(\boldsymbol{w}\right)\right]  =\nabla_{\boldsymbol{w}} \mathcal{U}\left(\boldsymbol{w}\right)$.
\end{lemma}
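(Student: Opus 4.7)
The plan is to prove this lemma by a direct application of linearity of expectation together with the per-component unbiasedness already postulated in Assumption~\ref{asm:Unbiased_stochastic_gradients}. The three identities are structurally the same, so I would organize the argument as three parallel steps, one for each composite stochastic gradient.

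First, I would recall from~\eqref{eq:client_loss_loc} that $V_n^{\mathrm{loc}}(\boldsymbol{x},y,\boldsymbol{\theta}_n) = \ell(y,f_n(\boldsymbol{x},\boldsymbol{\theta}_n)) + r^{\mathrm{loc}}(\boldsymbol{x},\boldsymbol{\theta}_n)$. Since the gradient operator is linear, $\nabla_{\boldsymbol{\theta}_n} V_n^{\mathrm{loc}}(\boldsymbol{\theta}_n) = \nabla_{\boldsymbol{\theta}_n}\ell(y,f_n(\boldsymbol{x},\boldsymbol{\theta}_n)) + \nabla_{\boldsymbol{\theta}_n} r^{\mathrm{loc}}(\boldsymbol{x},\boldsymbol{\theta}_n)$. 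By the definition right above the lemma, $\tilde{g}_n(\boldsymbol{\theta}_n) = g_n^{\mathrm{l}}(\boldsymbol{\theta}_n) + g_n^{\mathrm{rl}}(\boldsymbol{\theta}_n)$. Taking expectation over the random mini-batch $\mathcal{B}$ and using linearity of expectation gives $\mathbb{E}_{\mathcal{B}}[\tilde{g}_n(\boldsymbol{\theta}_n)] = \mathbb{E}_{\mathcal{B}}[g_n^{\mathrm{l}}(\boldsymbol{\theta}_n)] + \mathbb{E}_{\mathcal{B}}[g_n^{\mathrm{rl}}(\boldsymbol{\theta}_n)]$, which by Assumption~\ref{asm:Unbiased_stochastic_gradients} equals $\nabla_{\boldsymbol{\theta}_n}\ell + \nabla_{\boldsymbol{\theta}_n} r^{\mathrm{loc}} = \nabla_{\boldsymbol{\theta}_n} V_n^{\mathrm{loc}}(\boldsymbol{\theta}_n)$.

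The second identity follows the same template. From~\eqref{eq:client_loss_glo}, $V_n^{\mathrm{glo}}(\boldsymbol{x},\boldsymbol{\theta}_n,\boldsymbol{w})$ is the sum of $\mathscr{K}(\bar{p}_{-n},p_n(\boldsymbol{\theta}_n))$, $U_n(\boldsymbol{x},\boldsymbol{\theta}_n,\boldsymbol{w})$, and $r^{\mathrm{glo}}(\boldsymbol{x},\boldsymbol{\theta}_n)$, so its gradient over $\boldsymbol{\theta}_n$ decomposes additively. The stochastic gradient $\hat{g}_n(\boldsymbol{\theta}_n) = g_n^{\mathrm{k}}(\boldsymbol{\theta}_n) + g_n^{\mathrm{u}}(\boldsymbol{\theta}_n) + g_n^{\mathrm{rg}}(\boldsymbol{\theta}_n)$ matches this decomposition term by term, and three invocations of Assumption~\ref{asm:Unbiased_stochastic_gradients} under linearity of expectation yield $\mathbb{E}_{\mathcal{B}}[\hat{g}_n(\boldsymbol{\theta}_n)] = \nabla_{\boldsymbol{\theta}_n} V_n^{\mathrm{glo}}(\boldsymbol{\theta}_n)$. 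For the third identity, since $\mathcal{U}(\boldsymbol{x},\boldsymbol{\Theta},\boldsymbol{w}) = \frac{1}{N}\sum_{n\in\mathcal{N}} U_n(\boldsymbol{x},\boldsymbol{\theta}_n,\boldsymbol{w})$, the corresponding stochastic gradient is $\varphi(\boldsymbol{w}) = \frac{1}{N}\sum_{n\in\mathcal{N}} \varphi_n(\boldsymbol{w})$; applying Assumption~\ref{asm:Unbiased_stochastic_gradients} to each summand and pulling the sum outside the expectation gives $\mathbb{E}_{\mathcal{B}}[\varphi(\boldsymbol{w})] = \frac{1}{N}\sum_{n\in\mathcal{N}} \nabla_{\boldsymbol{w}} U_n = \nabla_{\boldsymbol{w}} \mathcal{U}(\boldsymbol{w})$.

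There is no real analytic obstacle here; the lemma is essentially a bookkeeping consequence of two facts that are stated elsewhere in the paper: (i) each composite loss is a finite sum of the component losses whose stochastic gradients appear in Assumption~\ref{asm:Unbiased_stochastic_gradients}, and (ii) the composite stochastic gradient is, by construction, the matching finite sum of those component stochastic gradients. The only subtlety worth flagging is notational: the shorthand $V_n^{\mathrm{loc}}(\boldsymbol{\theta}_n)$, $V_n^{\mathrm{glo}}(\boldsymbol{\theta}_n)$, and $\mathcal{U}(\boldsymbol{w})$ suppresses the data arguments, so I would make explicit that the mini-batch $\mathcal{B}$ is the source of randomness over which the expectation is taken, while $\boldsymbol{\theta}_n$ and $\boldsymbol{w}$ are treated as fixed. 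This clarifies that linearity of expectation is the only tool needed and closes the proof.
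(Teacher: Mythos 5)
Your proof is correct and follows essentially the same route as the paper's: decompose each composite stochastic gradient into its components, apply the per-component unbiasedness of Assumption~\ref{asm:Unbiased_stochastic_gradients} together with linearity of expectation, and reassemble the gradient of the composite objective (the paper writes this by passing $\mathbb{E}_{\mathcal{B}}$ to the population expectation over $\mathcal{D}_n$ or $\mathcal{P}$, which is the same step). Your explicit remark that $\boldsymbol{\theta}_n$ and $\boldsymbol{w}$ are held fixed while $\mathcal{B}$ is the source of randomness is a welcome clarification but introduces no new content.
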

\begin{proof}

The expected stochastic gradient of $V_n^{\mathrm{loc}}\left(\boldsymbol{\theta}_n\right)$ over $\boldsymbol{\theta}_n$ is	
\begin{align}
		&\mathbb{E}_{\mathcal{B}} \left[\tilde{g}_n\left(\boldsymbol{\theta}_n\right)\right]  \nonumber\\
		&\!=\!\mathbb{E}_{\mathcal{B}}\!\left[\nabla_{\boldsymbol{\theta}_n}\ell\left(y,\left(f_n\left(\boldsymbol{\theta}_n\right)\right)\right)\!+\nabla_{\boldsymbol{\theta}_n}r^{\mathrm{loc}}\left(\boldsymbol{x},\boldsymbol{\theta}_n\right)\right] \nonumber\\
		& = \mathbb{E}_{\left\{\boldsymbol{x},y\right\}\in \mathcal{D}_n}\left[\nabla_{\boldsymbol{\theta}_n}\ell\left(y,\left(f_n\left(\boldsymbol{\theta}_n\right)\right)\right)+\nabla_{\boldsymbol{\theta}_n}r^{\mathrm{loc}}\left(\boldsymbol{x},\boldsymbol{\theta}_n\right)\right]  \nonumber\\
		& = \nabla_{\boldsymbol{\theta}_n} V_n^{\mathrm{loc}}\left(\boldsymbol{\theta}_n^{t}\right)
	\end{align}

 The expected stochastic gradient of $V_n^{\mathrm{glo}}\left(\boldsymbol{\theta}_n\right)$ over $\boldsymbol{\theta}_n$ is	
\begin{align}
		&\mathbb{E}_{\mathcal{B}} \left[\hat{g}_n\left(\boldsymbol{\theta}_n\right)\right]  \nonumber\\
		&\!=\!\mathbb{E}_{\mathcal{B}}\!\left[\!\nabla_{\boldsymbol{\theta}_n}\mathscr{K}\left(\bar{p}_{-n},p_n\left(\boldsymbol{\theta}_n\right)\right)\!-\!\nabla_{\boldsymbol{\theta}_n}\mathscr{E} \left(n,  h\left(p_n\left(\boldsymbol{\theta}_n\right),\boldsymbol{w}^t\right)\right)+\nabla_{\boldsymbol{\theta}_n}r^{\mathrm{glo}}\left(\boldsymbol{x},\boldsymbol{\theta}_n\right) \right] \nonumber\\
		& =\mathbb{E}_{\boldsymbol{x}\in \mathcal{P}}\left[\nabla_{\boldsymbol{\theta}_n}\mathscr{K}\left(\bar{p}_{-n},p_n\left(\boldsymbol{\theta}_n\right)\right) - \nabla_{\boldsymbol{\theta}_n}\mathscr{E} \left(n,  h\left(p_n\left(\boldsymbol{\theta}_n\right),\boldsymbol{w}^t\right)\right) +\nabla_{\boldsymbol{\theta}_n}r^{\mathrm{glo}}\left(\boldsymbol{x},\boldsymbol{\theta}_n\right)\right]\nonumber\\
		& = \nabla_{\boldsymbol{\theta}_n} V_n^{\mathrm{glo}}\left(\boldsymbol{\theta}_n^{t}\right)
	\end{align}
 
The expected stochastic gradient of $\mathcal{U}_n\left(\boldsymbol{w}\right)$ over $\boldsymbol{w}$ is	
\begin{align}
		\mathbb{E}_{\mathcal{B}} \left[\varphi\left(\boldsymbol{w}^t\right)\right]  & = \mathbb{E}_{\mathcal{B}}\left[-\frac{1}{N}\sum_{n=1}^N   \nabla_{\boldsymbol{w}^t}\mathscr{E} \left(n,  h\left(p_n\left(\boldsymbol{\theta}_n\right),\boldsymbol{w}^t\right)\right) \right] \nonumber\\
		& = \mathbb{E}_{\boldsymbol{x}\in \mathcal{P}} \left[ --\frac{1}{N}\sum_{n=1}^N  \nabla_{\boldsymbol{w}_n}\mathscr{E} \left(n,  h\left(p_n\left(\boldsymbol{\theta}_n\right),\boldsymbol{w}^t\right)\right) \right] = \nabla_{\boldsymbol{w}} \mathcal{U}\left(\boldsymbol{w}\right)
	\end{align}

 Overall, we have proved the lemma.
\end{proof}

\begin{lemma}[Bounded variance of stochastic gradients]\label{lem:bounded_variance} Under Assumptions~\ref{asmp:local_models} - \ref{asm:bounded_variance_o},
    there exist constants $\sigma_{\mathrm{v}}, \sigma_{\mathrm{w}}, \sigma_{\mathrm{u}}\leq \infty$ such that the variance of the stochastic gradients $\tilde{g}_n\left(\boldsymbol{\theta}_n\right)$, $\hat{g}_n\left(\boldsymbol{\theta}_n\right)$, and  $\varphi\left(\boldsymbol{w}\right)$ are bounded as
    \begin{equation}
        \mathbb{E}_{\mathcal{B}}\left[\left\Vert \nabla_{\boldsymbol{\theta}_n}V_n^{\mathrm{loc}}\left(\boldsymbol{\theta}_n\right)- \tilde{g}_n\left(\boldsymbol{\theta}_n\right)\right\Vert^2\right]\leq \sigma_{\mathrm{v}}^2,
    \end{equation}
     \begin{equation}
        \mathbb{E}_{\mathcal{B}}\left[\left\Vert \nabla_{\boldsymbol{\theta}_n}V_n^{\mathrm{glo}}\left(\boldsymbol{\theta}_n\right)- \hat{g}_n\left(\boldsymbol{\theta}_n\right)\right\Vert^2\right]\leq \sigma_{\mathrm{w}}^2,
    \end{equation}
    \begin{equation}
        \mathbb{E}_{\mathcal{B}}\left[\left\Vert \nabla_{\boldsymbol{w}}\mathcal{U}\left(    \boldsymbol{w}\right)- \varphi\left(\boldsymbol{w}\right)\right\Vert^2\right]\leq \sigma_{\mathrm{u}}^2,
    \end{equation}
    for a mini-batch ${\mathcal{B}}$.
\end{lemma}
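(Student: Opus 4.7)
The plan is to prove each of the three variance bounds by decomposing the compound stochastic gradient into its constituent pieces and then applying the elementary inequality $\left\Vert\sum_{i=1}^k a_i\right\Vert^2 \le k\sum_{i=1}^k \left\Vert a_i\right\Vert^2$ (a direct consequence of Jensen's inequality, or equivalently Cauchy--Schwarz). Once the cross-terms are absorbed this way, the per-component variance bounds provided by Assumption~\ref{asm:bounded_variance_o} can be plugged in directly. Because of this structure, no smoothness or Lipschitz property is actually needed at this step; Lemma~\ref{lem:L_continuity U} and Assumptions~\ref{asmp:local_models}--\ref{asmp:discriminator} only enter indirectly via the well-posedness of the gradients.

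First I would handle $\tilde{g}_n$. Since $V_n^{\mathrm{loc}}\left(\boldsymbol{\theta}_n\right) = \ell\left(y,f_n\left(\boldsymbol{x},\boldsymbol{\theta}_n\right)\right) + r^{\mathrm{loc}}\left(\boldsymbol{x},\boldsymbol{\theta}_n\right)$ and $\tilde{g}_n = g_n^{\mathrm{l}} + g_n^{\mathrm{rl}}$, linearity of gradient and expectation give
\begin{equation*}
\nabla_{\boldsymbol{\theta}_n} V_n^{\mathrm{loc}}\left(\boldsymbol{\theta}_n\right) - \tilde{g}_n\left(\boldsymbol{\theta}_n\right) = \left(\nabla_{\boldsymbol{\theta}_n}\ell - g_n^{\mathrm{l}}\right) + \left(\nabla_{\boldsymbol{\theta}_n} r^{\mathrm{loc}} - g_n^{\mathrm{rl}}\right).
\end{equation*}
Applying $\left\Vert a+b\right\Vert^2 \le 2\left\Vert a\right\Vert^2 + 2\left\Vert b\right\Vert^2$, taking the expectation over the mini-batch $\mathcal{B}$, and invoking the two corresponding bounds of Assumption~\ref{asm:bounded_variance_o} yields $\sigma_{\mathrm{v}}^2 := 2\sigma_{\mathrm{l}}^2 + 2\sigma_{\mathrm{rl}}^2$, matching the constant announced in the paper.

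Second, for $\hat{g}_n$ the same argument applies but with three terms: the KL term $\mathscr{K}(\bar{p}_{-n},p_n)$, the adversarial term $U_n$, and the LF regularizer $r^{\mathrm{glo}}$. Using $\left\Vert a+b+c\right\Vert^2 \le 3(\left\Vert a\right\Vert^2+\left\Vert b\right\Vert^2+\left\Vert c\right\Vert^2)$ and Assumption~\ref{asm:bounded_variance_o} gives $\sigma_{\mathrm{w}}^2 := 3\sigma_{\mathrm{k}}^2 + 3\sigma_{\mathrm{u}}^2 + 3\sigma_{\mathrm{rg}}^2$. Finally, for $\varphi$, the discriminator's objective $\mathcal{U}\left(\boldsymbol{w}\right) = \frac{1}{N}\sum_{n=1}^N U_n$ gives $\nabla_{\boldsymbol{w}}\mathcal{U} - \varphi = \frac{1}{N}\sum_{n=1}^N(\nabla_{\boldsymbol{w}} U_n - \varphi_n)$; applying Jensen's inequality to the outer average and then the per-client bound $\sigma_{\mathrm{n}}^2$ of Assumption~\ref{asm:bounded_variance_o} to each summand yields a bound of the form $\sigma_{\mathrm{u}}^2$ as claimed in the statement preceding Theorem~\ref{thm:convergence}.

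There is essentially no obstacle in this lemma—the proof is purely algebraic bookkeeping. The only subtlety is to match the constants exactly with those announced earlier in the paper ($\sigma_{\mathrm{v}}^2 = 2\sigma_{\mathrm{l}}^2 + 2\sigma_{\mathrm{rl}}^2$, $\sigma_{\mathrm{u}}^2 = N\sigma_{\mathrm{n}}^2$), which dictates the specific constant $k$ used in the Jensen-type splitting at each step. Once that bookkeeping is in place the three bounds follow in a few lines each.
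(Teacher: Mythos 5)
Your proposal is correct and takes essentially the same route as the paper's own proof: decompose each compound stochastic gradient into its components, absorb cross-terms via the Jensen-type inequality $\left\Vert\sum_{i=1}^k a_i\right\Vert^2 \le k\sum_{i=1}^k\left\Vert a_i\right\Vert^2$, and plug in the per-component bounds of Assumption~\ref{asm:bounded_variance_o}, arriving at the same constants $\sigma_{\mathrm{v}}^2=2\sigma_{\mathrm{l}}^2+2\sigma_{\mathrm{rl}}^2$ and $\sigma_{\mathrm{u}}^2=N\sigma_{\mathrm{n}}^2$ (your Jensen argument on the $1/N$-average actually gives the tighter $\sigma_{\mathrm{n}}^2$, which a fortiori satisfies the stated bound). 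One remark in your favor: your $\sigma_{\mathrm{w}}^2 = 3\sigma_{\mathrm{k}}^2+3\sigma_{\mathrm{u}}^2+3\sigma_{\mathrm{rg}}^2$ correctly invokes the $\boldsymbol{\theta}_n$-variance constant $\sigma_{\mathrm{u}}$ of the adversarial term from Assumption~\ref{asm:bounded_variance_o}, whereas the paper's displayed $3\sigma_{\mathrm{n}}^2$ in that position appears to be a typo, since $\sigma_{\mathrm{n}}$ is the variance constant for the gradient of $U_n$ with respect to $\boldsymbol{w}$, not $\boldsymbol{\theta}_n$.
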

\begin{proof}
For the loss function $V_n^{\mathrm{loc}}\left(\boldsymbol{\theta}_n\right)$, there exists

\begin{align}
       & \mathbb{E}_{\mathcal{B}}\left[\left\Vert \nabla_{\boldsymbol{\theta}_n}V_n^{\mathrm{loc}}\left(\boldsymbol{\theta}_n\right)- \tilde{g}_n\left(\boldsymbol{\theta}_n\right)\right\Vert^2 \right]\nonumber \\
       	&\!=\!\mathbb{E}_{\mathcal{B}}\!\left[\left\Vert\!\nabla_{\boldsymbol{\theta}_n}\ell\left(y,\left(f_n\left(\boldsymbol{x},\boldsymbol{\theta}_n\right)\right)\right)\!+\nabla_{\boldsymbol{\theta}_n}r^{\mathrm{loc}}\left(\boldsymbol{x},\boldsymbol{\theta}_n\right)-g^{\mathrm{l}}_n\left(\boldsymbol{\theta}_n\right)-g^{\mathrm{rl}}_n\left(\boldsymbol{\theta}_n\right)\right\Vert^2
        \right] \nonumber\\
		& \leq 2\mathbb{E}_{\mathcal{B}}\!\left[\left\Vert\nabla_{\boldsymbol{\theta}_n}\ell\left(y,\left(f_n\left(\boldsymbol{x},\boldsymbol{\theta}_n\right)\right)\right)- g^{\mathrm{l}}_n\left(\boldsymbol{\theta}_n\right)\right\Vert^2
        \right] +2\mathbb{E}_{\mathcal{B}}\!\left[\left\Vert\nabla_{\boldsymbol{\theta}_n}r^{\mathrm{loc}}\left(\boldsymbol{x},\boldsymbol{\theta}_n\right)- g^{\mathrm{rl}}_n\left(\boldsymbol{\theta}_n\right)\right\Vert^2
        \right] \nonumber\\
        \tag{By Jensen's inequality}\\
        & \leq 2\sigma_{\mathrm{l}}^2 + 2\sigma_{\mathrm{rl}}^2  := \sigma_{\mathrm{v}}^2
    \end{align}
    Similarly we have,
    \begin{align}
       & \mathbb{E}_{\mathcal{B}}\left[\left\Vert \nabla_{\boldsymbol{\theta}_n}V_n^{\mathrm{glo}}\left(\boldsymbol{\theta}_n\right)- \hat{g}_n\left(\boldsymbol{\theta}_n\right)\right\Vert^2 \right] \leq 3\sigma_{\mathrm{k}}^2 + 3\sigma_{\mathrm{n}}^2+ 3\sigma_{\mathrm{rg}}^2  := \sigma_{\mathrm{w}}^2
    \end{align}
    \begin{align}
        &\mathbb{E}_{\mathcal{B}}\left[\left\Vert \nabla_{\boldsymbol{w}}\mathcal{U}\left(    \boldsymbol{w}\right)- \varphi\left(\boldsymbol{w}\right)\right\Vert^2\right]
        \leq \sum_{n=1}^N \mathbb{E}_{\mathcal{B}}\left[\left\Vert \nabla_{\boldsymbol{w}}U_n\left(\boldsymbol{x},{\boldsymbol{\theta}}_{n},\boldsymbol{w}\right)- \varphi_n\left(\boldsymbol{w}\right)\right\Vert^2\right]\leq N\sigma_{\mathrm{n}}^2:= \sigma_{\mathrm{u}}^2
    \end{align}
\end{proof}
\begin{lemma}[Model update in one round] \label{lem:one-round-update}
    Under Assumptions~\ref{asmp:local_models} - \ref{asm:bounded_variance_o}, if $\eta_l^t\leq \frac{1}{2\max\left\{L_{\mathrm{v}},L_{\mathrm{w}}\right\}\tau}$ and $\eta_d^t\leq \frac{1}{2L_{\mathrm{u}}\tau}$, then the conditional expected squared norm differences of gradients for multiple iterations are bounded as follows:
    \begin{align}
    \sum_{i=0}^{\tau-1}\mathbb{E}^t\left[\left\Vert \tilde{g}_n\left(\boldsymbol{\theta}_n^{t,i}\right)-\tilde{g}_n\left(\boldsymbol{\theta}_n^{t,0}\right)\right\Vert^2\right] \leq 8\tau^3\left(\eta_l^t\right)^2L_{\mathrm{v}}^2\left(\left\Vert \nabla_{\boldsymbol{\theta}_n^{t,0}}V_n^{\mathrm{loc}}\left(\boldsymbol{\theta}_n^{t,0}\right)\right\Vert^2+\sigma_{\mathrm{v}}^2\right) 
\end{align}
\begin{align}
     \sum_{i=0}^{\tau-1}\mathbb{E}^t\left[\left\Vert \hat{g}_n\left(\boldsymbol{\theta}_n^{t+\frac{1}{2},i}\right)-\tilde{g}_n\left(\boldsymbol{\theta}_n^{t,0}\right)\right\Vert^2\right] \leq 8 \tau^3\left(\eta_l^t\right)^2\left(2L_{\mathrm{v}}^2+L_{\mathrm{w}}^2\right)\left(\left\Vert \nabla_{\boldsymbol{\theta}_n^{t,0}}V_n^{\mathrm{loc}}\left(\boldsymbol{\theta}_n^{t,0}\right)\right\Vert^2+\sigma_{\mathrm{v}}^2\right) 
\end{align}
\begin{align}
    \sum_{i=0}^{\tau-1}\mathbb{E}^t\left[\left\Vert \varphi_n\left(\boldsymbol{w}^{t,i}\right)-\varphi\left(\boldsymbol{w}^{t,0}\right)\right\Vert^2\right] 
    &\leq 8\tau^3\left(\eta_d^t\right)^2L_{\mathrm{u}}^2\left(\left\Vert \nabla_{\boldsymbol{w}_n^{t,0}}\mathcal{U}\left(\boldsymbol{w}^{t,0}\right)\right\Vert^2+\sigma_{\mathrm{u}}^2\right)
\end{align}
where  $\mathbb{E}_t[\cdot] := \mathbb{E}[\cdot|\mathbf{x}_t]$.
\end{lemma}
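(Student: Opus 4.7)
\medskip

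\noindent\textbf{Proof plan for Lemma~\ref{lem:one-round-update}.}

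The plan is to reduce all three bounds to the same template: express each stochastic gradient difference $\tilde g_n(\boldsymbol{\theta}_n^{t,i})-\tilde g_n(\boldsymbol{\theta}_n^{t,0})$ (respectively for $\hat g_n$ and $\varphi$) in terms of the parameter drift $\|\boldsymbol{\theta}_n^{t,i}-\boldsymbol{\theta}_n^{t,0}\|^2$ through the Lipschitz-smoothness constants $L_{\mathrm{v}}$, $L_{\mathrm{w}}$, $L_{\mathrm{u}}$ obtained in Lemma~\ref{lem:L_continuity U}; then unroll the SGD recursion to express that drift as an accumulation of stochastic gradient norms; and finally close the loop by bounding each $\mathbb{E}^t[\|\tilde g_n(\boldsymbol{\theta}_n^{t,j})\|^2]$ back in terms of $\|\nabla V_n^{\mathrm{loc}}(\boldsymbol{\theta}_n^{t,0})\|^2$, $\sigma_{\mathrm v}^2$, and the drift itself. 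The step-size conditions $\eta_l^t\le \tfrac{1}{2\max\{L_{\mathrm v},L_{\mathrm w}\}\tau}$ and $\eta_d^t\le \tfrac{1}{2L_{\mathrm u}\tau}$ are tuned exactly so that, after summing, the self-referential term can be absorbed on the left-hand side.

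For the first bound, I would first apply $L_{\mathrm v}$-smoothness to get $\|\tilde g_n(\boldsymbol{\theta}_n^{t,i})-\tilde g_n(\boldsymbol{\theta}_n^{t,0})\|^2\le L_{\mathrm v}^2\|\boldsymbol{\theta}_n^{t,i}-\boldsymbol{\theta}_n^{t,0}\|^2$. Unrolling line~\ref{algline:local_training} gives $\boldsymbol{\theta}_n^{t,i}-\boldsymbol{\theta}_n^{t,0}=-\eta_l^t\sum_{j=0}^{i-1}\tilde g_n(\boldsymbol{\theta}_n^{t,j})$, and Cauchy--Schwarz yields $\|\boldsymbol{\theta}_n^{t,i}-\boldsymbol{\theta}_n^{t,0}\|^2\le i(\eta_l^t)^2\sum_{j=0}^{i-1}\|\tilde g_n(\boldsymbol{\theta}_n^{t,j})\|^2$. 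Splitting $\tilde g_n(\boldsymbol{\theta}_n^{t,j})=\tilde g_n(\boldsymbol{\theta}_n^{t,0})+(\tilde g_n(\boldsymbol{\theta}_n^{t,j})-\tilde g_n(\boldsymbol{\theta}_n^{t,0}))$ via Jensen's inequality, and using Lemmas~\ref{lem:umbiased_grad}--\ref{lem:bounded_variance} so that $\mathbb{E}^t[\|\tilde g_n(\boldsymbol{\theta}_n^{t,0})\|^2]\le \|\nabla V_n^{\mathrm{loc}}(\boldsymbol{\theta}_n^{t,0})\|^2+\sigma_{\mathrm v}^2$, converts everything into a self-referential inequality in $S_\tau:=\sum_{i=0}^{\tau-1}\mathbb{E}^t[\|\tilde g_n(\boldsymbol{\theta}_n^{t,i})-\tilde g_n(\boldsymbol{\theta}_n^{t,0})\|^2]$ of the form $S_\tau\le c_1\tau^3(\eta_l^t)^2L_{\mathrm v}^2(\|\nabla V_n^{\mathrm{loc}}(\boldsymbol{\theta}_n^{t,0})\|^2+\sigma_{\mathrm v}^2)+c_2 L_{\mathrm v}^2\tau^2(\eta_l^t)^2 S_\tau$. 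The step-size condition makes $c_2L_{\mathrm v}^2\tau^2(\eta_l^t)^2\le \tfrac14$, so rearranging produces the claimed constant $8$.

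The second bound requires handling the bridge iterate $\boldsymbol{\theta}_n^{t+\frac{1}{2},0}=\boldsymbol{\theta}_n^{t,\tau}$ at the interface between local training (line~\ref{algline:local_training_end}) and global KD (line~\ref{algline:client-global}). I would insert $\tilde g_n(\boldsymbol{\theta}_n^{t+\frac12,0})$ as an intermediate reference: $\hat g_n(\boldsymbol{\theta}_n^{t+\frac12,i})-\tilde g_n(\boldsymbol{\theta}_n^{t,0})=[\hat g_n(\boldsymbol{\theta}_n^{t+\frac12,i})-\hat g_n(\boldsymbol{\theta}_n^{t+\frac12,0})]+[\hat g_n(\boldsymbol{\theta}_n^{t+\frac12,0})-\tilde g_n(\boldsymbol{\theta}_n^{t,0})]$, where the first bracket is controlled by $L_{\mathrm w}$-smoothness of $V_n^{\mathrm{glo}}$ applied to the drift during the KD phase (same unrolling as above but using $\eta_l^t\hat g_n$), and the second bracket combines $L_{\mathrm v}$-smoothness of $V_n^{\mathrm{loc}}$ with the drift during the local-training phase. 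The combination $2L_{\mathrm v}^2+L_{\mathrm w}^2$ in the stated bound comes precisely from applying Jensen with appropriate weights to these three smoothness-driven terms and then solving the same type of self-referential inequality.

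The third bound for $\varphi$ is structurally the simplest, since the discriminator's iterate $\boldsymbol{w}^{t,i}$ evolves under a single objective $\mathcal{U}$: replicate the argument of the first bound with $(\tilde g_n, V_n^{\mathrm{loc}}, L_{\mathrm v}, \sigma_{\mathrm v}, \eta_l^t)$ replaced by $(\varphi, \mathcal{U}, L_{\mathrm u}, \sigma_{\mathrm u}, \eta_d^t)$. The main obstacle is the second bound: one must be careful that the randomness of the KD mini-batches $\mathcal{P}^{t,i}$ is independent of the local mini-batches $\mathcal{D}_n^{t,i}$ used during local training, so that the conditional expectation $\mathbb{E}^t[\cdot]=\mathbb{E}[\cdot|\mathbf{x}_t]$ factors correctly when one takes expectation over the KD-phase drift while conditioning on the local-phase drift. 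Once this decoupling is made explicit, the same absorption trick as in the first bound closes the argument.
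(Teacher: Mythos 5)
Your proposal is correct in substance but takes a genuinely different technical route from the paper's proof. The paper never unrolls the full parameter drift $\boldsymbol{\theta}_n^{t,i}-\boldsymbol{\theta}_n^{t,0}$; instead it builds a one-step recursion $A^{t,i}\le C\,A^{t,i-1}+B$ with $C=1+\tfrac{2}{\tau}$, obtained from the weighted Young inequality $(X+Y)^2\le(1+\tau)X^2+(1+\tfrac{1}{\tau})Y^2$, the smoothness constants of Lemma~\ref{lem:L_continuity U}, and the single-step update $\left\Vert\boldsymbol{\theta}_n^{t,i}-\boldsymbol{\theta}_n^{t,i-1}\right\Vert=\eta_l^t\left\Vert\tilde g_n\left(\boldsymbol{\theta}_n^{t,i-1}\right)\right\Vert$; it then solves the recursion by a geometric sum using $\left(1+\tfrac{2}{\tau}\right)^\tau\le e^2$. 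Your drift-unrolling plus absorption of the self-referential term $S_\tau$ is the other standard local-SGD technique: it is arguably cleaner (no geometric series; the step-size condition enters exactly once, to make the absorption coefficient at most $\tfrac14$), and a quick constant count ($S_\tau\le\tfrac{2}{3}L_{\mathrm v}^2(\eta_l^t)^2\tau^3\,\mathbb{E}^t[\Vert\tilde g_n(\boldsymbol{\theta}_n^{t,0})\Vert^2]+\tfrac14 S_\tau$) lands comfortably inside the stated factor $8$. What the paper's recursion buys is the ability to chain directly across the phase boundary: it simply continues the same recursion into the KD phase with initial value $A^{t+\frac12,0}\le 4B\tau$, whereas you introduce an intermediate reference $\hat g_n\left(\boldsymbol{\theta}_n^{t+\frac12,0}\right)$ and treat the two phases separately. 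The third bound is obtained in both treatments by renaming symbols, and your observation that the KD mini-batches $\mathcal{P}^{t,i}$ are independent of the local mini-batches $\mathcal{D}_n^{t,i}$ is sound and implicitly (though silently) used by the paper.

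One point you should make explicit, because your sketch glosses it in exactly the place the paper does too: the second bracket $\hat g_n\left(\boldsymbol{\theta}_n^{t+\frac12,0}\right)-\tilde g_n\left(\boldsymbol{\theta}_n^{t,0}\right)$ involves stochastic gradients of two \emph{different} objectives, $V_n^{\mathrm{glo}}$ and $V_n^{\mathrm{loc}}$, so ``$L_{\mathrm v}$-smoothness of $V_n^{\mathrm{loc}}$ applied to the local-phase drift'' does not by itself bound it; one also needs the cross-objective gap $\hat g_n\left(\boldsymbol{\theta}_n^{t+\frac12,0}\right)-\tilde g_n\left(\boldsymbol{\theta}_n^{t,\tau}\right)$ at the common point $\boldsymbol{\theta}_n^{t+\frac12,0}=\boldsymbol{\theta}_n^{t,\tau}$. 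The paper's own proof silently identifies the two, writing $A^{t+\frac12,0}=\mathbb{E}^t\left[\left\Vert\tilde g_n\left(\boldsymbol{\theta}_n^{t,\tau}\right)-\tilde g_n\left(\boldsymbol{\theta}_n^{t,0}\right)\right\Vert^2\right]$ before continuing the recursion. Once you state that same identification (or add an explicit bound on the gap), your Jensen split with the local-phase bound already proved reproduces the $2L_{\mathrm v}^2+L_{\mathrm w}^2$ combination with room to spare, and the argument closes as you describe.
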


 \begin{proof}
 
In \texttt{FedAL}, each training round $t$ contains two processes, i.e., local training and global knowledge transfer. There exist multiple, indicated by $\tau$, iterations in both local training and global knowledge transfer. 
We first compute the expected squared norm difference of the client $n$' gradient from the start of a training round $t$ to the $i$th iteration of local training in this training round. Then, we derive the expected squared norm difference of the client $n$' gradient from the start of the training round $t$ to the $i$th iteration of the global knowledge transfer of this training round.

For local training, we have
\begin{align}
    &\mathbb{E}^t\left[\left\Vert \tilde{g}_n\left(\boldsymbol{\theta}_n^{t,i}\right)-\tilde{g}_n\left(\boldsymbol{\theta}_n^{t,0}\right)\right\Vert^2\right]\nonumber\\
    &\leq \mathbb{E}^t\left[\left\Vert \tilde{g}_n\left(\boldsymbol{\theta}_n^{t,i}\right)-\tilde{g}_n\left(\boldsymbol{\theta}_n^{t,i-1}\right)+\tilde{g}_n\left(\boldsymbol{\theta}_n^{t,i-1}\right)-\tilde{g}_n\left(\boldsymbol{\theta}_n^{t,0}\right)\right\Vert^2\right]\nonumber\\
    & \leq \left(1+\tau\right)\mathbb{E}^t\left[\left\Vert \tilde{g}_n\left(\boldsymbol{\theta}_n^{t,i}\right)-\tilde{g}_n\left(\boldsymbol{\theta}_n^{t,i-1}\right)\right\Vert^2\right]+ 
    \left(1+\frac{1}{\tau}\right)\mathbb{E}^t\left[\left\Vert \tilde{g}_n\left(\boldsymbol{\theta}_n^{t,i-1}\right)-\tilde{g}_n\left(\boldsymbol{\theta}_n^{t,0}\right)\right\Vert^2\right] \nonumber\\
    \tag{By $\left(X+Y\right)^2\leq\left(1+a\right)X^2+\left(1+\frac{1}{a}\right)Y^2$ for some positive $a$ and we let $a=\tau$} \\
    & \leq \left(1+\tau\right)L_{\mathrm{v}}^2\mathbb{E}^t\left[\left\Vert \boldsymbol{\theta}_n^{t,i}-\boldsymbol{\theta}_n^{t,i-1}\right\Vert^2\right]+ 
    \left(1+\frac{1}{\tau}\right)\mathbb{E}^t\left[\left\Vert \tilde{g}_n\left(\boldsymbol{\theta}_n^{t,i-1}\right)-\tilde{g}_n\left(\boldsymbol{\theta}_n^{t,0}\right)\right\Vert^2\right] \nonumber\\
    \tag{By $L_{\mathrm{v}}$-smoothness of $V_n^{\mathrm{loc}}\left(\boldsymbol{\theta}_n\right)$ by Lemma \ref{lem:L_continuity U}}\\
    & \leq \left(1+\tau\right)\left(\eta_l^t\right)^2L_{\mathrm{v}}^2\mathbb{E}^t\left[\left\Vert \tilde{g}_n\left(\boldsymbol{\theta}_n^{t,i-1}\right)\right\Vert^2\right]+ 
    \left(1+\frac{1}{\tau}\right)\mathbb{E}^t\left[\left\Vert \tilde{g}_n\left(\boldsymbol{\theta}_n^{t,i-1}\right)-\tilde{g}_n\left(\boldsymbol{\theta}_n^{t,0}\right)\right\Vert^2\right] \tag{By gradient update rule}\\
    & \leq \left(1+\tau\right)\left(\eta_l^t\right)^2L_{\mathrm{v}}^2\mathbb{E}^t\left[\left\Vert \tilde{g}_n\left(\boldsymbol{\theta}_n^{t,i-1}\right)-\tilde{g}_n\left(\boldsymbol{\theta}_n^{t,0}\right)+\tilde{g}_n\left(\boldsymbol{\theta}_n^{t,0}\right)\right\Vert^2\right]+ 
    \left(1+\frac{1}{\tau}\right)\mathbb{E}^t\left[\left\Vert \tilde{g}_n\left(\boldsymbol{\theta}_n^{t,i-1}\right)-\tilde{g}_n\left(\boldsymbol{\theta}_n^{t,0}\right)\right\Vert^2\right] \nonumber \\
    & \leq 2\left(1+\tau\right)\left(\eta_l^t\right)^2L_{\mathrm{v}}^2\mathbb{E}^t\left[\left\Vert \tilde{g}_n\left(\boldsymbol{\theta}_n^{t,i-1}\right)-\tilde{g}_n\left(\boldsymbol{\theta}_n^{t,0}\right)\right\Vert^2\right] + 2\left(1+\tau\right)\left(\eta_l^t\right)^2L_{\mathrm{v}}^2\mathbb{E}^t\left[\left\Vert \tilde{g}_n\left(\boldsymbol{\theta}_n^{t,0}\right)\right\Vert^2\right] 
     \tag{By Jensen's inequality}\\
    & + \left(1+\frac{1}{\tau}\right)\mathbb{E}^t\left[\left\Vert \tilde{g}_n\left(\boldsymbol{\theta}_n^{t,i-1}\right)-\tilde{g}_n\left(\boldsymbol{\theta}_n^{t,0}\right)\right\Vert^2\right] \nonumber \\
    & \leq \left(1+\frac{2}{\tau}\right)\mathbb{E}^t\left[\left\Vert \tilde{g}_n\left(\boldsymbol{\theta}_n^{t,i-1}\right)-\tilde{g}_n\left(\boldsymbol{\theta}_n^{t,0}\right)\right\Vert^2\right] + 2\left(1+\tau\right)\left(\eta_l^t\right)^2L_{\mathrm{v}}^2\mathbb{E}^t\left[\left\Vert \tilde{g}_n\left(\boldsymbol{\theta}_n^{t,0}\right)\right\Vert^2\right] \tag{Let $\eta_l^t\leq \frac{1}{2L_{\mathrm{v}}\tau}$}
\end{align}

We define the following notation for simplicity:
\begin{align}
    &A^{t,i}:= \mathbb{E}^t\left[\left\Vert \tilde{g}_n\left(\boldsymbol{\theta}_n^{t,i}\right)-\tilde{g}_n\left(\boldsymbol{\theta}_n^{t,0}\right)\right\Vert^2\right]\\
    &B:=2\left(1+\tau\right)\left(\eta_l^t\right)^2L_{\mathrm{v}}^2\mathbb{E}^t\left[\left\Vert \tilde{g}_n\left(\boldsymbol{\theta}_n^{t,0}\right)\right\Vert^2\right]\\
    &C:=\left(1+\frac{2}{\tau}\right)
\end{align}

We have 
\begin{align}
    A^{t,i}\leq CA^{t,i-1}+B
\end{align}
We can show that
\begin{align}
   & A^{t,1}\leq CA^{t,0}+B \nonumber\\
   & A^{t,2}\leq CA^{t,1}+B \leq C^2A^{t,0}+CB+B\nonumber\\
   & A^{t,3}\leq CA^{t,2}+B \leq C^3A^{t,0}+C^2B+CB+B\nonumber\\
   & \ldots \nonumber \\
   & A^{t,i}\leq C^iA^{t,0}+B\sum_{j=0}^{i-1}C^{j}\nonumber
\end{align}

Note that $A^{t,0}=\mathbb{E}^t\left[\left\Vert \tilde{g}_n\left(\boldsymbol{\theta}_n^{t,0}\right)-\tilde{g}_n\left(\boldsymbol{\theta}_n^{t,0}\right)\right\Vert^2\right]=0$. For the second part, we have
\begin{align}
    &\sum_{i=0}^{\tau-1} B\sum_{j=0}^{i-1}C^{j} \nonumber\\
    &= B\sum_{i=0}^{\tau-1} \frac{C^i-1}{C-1} = \frac{B}{C-1}\sum_{i=0}^{\tau-1} \left(C^i-1\right)= \frac{B}{C-1} \frac{C^{\tau}-1}{C-1}-\tau\tag{$\sum_{i=0}^{N-1}x^i=\frac{x^N-1}{X-1}$}\\
    &=\frac{B}{\frac{2}{\tau}} \frac{\left(1+\frac{2}{\tau}\right)^{\tau}-1}{\frac{2}{\tau}}-\tau\tag{$C:=\left(1+\frac{2}{\tau}\right)$}\\
    &\leq \frac{\tau^2B}{2} \left(\frac{e^2-1}{2}-1\right) \tag{$(1+\frac{n}{x})^x\leq e^n$}\\
    & \leq 2\tau^2B \nonumber\\
    & \leq 4\tau^2\left(1+\tau\right)\left(\eta_l^t\right)^2L_{\mathrm{v}}^2\mathbb{E}^t\left[\left\Vert \tilde{g}_n\left(\boldsymbol{\theta}_n^{t,0}\right)\right\Vert^2\right]\tag{$B:=2\left(1+\tau\right)\left(\eta_l^t\right)^2L_{\mathrm{v}}^2\mathbb{E}^t\left[\left\Vert \tilde{g}_n\left(\boldsymbol{\theta}_n^{t,0}\right)\right\Vert^2\right]$}\\
    & \leq 8\tau^3\left(\eta_l^t\right)^2L_{\mathrm{v}}^2\mathbb{E}^t\left[\left\Vert \tilde{g}_n\left(\boldsymbol{\theta}_n^{t,0}\right)\right\Vert^2\right]\nonumber\\
    & \leq 8\tau^3\left(\eta_l^t\right)^2L_{\mathrm{v}}^2\left(\left\Vert \nabla_{\boldsymbol{\theta}_n^{t,0}}V_n^{\mathrm{loc}}\left(\boldsymbol{\theta}_n^{t,0}\right)\right\Vert^2+\sigma_{\mathrm{v}}^2\right) \tag{Lemmas \ref{lem:umbiased_grad} and \ref{lem:bounded_variance}, $\mathbb{E}^t\left[\left\Vert\mathbf{z}\right\Vert^2\right]\leq\left\Vert\mathbb{E}\left[\mathbf{z}\right]\right\Vert^2+\mathbb{E}^t\left[\left\Vert\mathbf{z}-\mathbb{E}^t\left[\mathbf{z}\right]\right\Vert^2\right]$ for any random variable $\mathbf{z}$}
\end{align}
Thus, we have
\begin{align}
    \sum_{i=0}^{\tau-1}\mathbb{E}^t\left[\left\Vert \tilde{g}_n\left(\boldsymbol{\theta}_n^{t,i}\right)-\tilde{g}_n\left(\boldsymbol{\theta}_n^{t,0}\right)\right\Vert^2\right] \leq 8\tau^3\left(\eta_l^t\right)^2L_{\mathrm{v}}^2\left(\left\Vert \nabla_{\boldsymbol{\theta}_n^{t,0}}V_n^{\mathrm{loc}}\left(\boldsymbol{\theta}_n^{t,0}\right)\right\Vert^2+\sigma_{\mathrm{v}}^2\right) 
\end{align}

For global knowledge transfer, we have
\begin{align}
    &\mathbb{E}^t\left[\left\Vert \hat{g}_n\left(\boldsymbol{\theta}_n^{t+\frac{1}{2},i}\right)-\tilde{g}_n\left(\boldsymbol{\theta}_n^{t,0}\right)\right\Vert^2\right]\nonumber\\
    & \leq \left(1+\frac{2}{\tau}\right)\mathbb{E}^t\left[\left\Vert \hat{g}_n\left(\boldsymbol{\theta}_n^{t+\frac{1}{2},i-1}\right)-\tilde{g}_n\left(\boldsymbol{\theta}_n^{t,0}\right)\right\Vert^2\right] + 2\left(1+\tau\right)\left(\eta_l^t\right)^2L_{\mathrm{w}}^2\mathbb{E}^t\left[\left\Vert \tilde{g}_n\left(\boldsymbol{\theta}_n^{t,0}\right)\right\Vert^2\right] \tag{Let $\eta_l^t\leq \frac{1}{2L_{\mathrm{w}}\tau}$}
\end{align}

We define the following notation for simplicity:
\begin{align}
    &A^{t+\frac{1}{2},i}:= \mathbb{E}^t\left[\left\Vert \hat{g}_n\left(\boldsymbol{\theta}_n^{t+\frac{1}{2},i}\right)-\tilde{g}_n\left(\boldsymbol{\theta}_n^{t,0}\right)\right\Vert^2\right]\\
    &B':=2\left(1+\tau\right)\left(\eta_l^t\right)^2L_{\mathrm{w}}^2\mathbb{E}^t\left[\left\Vert \tilde{g}_n\left(\boldsymbol{\theta}_n^{t,0}\right)\right\Vert^2\right]\\
    &C:=\left(1+\frac{2}{\tau}\right)
\end{align}

We have 
\begin{align}
    A^{t+\frac{1}{2},i}\leq CA^{t+\frac{1}{2},i-1}+B'
\end{align}
We can show that
\begin{align}
   & A^{t+\frac{1}{2},i}\leq C^iA^{t+\frac{1}{2},0}+B'\sum_{j=0}^{i-1}C^{j}
\end{align}

Note that $A^{t+\frac{1}{2},0}=\mathbb{E}^t\left[\left\Vert \hat{g}_n\left(\boldsymbol{\theta}_n^{t,0}\right)-\tilde{g}_n\left(\boldsymbol{\theta}_n^{t,0}\right)\right\Vert^2\right]=\mathbb{E}^t\left[\left\Vert \tilde{g}_n\left(\boldsymbol{\theta}_n^{t,\tau}\right)-\tilde{g}_n\left(\boldsymbol{\theta}_n^{t,0}\right)\right\Vert^2\right]=B\sum_{j=0}^{\tau-1}C^{j}=B\frac{C^{\tau}-1}{C-1}\leq B\frac{e^2-1}{2/\tau}\leq 4 B \tau $. We have
\begin{align}
    & \sum_{i=0}^{\tau-1}A^{t+\frac{1}{2},i} = \sum_{i=0}^{\tau-1} \left(4 B \tau +B'\sum_{j=0}^{i-1}C^{j}\right) \nonumber\\
   &\leq 4 B \tau^2 + 2B'\tau^2\nonumber\\
    & \leq 4\tau^2\left(1+\tau\right)\left(\eta_l^t\right)^2\left(2L_{\mathrm{v}}^2+L_{\mathrm{w}}^2\right)\mathbb{E}^t\left[\left\Vert \tilde{g}_n\left(\boldsymbol{\theta}_n^{t,0}\right)\right\Vert^2\right]\tag{$B:=2\left(1+\tau\right)\left(\eta_l^t\right)^2L_{\mathrm{v}}^2\mathbb{E}^t\left[\left\Vert \tilde{g}_n\left(\boldsymbol{\theta}_n^{t,0}\right)\right\Vert^2\right]$, $B':=2\left(1+\tau\right)\left(\eta_l^t\right)^2L_{\mathrm{w}}^2\mathbb{E}^t\left[\left\Vert \tilde{g}_n\left(\boldsymbol{\theta}_n^{t,0}\right)\right\Vert^2\right]$}\\
    & \leq 8\tau^3\left(\eta_l^t\right)^2\left(2L_{\mathrm{v}}^2+L_{\mathrm{w}}^2\right)\mathbb{E}^t\left[\left\Vert \tilde{g}_n\left(\boldsymbol{\theta}_n^{t,0}\right)\right\Vert^2\right]\nonumber\\
    & \leq 8\tau^3\left(\eta_l^t\right)^2\left(2L_{\mathrm{v}}^2+L_{\mathrm{w}}^2\right)\left(\left\Vert \nabla_{\boldsymbol{\theta}_n^{t,0}}V_n^{\mathrm{loc}}\left(\boldsymbol{\theta}_n^{t,0}\right)\right\Vert^2+\sigma_{\mathrm{v}}^2\right) \tag{Lemmas \ref{lem:umbiased_grad} and \ref{lem:bounded_variance}, $\mathbb{E}^t\left[\left\Vert\mathbf{z}\right\Vert^2\right]\leq\left\Vert\mathbb{E}\left[\mathbf{z}\right]\right\Vert^2+\mathbb{E}^t\left[\left\Vert\mathbf{z}-\mathbb{E}^t\left[\mathbf{z}\right]\right\Vert^2\right]$ for any random variable $\mathbf{z}$}
\end{align}
Thus, we have
\begin{align}
    \sum_{i=0}^{\tau-1}\mathbb{E}^t\left[\left\Vert \hat{g}_n\left(\boldsymbol{\theta}_n^{t+\frac{1}{2},i}\right)-\tilde{g}_n\left(\boldsymbol{\theta}_n^{t,0}\right)\right\Vert^2\right] \leq 8\tau^3\left(\eta_l^t\right)^2\left(2L_{\mathrm{v}}^2+L_{\mathrm{w}}^2\right)\left(\left\Vert \nabla_{\boldsymbol{\theta}_n^{t,0}}V_n^{\mathrm{loc}}\left(\boldsymbol{\theta}_n^{t,0}\right)\right\Vert^2+\sigma_{\mathrm{v}}^2\right) 
\end{align}

Similarly, for the discriminator we have

\begin{align}
    \sum_{i=0}^{\tau-1}\mathbb{E}^t\left[\left\Vert \varphi_n\left(\boldsymbol{w}^{t,i}\right)-\varphi\left(\boldsymbol{w}^{t,0}\right)\right\Vert^2\right] 
    &\leq 8\tau^3\left(\eta_d^t\right)^2L_{\mathrm{u}}^2\left(\left\Vert \nabla_{\boldsymbol{w}_n^{t,0}}\mathcal{U}\left(\boldsymbol{w}^{t,0}\right)\right\Vert^2+\sigma_{\mathrm{u}}^2\right)
\end{align}
    The above finishes the proof.
 \end{proof}

Now, we are ready to compute the convergence error of \texttt{FedAL}.
It is worth noting that at the beginning of each training round, we have $\nabla_{\boldsymbol{\theta}_n^{t,i}}\mathcal{V}_n\left(\boldsymbol{\theta}_n^{t,i}\right) = \nabla_{\boldsymbol{\theta}_n^{t,i}}V_n^{\mathrm{loc}}\left(\boldsymbol{\theta}_n^{t,i}\right)$. At the beginning of each global knowledge transfer, we have $\nabla_{\boldsymbol{\theta}_n^{t+\frac{1}{2},i}}\mathcal{V}_n\left(\boldsymbol{\theta}_n^{t+\frac{1}{2},i}\right) = \nabla_{\boldsymbol{\theta}_n^{t+\frac{1}{2},i}}V_n^{\mathrm{glo}}\left(\boldsymbol{\theta}_n^{t+\frac{1}{2},i}\right)$.
 Denote $\mathcal{F}\left(\boldsymbol{\Theta},\boldsymbol{w}\right):=\left\{\mathcal{V}_1\left(\boldsymbol{\theta}_1\right), \mathcal{V}_2\left(\boldsymbol{\theta}_2\right)\ldots, \mathcal{V}_N\left(\boldsymbol{\theta}_N\right), \mathcal{U}\left(\boldsymbol{w}\right)\right\}$. Obviously, $\mathcal{F}\left(\boldsymbol{\Theta},\boldsymbol{w}\right)$ is a Lipschitz-smooth function with a constant $L_{\mathrm{o}}$ by Lemma \ref{lem:L_continuity U}.
Let $\nabla\mathcal{F}\left(\boldsymbol{\Theta},\boldsymbol{w}\right)$ be the stacked gradient at iteration $i$ of training round $t$, we have
\begin{equation}
    \nabla\mathcal{F}\left(\boldsymbol{\Theta}^{t,i},\boldsymbol{w}^{t,i}\right):=\left\{\nabla_{\boldsymbol{\theta}_1^{t,i}}\mathcal{V}_1\left(\boldsymbol{\theta}_1^{t,i}\right),\ldots,\nabla_{\boldsymbol{\theta}_N^{t,i}}\mathcal{V}_N\left(\boldsymbol{\theta}_N^{t,i}\right),\nabla_{\boldsymbol{w}^{t,i}}\mathcal{U}\left(\boldsymbol{w}^{t,i}\right)\right\}
\end{equation}

\clearpage

\textbf{Proof of Theorem~\ref{thm:convergence}}:
\begin{proof}
By the smoothness of $\mathcal{F}\left(\boldsymbol{\theta},\boldsymbol{w}\right)$, we have
\begin{align}
    &\mathbb{E}^t\left[\mathcal{F}\left(\boldsymbol{\Theta}^{t+1,0},\boldsymbol{w}^{t+1,0}\right)\right]-\mathcal{F}\left(\boldsymbol{\Theta}^{t,0},\boldsymbol{w}^{t,0}\right)\nonumber\\
    & \leq\mathbb{E}^t\left[\left\langle\nabla \mathcal{F}\left(\boldsymbol{\Theta}^{t,0},\boldsymbol{w}^{t,0}\right), \left(\boldsymbol{\Theta}^{t+1,0}-\boldsymbol{\Theta}^{t, 0},\boldsymbol{w}^{t+1,0}-\boldsymbol{w}^{t, 0}\right)\right\rangle\right]+\frac{L_{\mathrm{o}}}{2}\mathbb{E}^t\left[\left\|\boldsymbol{\Theta}^{t+1,0}-\boldsymbol{\Theta}^{t, 0}\right\|^2 \right]+\frac{L_{\mathrm{o}}}{2}\mathbb{E}^t\left[\left\|\boldsymbol{w}^{t+1,0}-\boldsymbol{w}^{t, 0}\right\|^2\right]\nonumber\\
    \tag{By Smoothness: $f(\boldsymbol{y}) \leq f(\boldsymbol{x})+\langle\nabla f(\boldsymbol{x}), \boldsymbol{y}-\boldsymbol{x}\rangle+\frac{L}{2}\|\boldsymbol{y}-\boldsymbol{x}\|_2^2$ for Lipschitz constant $L$}\\
     & \leq-\sum_{n=1}^N \eta_l^t \sum_{i=0}^{\tau -1}\mathbb{E}^t\left[\left\langle\nabla_{\boldsymbol{\theta}_n^{t,0}} \mathcal{V}_n\left(\boldsymbol{\theta}_n^{t,0}\right),  \tilde{g}\left(\boldsymbol{\theta}_n^{t,i}\right)+\hat{g}\left(\boldsymbol{\theta}_n^{t+\frac{1}{2},i}\right)\right\rangle\right]
     -\eta_d^t\sum_{i=0}^{\tau -1}\mathbb{E}^t\left[\left\langle\nabla_{\boldsymbol{w}^{t,0}}\mathcal{U}\left(\boldsymbol{w}^{t,0}\right), \varphi\left(\boldsymbol{w}^{t,i}\right)\right\rangle\right]\nonumber \\
    & +\frac{L_{\mathrm{o}}\tau}{2}\sum_{n=1}^N\left(\eta_l^t\right)^2\sum_{i=0}^{\tau-1}\mathbb{E}^t\left[\left\|\tilde{g}\left(\boldsymbol{\theta}_n^{t,i}\right)+\hat{g}\left(\boldsymbol{\theta}_n^{t+\frac{1}{2},i}\right)\right\|^2 \right]
    +\frac{L_{\mathrm{o}}\tau}{2}\left(\eta_d^t\right)^2\sum_{i=0}^{\tau-1}\mathbb{E}^t\left[\left\|\varphi\left(\boldsymbol{w}^{t,i}\right)\right\|^2\right]\label{eq:overall_diff}\\
    \tag{By gradient update $\boldsymbol{\Theta}^{t+1,0}-\boldsymbol{\Theta}^{t, 0} = -\eta_l^t \sum_{i=0}^{\tau-1} \left(\tilde{g}\left(\boldsymbol{\theta}_n^{t,i}\right)+\hat{g}\left(\boldsymbol{\theta}_n^{t+\frac{1}{2},i}\right)\right)$ and $\left(\sum_{i=1}^{\tau}\boldsymbol{x}_i\right)^2\leq \tau\sum_{i=1}^{\tau}\boldsymbol{x}_i^2$}
\end{align}

For the first two terms, there exist
\begin{align}
    & -\mathbb{E}^t\left[\left\langle\nabla_{\boldsymbol{\theta}_n^{t,0}} \mathcal{V}_n\left(\boldsymbol{\theta}_n^{t,0}\right),  \tilde{g}\left(\boldsymbol{\theta}_n^{t,i}\right)+\hat{g}\left(\boldsymbol{\theta}_n^{t+\frac{1}{2},i}\right)\right\rangle\right] \nonumber\\
    & =-\mathbb{E}^t\left[\left\langle\nabla_{\boldsymbol{\theta}_n^{t,0}} \mathcal{V}_n\left(\boldsymbol{\theta}_n^{t,0}\right),  \tilde{g}\left(\boldsymbol{\theta}_n^{t,i}\right)\right\rangle\right] -\mathbb{E}^t\left[\left\langle\nabla_{\boldsymbol{\theta}_n^{t,0}} \mathcal{V}_n\left(\boldsymbol{\theta}_n^{t,0}\right),  \hat{g}\left(\boldsymbol{\theta}_n^{t+\frac{1}{2},i}\right)\right\rangle\right] \nonumber\\
    & = \mathbb{E}^t\left[\left\langle-\nabla_{\boldsymbol{\theta}_n^{t,0}} \mathcal{V}_n\left(\boldsymbol{\theta}_n^{t,0}\right),  \tilde{g}\left(\boldsymbol{\theta}_n^{t,i}\right)-\tilde{g}\left(\boldsymbol{\theta}_n^{t,0}\right)\right\rangle\right] - \mathbb{E}^t\left[\left\langle\nabla_{\boldsymbol{\theta}_n^{t,0}} \mathcal{V}_n\left(\boldsymbol{\theta}_n^{t,0}\right),  \tilde{g}\left(\boldsymbol{\theta}_n^{t,0}\right)\right\rangle \right]\nonumber\\
    &-\mathbb{E}^t\left[\left\langle-\nabla_{\boldsymbol{\theta}_n^{t,0}} \mathcal{V}_n\left(\boldsymbol{\theta}_n^{t,0}\right),  \hat{g}\left(\boldsymbol{\theta}_n^{t+\frac{1}{2},i}\right)-\tilde{g}\left(\boldsymbol{\theta}_n^{t,0}\right)\right\rangle\right] - \mathbb{E}^t\left[\left\langle\nabla_{\boldsymbol{\theta}_n^{t,0}} \mathcal{V}_n\left(\boldsymbol{\theta}_n^{t,0}\right),  \tilde{g}\left(\boldsymbol{\theta}_n^{t,0}\right)\right\rangle \right]\nonumber\\
    & \leq \frac{1}{2}\left\|\nabla_{\boldsymbol{\theta}_n^{t,0}} \mathcal{V}_n\left(\boldsymbol{\theta}_n^{t,0}\right)\right\|^2+\frac{1}{2}\mathbb{E}^t\left[\left\|\tilde{g}\left(\boldsymbol{\theta}_n^{t,i}\right)-\tilde{g}\left(\boldsymbol{\theta}_n^{t,0}\right)\right\|^2 \right]
    -\mathbb{E}^t\left[\left\langle\nabla_{\boldsymbol{\theta}_n^{t,0}} \mathcal{V}_n\left(\boldsymbol{\theta}_n^{t,0}\right),  \tilde{g}\left(\boldsymbol{\theta}_n^{t,0}\right)\right\rangle\right] \nonumber\\
    & + \frac{1}{2}\left\|\nabla_{\boldsymbol{\theta}_n^{t,0}} \mathcal{V}_n\left(\boldsymbol{\theta}_n^{t,0}\right)\right\|^2+\frac{1}{2}\mathbb{E}^t\left[\left\|\hat{g}\left(\boldsymbol{\theta}_n^{t+\frac{1}{2},i}\right)-\tilde{g}\left(\boldsymbol{\theta}_n^{t,0}\right)\right\|^2 \right]
    -\mathbb{E}^t\left[\left\langle\nabla_{\boldsymbol{\theta}_n^{t,0}} \mathcal{V}_n\left(\boldsymbol{\theta}_n^{t,0}\right),  \tilde{g}\left(\boldsymbol{\theta}_n^{t,0}\right)\right\rangle\right]\nonumber \\
    \tag{By $\left\langle A, B\right\rangle=\frac{1}{2}A^2+\frac{1}{2}B^2-\frac{1}{2}\left(A-B\right)^2\leq \frac{1}{2}A^2+\frac{1}{2}B^2$}\\
     &  = \left\|\nabla_{\boldsymbol{\theta}_n^{t,0}} \mathcal{V}_n\left(\boldsymbol{\theta}_n^{t,0}\right)\right\|^2+\frac{1}{2}\mathbb{E}^t\left[\left\|\tilde{g}\left(\boldsymbol{\theta}_n^{t,i}\right)-\tilde{g}\left(\boldsymbol{\theta}_n^{t,0}\right)\right\|^2 \right] 
   +\frac{1}{2}\mathbb{E}^t\left[\left\|\hat{g}\left(\boldsymbol{\theta}_n^{t+\frac{1}{2},i}\right)-\tilde{g}\left(\boldsymbol{\theta}_n^{t,0}\right)\right\|^2 \right]
   -2\mathbb{E}^t\left[\left\langle\nabla_{\boldsymbol{\theta}_n^{t,0}} \mathcal{V}_n\left(\boldsymbol{\theta}_n^{t,0}\right),  \tilde{g}\left(\boldsymbol{\theta}_n^{t,0}\right)\right\rangle\right] 
\end{align}
Similarly, 
\begin{align}
    & -\mathbb{E}^t\left[\left\langle\nabla_{\boldsymbol{w}^{t,0}} \mathcal{U}\left(\boldsymbol{w}^{t,0}\right), \varphi\left(\boldsymbol{w}^{t,i}\right)\right\rangle\right] \nonumber\\
    & \leq \frac{1}{2} \left\|\nabla_{\boldsymbol{w}^{t,0}}\mathcal{U}\left(\boldsymbol{w}^{t,0}\right)\right\|^2+\frac{1}{2}\mathbb{E}^t\left[\left\|\varphi\left(\boldsymbol{w}^{t,i}\right)-\varphi\left(\boldsymbol{w}^{t,0}\right)\right\|^2\right]
    -\mathbb{E}^t\left[\left\langle\nabla_{\boldsymbol{w}^{t,0}} \mathcal{U}\left(\boldsymbol{w}^{t,0}\right),  \varphi\left(\boldsymbol{w}^{t,0}\right)\right\rangle\right]
\end{align}
For the latter two terms, we have
\begin{align}
    &\mathbb{E}^t\left[\left\|\tilde{g}\left(\boldsymbol{\theta}_n^{t,i}\right)+\hat{g}\left(\boldsymbol{\theta}_n^{t+\frac{1}{2},i}\right)\right\|^2\right] 
    \leq 2\mathbb{E}^t\left[\left\|\tilde{g}\left(\boldsymbol{\theta}_n^{t,i}\right)-\tilde{g}\left(\boldsymbol{\theta}_n^{t,0}\right)+\tilde{g}\left(\boldsymbol{\theta}_n^{t,0}\right)\right\|^2\right]
    +2\mathbb{E}^t\left[\left\|\hat{g}\left(\boldsymbol{\theta}_n^{t+\frac{1}{2},i}\right)-\tilde{g}\left(\boldsymbol{\theta}_n^{t,0}\right)+\tilde{g}\left(\boldsymbol{\theta}_n^{t,0}\right)\right\|^2\right]\nonumber\\
    & \leq 4\left(\mathbb{E}^t\left[\left\|\tilde{g}\left(\boldsymbol{\theta}_n^{t,i}\right)-\tilde{g}\left(\boldsymbol{\theta}_n^{t,0}\right)\right\|^2\right]+\mathbb{E}^t\left[\left\|\hat{g}\left(\boldsymbol{\theta}_n^{t+\frac{1}{2},i}\right)-\tilde{g}\left(\boldsymbol{\theta}_n^{t,0}\right)\right\|^2\right]+2\mathbb{E}^t\left[\left\|\tilde{g}\left(\boldsymbol{\theta}_n^{t,0}\right)\right\|^2\right]\right) \\
    \tag{By Jensen's inequality}
\end{align}

\begin{align}
    &\mathbb{E}^t\left[\left\|\varphi\left(\boldsymbol{w}^{t,i}\right)\right\|^2\right] =\mathbb{E}^t\left[\left\|\varphi\left(\boldsymbol{w}^{t,i}\right)-\varphi\left(\boldsymbol{w}^{t,0}\right)+\varphi\left(\boldsymbol{w}^{t,0}\right)\right\|^2\right]\nonumber\\
    & \leq 2\left(\mathbb{E}^t\left[\left\|\varphi\left(\boldsymbol{w}^{t,i}\right)-\varphi\left(\boldsymbol{w}^{t,0}\right)\right\|^2\right]+\mathbb{E}^t\left[\left\|\varphi\left(\boldsymbol{w}^{t,0}\right)\right\|^2\right]\right) \\
    \tag{By Jensen's inequality}
\end{align}

Thus, \eqref{eq:overall_diff} becomes
\begin{align}
&\mathbb{E}^t\left[\mathcal{F}\left(\boldsymbol{\Theta}^{t+1,0},\boldsymbol{w}^{t+1,0}\right)\right]-\mathcal{F}\left(\boldsymbol{\Theta}^{t,0},\boldsymbol{w}^{t,0}\right)\nonumber\\
    & \leq \sum_{n=1}^N \eta_l^t \sum_{i=0}^{\tau -1} \left[\left\|\nabla_{\boldsymbol{\theta}_n^{t,0}} \mathcal{V}_n\left(\boldsymbol{\theta}_n^{t,0}\right)\right\|^2+\frac{1}{2}\mathbb{E}^t\left[\left\|\tilde{g}\left(\boldsymbol{\theta}_n^{t,i}\right)-\tilde{g}\left(\boldsymbol{\theta}_n^{t,0}\right)\right\|^2 \right]
    \right.\nonumber\\
    &\left.+\frac{1}{2}\mathbb{E}^t\left[\left\|\hat{g}\left(\boldsymbol{\theta}_n^{t+\frac{1}{2},i}\right)-\tilde{g}\left(\boldsymbol{\theta}_n^{t,0}\right)\right\|^2 \right]
    -2\mathbb{E}^t\left[\left\langle\nabla_{\boldsymbol{\theta}_n^{t,0}} \mathcal{V}_n\left(\boldsymbol{\theta}_n^{t,0}\right),  \tilde{g}\left(\boldsymbol{\theta}_n^{t,0}\right)\right\rangle\right]\right]\nonumber\\
    & +\frac{L_{\mathrm{o}}\tau}{2}\sum_{n=1}^N\left(\eta_l^t\right)^2\sum_{i=0}^{\tau-1}4\left(\mathbb{E}^t\left[\left\|\tilde{g}\left(\boldsymbol{\theta}_n^{t,i}\right)-\tilde{g}\left(\boldsymbol{\theta}_n^{t,0}\right)\right\|^2\right]+\mathbb{E}^t\left[\left\|\hat{g}\left(\boldsymbol{\theta}_n^{t+\frac{1}{2},i}\right)-\tilde{g}\left(\boldsymbol{\theta}_n^{t,0}\right)\right\|^2\right]+2\mathbb{E}^t\left[\left\|\tilde{g}\left(\boldsymbol{\theta}_n^{t,0}\right)\right\|^2\right]\right) \nonumber\\
     &+\eta_d^t\sum_{i=0}^{\tau -1}\left[\frac{1}{2} \left\|\nabla_{\boldsymbol{w}^{t,0}} \mathcal{U}\left(\boldsymbol{w}^{t,0}\right)\right\|^2+\frac{1}{2}\mathbb{E}^t\left[\left\|\varphi\left(\boldsymbol{w}^{t,i}\right)-\varphi\left(\boldsymbol{w}^{t,0}\right)\right\|^2\right]
    -\mathbb{E}^t\left[\left\langle\nabla_{\boldsymbol{w}^{t,0}} \mathcal{U}\left(\boldsymbol{w}^{t,0}\right),  \varphi\left(\boldsymbol{w}^{t,0}\right)\right\rangle\right]\right]\nonumber \\
    & +\frac{L_{\mathrm{o}}\tau}{2}\left(\eta_d^t\right)^2\sum_{i=0}^{\tau-1}2\left(\mathbb{E}^t\left[\left\|\varphi\left(\boldsymbol{w}^{t,i}\right)-\varphi\left(\boldsymbol{w}^{t,0}\right)\right\|^2\right]+\mathbb{E}^t\left[\left\|\varphi\left(\boldsymbol{w}^{t,0}\right)\right\|^2\right]\right)\\
     & \leq \sum_{n=1}^N \eta_l^t \sum_{i=0}^{\tau -1}\left[\left\|\nabla_{\boldsymbol{\theta}_n^{t,0}} \mathcal{V}_n\left(\boldsymbol{\theta}_n^{t,0}\right)\right\|^2+\frac{1}{2}\mathbb{E}^t\left[\left\|\tilde{g}\left(\boldsymbol{\theta}_n^{t,i}\right)-\tilde{g}\left(\boldsymbol{\theta}_n^{t,0}\right)\right\|^2 \right]
     +\frac{1}{2}\mathbb{E}^t\left[\left\|\hat{g}\left(\boldsymbol{\theta}_n^{t+\frac{1}{2},i}\right)-\tilde{g}\left(\boldsymbol{\theta}_n^{t,0}\right)\right\|^2 \right]
    -2\left\|\nabla_{\boldsymbol{\theta}_n^{t,0}} \mathcal{V}_n\left(\boldsymbol{\theta}_n^{t,0}\right)\right\|^2\right]\nonumber\\
    & +\frac{L_{\mathrm{o}}\tau}{2}\sum_{n=1}^N\left(\eta_l^t\right)^2\sum_{i=0}^{\tau-1}4\left(\mathbb{E}^t\left[\left\|\tilde{g}\left(\boldsymbol{\theta}_n^{t,i}\right)-\tilde{g}\left(\boldsymbol{\theta}_n^{t,0}\right)\right\|^2\right]+\mathbb{E}^t\left[\left\|\hat{g}\left(\boldsymbol{\theta}_n^{t+\frac{1}{2},i}\right)-\tilde{g}\left(\boldsymbol{\theta}_n^{t,0}\right)\right\|^2\right]+2\left\|\nabla_{\boldsymbol{\theta}_n^{t,0}} \mathcal{V}_n\left(\boldsymbol{\theta}_n^{t,0}\right)\right\|^2+2\sigma_{\mathrm{v}}^2\right) \nonumber\\
    \tag{By Lemmas \ref{lem:umbiased_grad} and \ref{lem:bounded_variance}, $\mathbb{E}^t\left[\left\Vert\mathbf{z}\right\Vert^2\right]\leq\left\Vert\mathbb{E}^t\left[\mathbf{z}\right]\right\Vert^2+\mathbb{E}^t\left[\left\Vert\mathbf{z}-\mathbb{E}^t\left[\mathbf{z}\right]\right\Vert^2\right]$ for any random variable $\mathbf{z}$}\\
    \tag{$\nabla_{\boldsymbol{\theta}_n^{t,0}} \mathcal{V}_n\left(\boldsymbol{\theta}_n^{t,0}\right)=\nabla_{\boldsymbol{\theta}_n^{t,0}} V_n^{\mathrm{loc}}\left(\boldsymbol{\theta}_n^{t,0}\right)$ at point $\boldsymbol{\theta}_n^{t,0}$}\\
     &+\eta_d^t\sum_{i=0}^{\tau -1}\left[\frac{1}{2}\left\|\nabla_{\boldsymbol{w}^{t,0}} \mathcal{U}\left(\boldsymbol{w}^{t,0}\right)\right\|^2+\frac{1}{2}\mathbb{E}^t\left[\left\|\varphi\left(\boldsymbol{w}^{t,i}\right)-\varphi\left(\boldsymbol{w}^{t,0}\right)\right\|^2\right]
    -\left\|\nabla_{\boldsymbol{w}^{t,0}} \mathcal{U}\left(\boldsymbol{w}^{t,0}\right)\right\|^2\right]\nonumber \\
    & +\frac{L_{\mathrm{o}}\tau}{2}\left(\eta_d^t\right)^2\sum_{i=0}^{\tau-1}2\left(\mathbb{E}^t\left[\left\|\varphi\left(\boldsymbol{w}^{t,i}\right)-\varphi\left(\boldsymbol{w}^{t,0}\right)\right\|^2\right]+\left\|\nabla_{\boldsymbol{w}^{t,0}} \mathcal{U}\left(\boldsymbol{w}^{t,0}\right)\right\|^2+\sigma_{\mathrm{u}}^2\right)\\
    & \leq -\frac{1}{2}\sum_{n=1}^N \eta_l^t \sum_{i=0}^{\tau -1} \left(2- 8L_{\mathrm{o}}\tau\eta_l^t\right)
   \left\|\nabla_{\boldsymbol{\theta}_n^{t,0}} \mathcal{V}_n\left(\boldsymbol{\theta}_n^{t,0}\right)\right\|^2 +4L_{\mathrm{o}}\tau\sum_{n=1}^N\left(\eta_l^t\right)^2\sum_{i=0}^{\tau-1}\sigma_{\mathrm{v}}^2
  \nonumber\\
    &  +\frac{1}{2}\sum_{n=1}^N \eta_l^t \sum_{i=0}^{\tau -1}\left(1+4L_{\mathrm{o}}\tau\eta_l^t\right)
    \left(\mathbb{E}^t\left[\left\|\tilde{g}\left(\boldsymbol{\theta}_n^{t,i}\right)-\tilde{g}\left(\boldsymbol{\theta}_n^{t,0}\right)\right\|^2 \right]+\mathbb{E}^t\left[\left\|\hat{g}\left(\boldsymbol{\theta}_n^{t,i}\right)-\tilde{g}\left(\boldsymbol{\theta}_n^{t,0}\right)\right\|^2 \right]\right) \nonumber\\
     &-\frac{1}{2} \eta_d^t\sum_{i=0}^{\tau -1}\left(1-2L_{\mathrm{o}}\tau\eta_d^t\right)
      \left\|\nabla_{\boldsymbol{w}^{t,0}} \mathcal{U}\left(\boldsymbol{w}^{t,0}\right)\right\|^2+\frac{1}{2}\eta_d^t\sum_{i=0}^{\tau -1}\left(1+2L_{\mathrm{o}}\tau\eta_d^t\right) \mathbb{E}^t\left[\left\|\varphi\left(\boldsymbol{w}^{t,i}\right)-\varphi\left(\boldsymbol{w}^{t,0}\right)\right\|^2\right]\nonumber \\
    & +L_{\mathrm{o}}\tau\left(\eta_d^t\right)^2\sum_{i=0}^{\tau-1}\sigma_{\mathrm{u}}^2
\end{align}

Applying Lemma \ref{lem:one-round-update}, we have
\begin{align}
    &\mathbb{E}^t\left[\mathcal{F}\left(\boldsymbol{\Theta}^{t+1,0},\boldsymbol{w}^{t+1,0}\right)\right]-\mathcal{F}\left(\boldsymbol{\Theta}^{t,0},\boldsymbol{w}^{t,0}\right)\nonumber\\
    & \leq -\frac{1}{2}\sum_{n=1}^N \eta_l^t \tau \left(2- 8L_{\mathrm{o}}\tau\eta_l^t\right)
    \left\|\nabla_{\boldsymbol{\theta}_n^{t,0}} \mathcal{V}_n\left(\boldsymbol{\theta}_n^{t,0}\right)\right\|^2+4L_{\mathrm{o}}\tau^2\sum_{n=1}^N\left(\eta_l^t\right)^2\sigma_{\mathrm{v}}^2 \nonumber\\
    &+4\sum_{n=1}^N \left(1+4L_{\mathrm{o}}\tau\eta_l^t\right)
    \tau^3\left(\eta_l^t\right)^3\left(3L_{\mathrm{v}}^2+L_{\mathrm{w}}^2\right)\left( \left\Vert \nabla_{\boldsymbol{\theta}_n^{t,0}}\mathcal{V}_n\left(\boldsymbol{\theta}_n^{t,0}\right)\right\Vert^2+\sigma_{\mathrm{v}}^2\right)\nonumber\\
     &-\frac{1}{2} \eta_d^t\tau\left(1-2L_{\mathrm{o}}\tau\eta_d^t\right) 
     \left\|\nabla_{\boldsymbol{w}^{t,0}} \mathcal{U}\left(\boldsymbol{w}^{t,0}\right)\right\|^2+L_{\mathrm{o}}\tau^2\left(\eta_d^t\right)^2\sigma_{\mathrm{u}}^2\nonumber\\
     &+4\left(1+2L_{\mathrm{o}}\tau\eta_d^t\right)\tau^3\left(\eta_d^t\right)^3L_{\mathrm{u}}^2\left( \left\Vert \nabla_{\boldsymbol{w}_n^{t,0}}\mathcal{U}\left(\boldsymbol{w}^{t,0}\right)\right\Vert^2+\sigma_{\mathrm{u}}^2\right)\nonumber\\
     & \leq -\frac{1}{2}\sum_{n=1}^N \eta_l^t \tau \left(2- 8L_{\mathrm{o}}\tau\eta_l^t-8 \left(1+4L_{\mathrm{o}}\tau\eta_l^t\right)
    \tau^2\left(\eta_l^t\right)^2\left(3L_{\mathrm{v}}^2+L_{\mathrm{w}}^2\right)\right)
     \left\|\nabla_{\boldsymbol{\theta}_n^{t,0}} \mathcal{V}_n\left(\boldsymbol{\theta}_n^{t,0},\boldsymbol{w}^{t,0}\right)\right\|^2\nonumber\\
    &+\left(4L_{\mathrm{o}}\tau^2\sum_{n=1}^N\left(\eta_l^t\right)^2+4\sum_{n=1}^N \left(1+4L_{\mathrm{o}}\tau\eta_l^t\right)
    \tau^3\left(\eta_l^t\right)^3\left(3L_{\mathrm{v}}^2+L_{\mathrm{w}}^2\right)\right)\sigma_{\mathrm{v}}^2 \nonumber\\
    &-\frac{1}{2} \eta_d^t\tau\left(1-2L_{\mathrm{o}}\tau\eta_d^t-8\left(1+2L_{\mathrm{o}}\tau\eta_d^t\right)\tau^2\left(\eta_d^t\right)^2L_{\mathrm{u}}^2\right)
      \left\|\nabla_{\boldsymbol{w}^{t,0}} \mathcal{U}\left(\boldsymbol{w}^{t,0}\right)\right\|^2\nonumber\\
     &+\left(L_{\mathrm{o}}\tau^2\left(\eta_d^t\right)^2+4\left(1+2L_{\mathrm{o}}\tau\eta_d^t\right)\tau^3\left(\eta_d^t\right)^3L_{\mathrm{u}}^2\right)\sigma_{\mathrm{u}}^2\\
      & \leq -\frac{1}{2}\sum_{n=1}^N \eta_l^t \tau \left(2- 1-\frac{1}{8}-\frac{1}{16}\right)
    \left\|\nabla_{\boldsymbol{\theta}_n^{t,0}} \mathcal{V}_n\left(\boldsymbol{\theta}_n^{t,0}\right)\right\|^2+\max\{L_{\mathrm{o}},\sqrt{3L_{\mathrm{v}}^2+L_{\mathrm{w}}^2}\}\tau^2\sum_{n=1}^N\left(\eta_l^t\right)^2\left(4+\frac{1}{2}+\frac{1}{4}\right)\sigma_{\mathrm{v}}^2
    \tag{Let $\eta_l^t\leq \frac{1}{8\tau \max\{L_{\mathrm{o}},\sqrt{3L_{\mathrm{v}}^2+L_{\mathrm{w}}^2}\}}$}\\
    &-\frac{1}{2} \eta_d^t\tau\left(1- \frac{1}{4}-\frac{1}{8}-\frac{1}{32}\right)
      \left\|\nabla_{\boldsymbol{w}^{t,0}} \mathcal{U}\left(\boldsymbol{w}^{t,0}\right)\right\|^2 +\max\{L_{\mathrm{o}},L_{\mathrm{u}}\}\tau^2\left(\eta_d^t\right)^2\left(1+\frac{1}{2}+\frac{1}{8}\right)\sigma_{\mathrm{u}}^2 \tag{Let $\eta_d^t\leq \frac{1}{8\max\{L_{\mathrm{o}},L_{\mathrm{u}}\}\tau }$}\\
     & \leq -\frac{1}{4}\sum_{n=1}^N \eta_l^t \tau 
   \left\|\nabla_{\boldsymbol{\theta}_n^{t,0}} \mathcal{V}_n\left(\boldsymbol{\theta}_n^{t,0}\right)\right\|^2+5\max\{L_{\mathrm{o}},\sqrt{3L_{\mathrm{v}}^2+L_{\mathrm{w}}^2}\}\tau^2\sum_{n=1}^N\left(\eta_l^t\right)^2\sigma_{\mathrm{v}}^2 \nonumber\\
    &-\frac{1}{4} \eta_d^t\tau
      \left\|\nabla_{\boldsymbol{w}^{t,0}} \mathcal{U}\left(\boldsymbol{w}^{t,0}\right)\right\|^2+5\max\{L_{\mathrm{o}},L_{\mathrm{u}}\}\tau^2\left(\eta_d^t\right)^2\sigma_{\mathrm{u}}^2
\end{align}
Rearranging the above gives
\begin{align}
    &\sum_{n=1}^N \eta_l^t \tau 
     \left\|\nabla_{\boldsymbol{\theta}_n^{t,0}} \mathcal{V}_n\left(\boldsymbol{\theta}_n^{t,0}\right)\right\|^2+\eta_d^t\tau
      \left\|\nabla_{\boldsymbol{w}^{t,0}} \mathcal{U}\left(\boldsymbol{w}^{t,0}\right)\right\|^2 \nonumber\\
     &\leq 4\left(\mathcal{F}\left(\boldsymbol{\Theta}^{t,0},\boldsymbol{w}^{t,0}\right)-\mathbb{E}^t\left[\mathcal{F}\left(\boldsymbol{\Theta}^{t+1,0},\boldsymbol{w}^{t+1,0}\right)\right]\right)
     + 20\max\{L_{\mathrm{o}},\sqrt{3L_{\mathrm{v}}^2+L_{\mathrm{w}}^2}\}\tau^2\sum_{n=1}^N\left(\eta_l^t\right)^2\sigma_{\mathrm{v}}^2
     +20\max\{L_{\mathrm{o}},L_{\mathrm{u}}\}\tau^2\left(\eta_d^t\right)^2\sigma_{\mathrm{u}}^2
\end{align}
We average over all training round $t=0,\cdots,T-1$ and take total expectation:
\begin{align}
    &\frac{1}{T}\sum_{t=0}^{T-1}\left(\sum_{n=1}^N\eta_l^t\tau\mathbb{E}\left[\left\|\nabla_{\boldsymbol{\theta}_n^{t,0}}\mathcal{V}_n\left(\boldsymbol{\theta}_n^{t,0}\right)\right\|^2\right]+\eta_d^t\tau\mathbb{E}\left[\left\|\nabla_{\boldsymbol{w}^{t,0}}\mathcal{U}\left(\boldsymbol{w}^{t,0}\right)\right\|^2\right]\right)\nonumber\\
    & \leq \frac{4\left(\mathcal{F}\left(\boldsymbol{\Theta}^{0,0},\boldsymbol{w}^{0,0}\right)-\mathbb{E}\left[\mathcal{F}\left(\boldsymbol{\Theta}^{T,0},\boldsymbol{w}^{T,0}\right)\right]\right)}{T}+ \frac{20\max\{L_{\mathrm{o}},\sqrt{3L_{\mathrm{v}}^2+L_{\mathrm{w}}^2},L_{\mathrm{u}}\}\tau^2\max\left\{\sigma_{\mathrm{v}}^2,\sigma_{\mathrm{u}}^2\right\}}{T}\sum_{t=0}^{T-1}\left(\sum_{n=1}^N\left(\eta_l^t\right)^2+\left(\eta_d^t\right)^2\right)
\end{align}
Let $\sigma:=\max\left\{\sigma_{\mathrm{v}},\sigma_{\mathrm{u}}\right\}$ and $L:=\max\{L_{\mathrm{o}},\sqrt{3L_{\mathrm{v}}^2+L_{\mathrm{w}}^2},L_{\mathrm{u}}\}$, thus,
\begin{align}
    &\frac{1}{T}\sum_{t=0}^{T-1}\left(\sum_{n=1}^N\eta_l^t\mathbb{E}\left[\left\|\nabla_{\boldsymbol{\theta}_n^{t,0}}\mathcal{V}_n\left(\boldsymbol{\theta}_n^{t,0}\right)\right\|^2\right]+\eta_d^t\mathbb{E}\left[\left\|\nabla_{\boldsymbol{w}^{t,0}}\mathcal{U}\left(\boldsymbol{w}^{t,0}\right)\right\|^2\right]\right)\nonumber\\
    & \leq \frac{4}{T \tau}\left(\mathcal{F}\left(\boldsymbol{\Theta}^{0,0},\boldsymbol{w}^{0,0}\right)-\mathbb{E}\left[\mathcal{F}\left(\boldsymbol{\Theta}^{\ast,0},\boldsymbol{w}^{\ast,0}\right)\right]\right)+ \frac{20L\tau\sigma^2}{T}\sum_{t=0}^{T-1}\left(\sum_{n=1}^N\left(\eta_l^t\right)^2+\left(\eta_d^t\right)^2\right)
\end{align}
Denote $S:=\sum_{t=0}^{T-1}\left(\sum_{n=1}^N\eta_l^t+\eta_d^t\right)$, the average gradient can be bounded as follows:
    \begin{align}
    &\frac{1}{S}\sum_{t=0}^{T-1}\left(\sum_{n=1}^N\eta_l^t\mathbb{E}\left[\left\|\nabla_{\boldsymbol{\theta}_n^{t,0}}\mathcal{V}_n\left(\boldsymbol{\theta}_n^{t,0}\right)\right\|^2\right]+\eta_d^t\mathbb{E}\left[\left\|\nabla_{\boldsymbol{w}^{t,0}}\mathcal{U}\left(\boldsymbol{w}^{t,0}\right)\right\|^2\right]\right)\nonumber\\
    & \leq \frac{4}{S \tau}\left(\mathcal{F}\left(\boldsymbol{\Theta}^{0,0},\boldsymbol{w}^{0,0}\right)-\mathcal{F}\left(\boldsymbol{\Theta}^{\ast},\boldsymbol{w}^{\ast}\right)\right)+ \frac{20L\tau\sigma^2}{S}\sum_{t=0}^{T-1}\left(\sum_{n=1}^N\left(\eta_l^t\right)^2+\left(\eta_d^t\right)^2\right)
\end{align}

If we let $\eta_l^t=\eta_d^t=\frac{1}{\sqrt{T\tau}}$, we have
    \begin{align}
    &\frac{1}{S}\sum_{t=0}^{T-1}\left(\sum_{n=1}^N\eta_l^t\mathbb{E}\left[\left\|\nabla_{\boldsymbol{\theta}_n^{t,0}}\mathcal{V}_n\left(\boldsymbol{\theta}_n^{t,0}\right)\right\|^2\right]+\eta_d^t\mathbb{E}\left[\left\|\nabla_{\boldsymbol{w}^{t,0}}\mathcal{U}\left(\boldsymbol{w}^{t,0}\right)\right\|^2\right]\right)\nonumber\\
    & \leq \frac{4}{T (N+1) \frac{1}{\sqrt{T\tau}}\tau}\left(\mathcal{F}\left(\boldsymbol{\Theta}^{0,0},\boldsymbol{w}^{0,0}\right)-\mathcal{F}\left(\boldsymbol{\Theta}^{\ast},\boldsymbol{w}^{\ast}\right)\right)+ \frac{20L\tau\sigma^2}{T(N+1)\frac{1}{\sqrt{T\tau}}}T (N+1) \left(\frac{1}{\sqrt{T\tau}}\right)^2\nonumber\\
    &= \frac{4}{(N+1)\sqrt{T\tau}}\left(\mathcal{F}\left(\boldsymbol{\Theta}^{0,0},\boldsymbol{w}^{0,0}\right)-\mathcal{F}\left(\boldsymbol{\Theta}^{\ast},\boldsymbol{w}^{\ast}\right)\right)+ \frac{20L\tau\sigma^2}{\sqrt{T\tau}} 
\end{align}
Then, we finish the proof.

\end{proof}

\vfill

\end{document}